\documentclass{article}

\usepackage{arxiv}

\usepackage[utf8]{inputenc} % allow utf-8 input
\usepackage[T1]{fontenc}    % use 8-bit T1 fonts
\usepackage{hyperref}       % hyperlinks
\usepackage{url}            % simple URL typesetting
\usepackage{booktabs}       % professional-quality tables
\usepackage{amsfonts}       % blackboard math symbols
\usepackage{nicefrac}       % compact symbols for 1/2, etc.
\usepackage{microtype}      % microtypography
\usepackage{lipsum}		% Can be removed after putting your text content
\usepackage{graphicx}
\usepackage{natbib}
\usepackage{doi}

\usepackage{xcolor}         % colors
\usepackage{caption}
\usepackage{subcaption}

\usepackage{mathrsfs}
\usepackage{amsmath}
\usepackage{amssymb}
\usepackage{mathtools}
\usepackage{amsthm}
\usepackage{dsfont}
\usepackage{enumitem}
\setlist{nosep}
\setlist[itemize]{leftmargin=20pt} 
\usepackage[capitalize,noabbrev]{cleveref}

\usepackage{wrapfig}
\usepackage[toc,page,header]{appendix}
\usepackage{minitoc}

% Algorithm

%%%%%%%%%%%%%%%%%%%%%%%%%%%%%%%%
% THEOREMS
%%%%%%%%%%%%%%%%%%%%%%%%%%%%%%%%
\theoremstyle{plain}
\newtheorem{theorem}{Theorem}[section]

\newtheorem{lemma}[theorem]{Lemma}
\newtheorem{corollary}[theorem]{Corollary}
\newtheorem{example}[theorem]{Example}
\theoremstyle{definition}
\newtheorem{definition}[theorem]{Definition}
\newtheorem{assumption}[theorem]{Assumption}
\theoremstyle{remark}
\newtheorem{remark}[theorem]{\textbf{Remark}}
\usepackage{xcolor}

\usepackage{multirow}
\DeclareMathOperator*{\E}{\mathbb{E}}

\title{Graph Classification via Reference Distribution Learning: Theory and Practice}

%\date{September 9, 1985}	% Here you can change the date presented in the paper title
%\date{} 					% Or removing it

\author{%
  Zixiao Wang \\
  School of Data Science\\
  The Chinese University of Hong Kong, Shenzhen\\
  \texttt{zixiaowang@link.cuhk.edu.cn} \\
  % examples of more authors
  \And
  Jicong Fan\thanks{Corresponding author}\\
  School of Data Science\\
  The Chinese University of Hong Kong, Shenzhen\\
  \texttt{fanjicong@cuhk.edu.cn} \\
  % \AND
  % Coauthor \\
  % Affiliation \\
  % Address \\
  % \texttt{email} \\
  % \And
  % Coauthor \\
  % Affiliation \\
  % Address \\
  % \texttt{email} \\
  % \And
  % Coauthor \\
  % Affiliation \\
  % Address \\
  % \texttt{email} \\
}

% Uncomment to remove the date
\date{}

% Uncomment to override  the `A preprint' in the header
%\renewcommand{\headeright}{Technical Report}
%\renewcommand{\undertitle}{Technical Report}

%%% Add PDF metadata to help others organize their library
%%% Once the PDF is generated, you can check the metadata with
%%% $ pdfinfo template.pdf
\hypersetup{
pdftitle={A template for the arxiv style},
pdfsubject={q-bio.NC, q-bio.QM},
pdfauthor={David S.~Hippocampus, Elias D.~Striatum},
pdfkeywords={First keyword, Second keyword, More},
}

\begin{document}
\maketitle

\doparttoc
\faketableofcontents

\begin{abstract}
    Graph classification is a challenging problem owing to the difficulty in quantifying the similarity between graphs or representing graphs as vectors, though there have been a few methods using graph kernels or graph neural networks (GNNs). Graph kernels often suffer from computational costs and manual feature engineering, while GNNs commonly utilize global pooling operations, risking the loss of structural or semantic information. This work introduces Graph Reference Distribution Learning (GRDL), an efficient and accurate graph classification method. GRDL treats each graph's latent node embeddings given by GNN layers as a discrete distribution, enabling direct classification without global pooling, based on maximum mean discrepancy to adaptively learned reference distributions. To fully understand this new model (the existing theories do not apply) and guide its configuration (e.g., network architecture, references' sizes, number, and regularization) for practical use, we derive generalization error bounds for GRDL and verify them numerically. More importantly, our theoretical and numerical results both show that GRDL has a stronger generalization ability than GNNs with global pooling operations. Experiments on moderate-scale and large-scale graph datasets show the superiority of GRDL over the state-of-the-art, emphasizing its remarkable efficiency, being at least 10 times faster than leading competitors in both training and inference stages.
\end{abstract}

% keywords can be removed
% \keywords{First keyword \and Second keyword \and More}

\section{Introduction}\label{sec:intro}
Graphs serve as versatile models across diverse domains, such as social networks \citep{wang2018social}, biological compounds \citep{jumper2021highly}, and the brain \citep{ktena2017distance}. There has been considerable interest in developing learning algorithms for graphs, such as graph kernels \citep{gartner2003graph,shervashidze2011weisfeiler,chen2022weisfeiler} and graph neural networks (GNNs) \citep{kipf2016gcn,defferrard2016convolutional,gilmer2017neural}. GNNs have emerged as powerful tools, showcasing state-of-the-art performance in various graph prediction tasks \citep{velivckovic2017graph,gilmer2017neural,hamilton2017inductive,xu2018gin, sun2019infograph, you2021graph, ying2021graphormer,liu2022graph,Chen22sat,xiao2022decoupled,sun2023lovsz}. Despite the evident success of GNNs in numerous graph-related applications, their potential remains underutilized, particularly in the domain of graph-level classification.

Current GNNs designed for graph classification commonly consist of two components: the embedding of node features through message passing \citep{gilmer2017neural} and subsequent aggregation by some permutation invariant global pooling (also called readout) operations \citep{xu2018gin}. The primary purpose of pooling is to transform a graph's node embeddings, a matrix, into a single vector. Empirically, pooling operations play a crucial role in classification \citep{ying2018diffpool}. However, these pooling operations tend to be naive, often employing methods such as simple summation or averaging. These functions collect only first-order statistics, leading to a loss of structural or semantic information. In addition to the conventional sum or average pooling, more sophisticated pooling operations have shown improvements in graph classification \citep{li2015gated,ying2018diffpool, lee2019self, lee2021learnable,buterez2022graph}, but they still carry the inherent risk of information loss.

% The necessity of global pooling is due to two facts: 1) the varying number of nodes in different graphs; 2) the permutation invariance of nodes' embeddings. They are also properties that must be explored and exploited in graph-level learning problems including graph classification \citep{gilmer2017neural,xu2018gin}, representation learning \citep{sun2019infograph,sun2023lovsz}, and generation \citep{you2018graphrnn}, etc. It is natural to ponder \emph{whether we can eliminate global pooling operations while still capturing the aforementioned two properties}. The answer is yes \citep{chen2021otgnn,vincent2022template}, at least graph kernels \citep{kriege2020survey} are capable to some extent, though they are often outperformed by GNNs and are not scalable to large datasets. 

Different from graph kernel methods and existing GNN methods, we propose a novel GNN method that classifies the nodes' embeddings themselves directly, thus avoiding the global pooling step. In our method, we treat the nodes' latent representations of each graph, learned by a neural network, as a discrete distribution and classify these distributions into $K$ different classes. The classification is conducted via measuring the similarity between the latent graph's distributions and $K$ discriminative reference discrete distributions. The reference distributions can be understood as nodes' embeddings of representative virtual graphs from $K$ different classes, and they are jointly learned with the parameters of the neural network in an end-to-end manner.
To evaluate our method, we analyze the generalization ability of our model both theoretically and empirically. Our contributions are two-fold.

\begin{itemize}
    \item We propose a novel graph classification method GRDL that is efficient and accurate. 
    \begin{itemize}
        \item GRDL does not require global pooling operation and hence effectively preserves the information of node embeddings. 
        \item Besides its high classification accuracy, GRDL is scalable to large graph datasets and is at least ten times faster than leading competitors in both training and inference stages.
    \end{itemize} 
    \item We provide theoretical guarantees, e.g. generalization error bounds, for GRDL. 
    \begin{itemize}
        \item  The result offers valuable insights into how the model performance scales with the properties of graphs, neural network structure, and reference distributions, guiding the model design.
        \item For instance, the generalization bounds reveal that the references' norms and numbers have tiny impacts on the generalization, which is also verified by the experiments.
        \item More importantly, we theoretically prove that GRDL has a stronger generalization ability than GNNs with global pooling operations.
    \end{itemize}

\end{itemize}
% Experiments on multiple benchmark datasets in comparison with state-of-the-art methods verify the effectiveness and efficiency of our method. 
The rest of this paper is organized as follows. We introduce our model in \cref{sec:model} and analyze the generalization ability in \cref{sec:theo}. Related works are discussed in \cref{sec:relate}. \cref{sec:exp} presents the numerical results on 11 benchmarks in comparison to 12 competitors.

\section{Proposed Approach}\label{sec:model}
% In this part, 
%we present a comprehensive overview of our model, breaking down the explanation into several steps for clarity. 
% We begin by proposing the general framework in \cref{sec:notation} and then detail the design of each component in \cref{sec:design}. Lastly, we present the training algorithm in \cref{sec:model_train}.

\begin{wrapfigure}{R}{0.6\textwidth}
\vspace{-40pt}
% \vskip 0.1in
\begin{center}
\centerline{\includegraphics[width=1\linewidth]{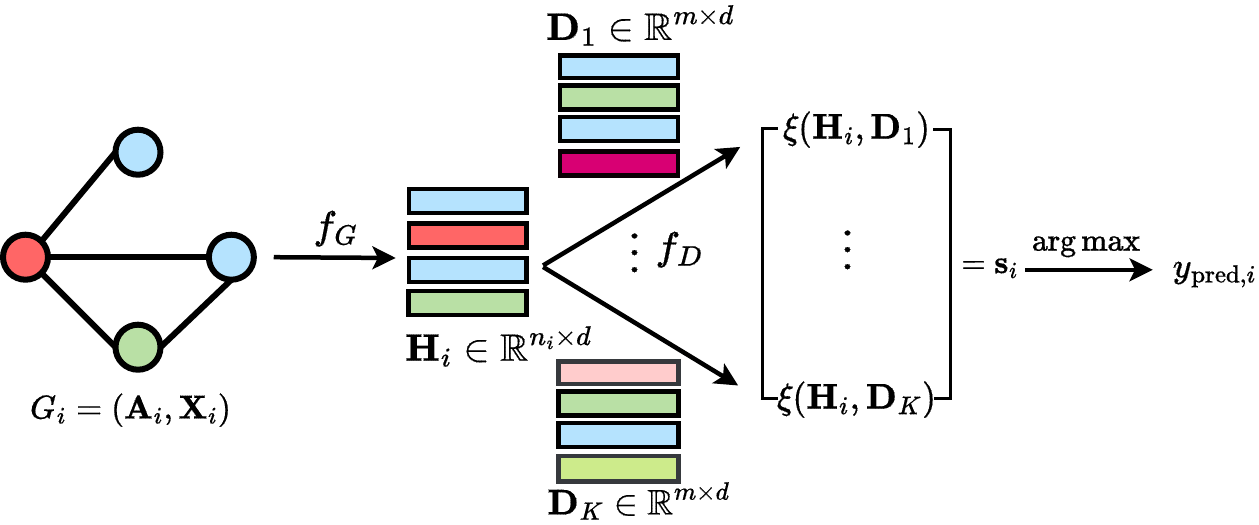}}
\end{center}
\vspace{-15pt}
\caption{The GRDL framework. Classification involves using a GNN $f_G$ to encode a graph's information into a node embedding distribution. The similarities between the node embeddings and $K$ reference distributions are calculated by the reference module $f_D$. The graph is assigned the label of the reference that exhibits the highest similarity.}
\label{fig:network}
% \vskip -0.1in
\end{wrapfigure}

\subsection{Model Framework}\label{sec:notation}
%\paragraph{Notation and Problem Formulation} 
%The major notations used in this paper are described in \cref{tab:notation}. 
Following convention, we denote a graph with index $i$ by $G_i = (V_i, E_i)$, where $V_i$ and $E_i$ are the vertex (node) set and edge set respectively. Given a graph dataset $\mathcal{G} = \{(G_1, y_1), (G_2, y_2), \dots, (G_N, y_N)\}$, where $y_i\in\{1,2,\ldots,K\}$ is the associated label of $G_i$ and $y_i=k$ means $G_i$ belongs to class $k$, the goal is to learn a classifier $f$ from $\mathcal{G}$ that generalizes well to unseen graphs. Since in many scenarios, each node of a graph has a feature vector $\mathbf{x}$ and the graph is often represented by an adjacency matrix $\mathbf{A}$, we also write $G_i = (\mathbf{A}_i, \mathbf{X}_i)$ for convenience, where $\mathbf{A}_i\in\mathbb{R}^{n_i\times n_i}$, $\mathbf{X}_i\in\mathbb{R}^{n_i\times d_0}$, $n_i=|V_i|$ is the number of nodes of graph $i$, and $d_0$ denotes the number of features. We may alternatively denote the graph dataset as $\mathcal{G} = \{((\mathbf{A}_1, \mathbf{X}_1), y_1), ((\mathbf{A}_2, \mathbf{X}_2), y_2), \dots, ((\mathbf{A}_N, \mathbf{X}_N), y_N)\}$. 

% \begin{table}[ht]
% \caption{Notations}
% \label{tab:notation}
% \vskip 0.1in
% \begin{center}
%     \begin{tabular}{c|c}
%     \toprule
%     symbol & meaning \\ 
%     \hline
%     $G_i$   & a graph with index $i$ \\ 
%     % \hline 
%     $[n]$  & $\{1,2, \dots, n\}$ \\ 
%     % \hline
%     $\mathbf{X}_i$ & a matrix with index $i$ \\ 
%     % \hline
%     $\circ$ & function composition \\ 
%     % \hline
%     $\nabla$ & gradient \\ 
%     % \hline
%     xxx & xxx \\ 
%     \bottomrule
%     \end{tabular}
% \end{center}
% \vskip -0.1in
% \end{table}
Our approach is illustrated in \cref{fig:network}.
For graph classification, we first use a GNN, denoted as $f_G$, to transform each graph to a node embedding matrix $\mathbf{H}_i\in\mathbb{R}^{n_i\times d}$ that encodes its properties, i.e.,
\begin{equation}\label{eq:H}
    \mathbf{H}_i = f_G(G_i) = f_G(\mathbf{A}_i,\mathbf{X}_i),
\end{equation}
where $f_G \in \mathcal{F}_G$ and $\mathcal{F}_G$ denotes a hypothesis space.
%\textbf{Reference Layer.} 
The remaining task is to classify $\mathbf{H}_i$ without global pooling. Direct classification of node embeddings is difficult due to two reasons:
\begin{enumerate}[label=(\roman*)]
    \item Different graphs have different numbers of nodes, i.e. in general, $n_i \neq n_j$ if $i\neq j$.
    \item The node embeddings of each graph are permutation invariant, namely, $\mathbf{PH}_i$ and $\mathbf{H}_i$ represent the same graph for any permutation matrix $\mathbf{P}$.
\end{enumerate}
However, the two properties are naturally satisfied if we treat the node embeddings of each graph as a discrete distribution. Specifically, each $\mathbf{H}_i$ is a discrete distribution and each row of $\mathbf{H}_i$ is an outcome of the distribution. There is no order between the outcomes in each distribution. Also, different distributions may have different numbers of outcomes. 
Before introducing our method in detail, we first give a toy example where the commonly used mean and max pooling fail. 
\begin{example}
Suppose two graphs $G_1$ and $G_2$ have self-looped adjacency matrices $\tilde{\mathbf{A}}_1=[1~1~1~0;1~1~0~1;1~0~1~1; 0~1~1~1]$ and $\tilde{\mathbf{A}}_2=[1~0~0~0;0~1~1~0;0~1~1~0; 0~0~0~1]$ respectively, and have one-dimensional node features $\mathbf{X}_1=[3~6~9~12]^\top$ and $\mathbf{X}_2=[6 ~6 ~9 ~9]^\top$ respectively. 
Let $\hat{\mathbf{A}}_i$ be the normalized adjacency matrices, i.e., $\hat{\mathbf{A}}_i=\text{diag}(\tilde{\mathbf{A}}_i\boldsymbol{1})^{-1/2}\tilde{\mathbf{A}}_i\text{diag}(\tilde{\mathbf{A}}_i\boldsymbol{1})^{-1/2}$. Performing the neighbor aggregation $\mathbf{H}_i=\hat{\mathbf{A}}_i\mathbf{X}_i$, $i=1,2$, we obtain $\mathbf{H}_1=[6~7~8~9]^\top$ and $\mathbf{H}_2=[6~7.5~7.5~9]^\top$. 
We see that $\mathrm{mean}(\mathbf{H}_1)=\mathrm{mean}(\mathbf{H}_2)=7.5$ and $\mathrm{max}(\mathbf{H}_1)=\mathrm{max}(\mathbf{H}_2)=9$. This means the simple mean and max pooling operation failed to distinguish the two graphs. In contrast, our method treats $\mathbf{H}_1$ and $\mathbf{H}_2$ as two different discrete distributions and hence is able to distinguish the two graphs. Note that incorporating a learnable parameter $\mathbf{W}$, i.e., $\mathbf{H}_i=\hat{\mathbf{A}}_i\mathbf{X}_i\mathbf{W}$, or performing multiple times of neighbor aggregation does not change the conclusion. 
\end{example}

We propose to classify the discrete distributions $\{\mathbf{H}_1,\mathbf{H}_2,\ldots,\mathbf{H}_N\}\triangleq \mathcal{H}$ by a reference layer $f_D$. The classification involves measuring the similarity between $\mathbf{H}_i$ and $K$ reference discrete distributions $\left\{\mathbf{D}_1, \mathbf{D}_2, \dots, \mathbf{D}_K\right\}\triangleq\mathcal{D}$ that are discriminative. Each $\mathbf{D}_k\in\mathbb{R}^{m_k\times d}$ can be understood as node embeddings of a virtual graph from the $k$-th class, $k\in[K]$. We make $m_1=\cdots=m_K=m$ for convenience. Letting $\xi$ be a similarity measure between two discrete distributions, then
\begin{equation}
    s_{ik}:=\xi(\mathbf{H}_i,\mathbf{D}_k),\quad i\in[N], ~k\in[K].
\end{equation}
This forms a matrix $\mathbf{S}=[\mathbf{s}_1,\mathbf{s}_2,\ldots,\mathbf{s}_N]^\top$ where
\begin{equation}\label{eq:s}
    \mathbf{s}_i=f_D(\mathbf{H}_i)=[s_{i1}, s_{i2}, \dots, s_{iK}]^\top \in\mathbb{R}^K,
\end{equation}
$f_D\in\mathcal{F}_D$ and $\mathcal{F}_D$ denotes a hypothesis space induced by the reference layer. References in $\mathcal{D}$ are parameters of the reference layer and are jointly learned with node-embedding network parameters in an end-to-end manner. 
% The hypothesis function space induced by the node embeddings layers and the reference layer is
% \begin{equation}
%     \mathcal{F} := \mathcal{F}_D \circ \mathcal{F}_G
% \end{equation}
Now combining \cref{eq:H} and \cref{eq:s}, we arrive at 
\begin{equation}\label{eq_sAX}
    \mathbf{s}_i=f_D(f_G(G_i)), \quad i\in[N].
\end{equation}
$f_D\circ f_G$ calculates $\mathbf{s}_i$, representing similarities between $G_i$ and all references. We get $G_i$'s label by 
\begin{equation}\label{eq:label}
    y_{\text{pred}, i} = \arg\max_{k}{s}_{ik}.
\end{equation}
To train the model, we first use the softmax function to convert $\mathbf{s}_i$ to a label vector $\hat{\mathbf{y}}_i=[\hat{y}_{i1},\ldots,\hat{y}_{iK}]^\top$, where
\begin{equation}
    \hat{y}_{ik} = \frac{\exp{(s_{ik})}}{\sum_{j=1}^K\exp{(s_{ij})}},\quad k\in[K].
\end{equation}
Using the cross-entropy loss, we minimize
\begin{equation}\label{eq:loss_ce}
    \mathcal{L}_{\text{CE}}= -\frac{1}{N}\sum_{i=1}^N \sum_{k=1}^K y_{ik}\log{\hat{y}_{ik}}.
\end{equation}
Intuitively, the reference distributions in $\mathcal{D}$ should be different from each other to ensure discriminativeness. Therefore, we also consider the following discrimination loss:
\begin{equation}\label{eq:loss_dis}
    \mathcal{L}_{\text{Dis}}=\sum_{k}\sum_{k'\neq k}\xi(\mathbf{D}_k,\mathbf{D}_{k'}).
\end{equation}
Then we solve the following problem:
\begin{equation}\label{eq:main}
    \min_{f_G\in\mathcal{F}_G,f_D\in\mathcal{F}_D} \mathcal{L}_{\text{CE}}+\lambda \mathcal{L}_{\text{Dis}},
\end{equation}
where $\lambda\geq 0$ is a hyperparameter. We call \eqref{eq:main} Graph Classification via Reference Distribution Learning (GRDL). Specific designs of $\mathcal{F}_G$ and $\mathcal{F}_D$ are detained in the next section.

\subsection{Design of $\mathcal{F}_G$ and $\mathcal{F}_D$}\label{sec:design}
 We get GRDL's network $\mathcal{F}$ by concatenating the node embedding module and the reference module:
\begin{equation} \label{def:F}
    \mathcal{F} := \mathcal{F}_D \circ \mathcal{F}_G.
\end{equation}

\textbf{Design of $\mathcal{F}_G$} We use an $L$-layer message passing network as our node embedding module $\mathcal{F}_G$:
\begin{equation}\label{def:message_net}
    \mathcal{F}_G := \mathcal{F}^L \circ \mathcal{F}^{L-1} \circ \cdots \circ \mathcal{F}^1.
\end{equation}
$\mathcal{F}^l$ is the $l$-th message passing layer (e.g. a GIN layer \citep{xu2018gin}) that updates the representation of a node by aggregating representations of its neighbors, meaning
\begin{equation}
a_v^{(l)} = \text{AGGREGATE}^{(l)}\left(\left\{h_u^{(l-1)}: u \in \mathcal{N}(v)\right\}\right),\quad h_v^{(l)} = \text{COMBINE}^{(l)}\left(h_v^{(l-1)}, a_v^{(l)}\right) \\
\end{equation}
where $h_v^{(l)}$ is the feature vector of node $v$ produced by the $l$-th layer $\mathcal{F}^l$. Different GNNs have different choices of $\text{COMBINE}^{(l)}(\cdot)$ and $\text{AGGREGATE}^{(l)}(\cdot)$. 

\textbf{Design of $\mathcal{F}_D$} Based on \eqref{eq:s}, the hypothesis space defined by the reference layer is 
\begin{equation}\label{def:Ft}
\mathcal{F}_D := \{\mathbf{H}_i \mapsto \mathbf{s}_i \in \mathbb{R}^{K}: s_{ik} = \xi(\mathbf{H}_i, \mathbf{D}_k), \mathbf{D}_k \in \mathbb{R}^{m\times d}\}.
\end{equation}

In our work, we choose $\xi(\cdot, \cdot)$ to be the negative squared Maximum Mean Discrepancy (MMD). Initially used for two-sample tests, MMD is now widely used to measure the dissimilarity between distributions \citep{gretton2012kernel}. For an embedding $\mathbf{H} \in \mathbb{R}^{n\times d}$ and a reference $\mathbf{D} \in \mathbb{R}^{m\times d}$,
\begin{align}\label{def:mmd1}
    \xi(\mathbf{H}, \mathbf{D}) =& -\mathrm{MMD}^2\left(\mathbf{H}, \mathbf{D}\right) \notag = -\bigg\|\frac{1}{n}\sum_{i=1}^n \phi(\mathbf{h}_i) - \frac{1}{m}\sum_{j=1}^m \phi(\mathbf{d}_j)\bigg\|^2_2 \notag\\
    =& \frac{2}{mn}\sum_{i=1}^n\sum_{j=1}^m\phi(\mathbf{h}_i)^\top\phi(\mathbf{d}_j) -\frac{1}{n^2}\sum_{i=1}^n\sum_{i'=1}^n\phi(\mathbf{h}_i)^\top\phi(\mathbf{h}_{i'}) - \frac{1}{m^2}\sum_{j=1}^m\sum_{j'=1}^m\phi(\mathbf{d}_j)^\top\phi(\mathbf{d}_{j'})
\end{align}
where $\phi$ is some feature map, $\mathbf{h}_i^\top$ is the $i$-th row of $\mathbf{H}$, and $\mathbf{d}_j^\top$ is the $j$-th row of $\mathbf{D}$. 
The MMD in \eqref{def:mmd1} is known as biased MMD \citep{gretton2012kernel} and its performance is almost the same as the unbiased one in our experiments. 
Therefore we only present \eqref{def:mmd1} here. 
Using kernel trick $k(\mathbf{x},\mathbf{x}')=\phi(\mathbf{x})^\top\phi(\mathbf{x}')$, we obtain from \eqref{def:mmd1} that
\begin{equation}\label{def:mmd2}
    \xi(\mathbf{H}, \mathbf{D})
    = \frac{2}{mn}\sum_{i=1}^n\sum_{j=1}^m k(\mathbf{h}_i, \mathbf{d}_{j}) -\frac{1}{n^2}\sum_{i=1}^n\sum_{i'=1}^n k(\mathbf{h}_i, \mathbf{h}_{i'})\notag - \frac{1}{m^2}\sum_{j=1}^m\sum_{j'=1}^m k(\mathbf{d}_j, \mathbf{d}_{j'}).
\end{equation}
In this work, we employ the Gaussian kernel, i.e.,
\begin{equation}\label{def:kernel}
     k(\mathbf{x}, \mathbf{x}') = \exp{\left(-\theta \|\mathbf{x}-\mathbf{x}'\|_2^2\right)}
\end{equation}
where $\theta>0$ is a hyperparameter. 
%Through Taylor expansion, it becomes evident that 
The Gaussian kernel defines an infinite-order polynomial feature map $\phi$, covering all orders of statistics of the input variable. Consequently, MMD with the Gaussian kernel characterizes the difference between two distributions across all moments. Actually, we found that, in GRDL, the Gaussian kernel often outperformed other kernels such as the polynomial kernel.

Several other statistical distances are available for measuring the difference between distributions, including Wasserstein distance and Sinkhorn divergence \citep{peyre2020computational}. However, their computational complexity is prohibitively high, making the model impractical for large-scale graph datasets. We also find, through experiments, that in our method, the classification performance of MMD is better than that of Wasserstein distance and Sinkhorn divergence as shown later in \cref{tab:pred}. These explain why we prefer MMD.

\subsection{Algorithm Implementation}\label{sec:model_train}
The $\theta$ in the Gaussian kernel \eqref{def:kernel} plays a crucial role in determining the statistical efficiency of MMD. Optimally setting of $\theta$ remains an open problem and many heuristics are available \citep{gretton2012optimal}. To simplify the process, we make $\theta$ learnable in our GRDL and rewrite $\xi$ as $\xi_{\theta}$. Our empirical results in \cref{app:ablation} show that GRDL with learnable $\theta$ performs better. For convenience, we denote all the parameters of $f_G$ as $\mathbf{w}$ and let $f_{\mathbf{w},\mathcal{D},\theta}=f_{D}\circ f_G$. Then we rewrite problem \eqref{eq:main} as
\begin{equation}\label{eq:main2}
    \min_{\mathbf{w}, \mathcal{D}, \theta} -\frac{1}{N}\sum_{i=1}^N \sum_{k=1}^K y_{ik}\log{\frac{\exp{\left(f_{\mathbf{w},\mathcal{D},\theta}(G_i)_k\right)}}{\sum_{j=1}^K\exp{\left(f_{\mathbf{w},\mathcal{D},\theta}(G_i)_j\right)}}} + \lambda\sum_{k'\neq k}\xi_{\theta}(\mathbf{D}_k,\mathbf{D}_{k'}).
\end{equation}
% where $\xi$ is determined by $\theta$, $f_D$ is determined by $\mathbf{w}$, and $f_D$ is jointly determined by $\mathcal{D}$ and $\theta$. 
The (mini-batch) training of GRDL model is detailed in \cref{alg:training} (see Appendix \ref{app_algorithm}).

\section{Theoretical Analysis}\label{sec:theo}
In this section, we provide theoretical guarantees for GRDL, due to the following motivations:
\begin{itemize}
    \item As the proposed approach is novel, it is necessary to understand it thoroughly using theoretical analysis, e.g., understand the influences of data and model properties on the classification.
    \item It is also necessary to provide guidance for the model design to guarantee high accuracy in inference stages. 
    %there is a lack of theoretical analysis on how to configure the model to guarantee the generalization ability on unseen graphs. This section fills this gap by deriving a generalization bound for our proposed model. 
\end{itemize}

%We start with preliminary knowledge in \cref{sec:theo_setup}, present main results in \cref{sec:theo_result}, and conclude with a discussion on the implications of the bound in \cref{sec:theo_discussion}.

\subsection{Preliminaries}\label{sec:theo_setup}
\textbf{Matrix constructions}~~We construct big matrices $\mathbf{X}$, $\mathbf{A}$ and $\mathbf{D}$, where $\mathbf{X} = \left[\mathbf{X}_1^\top, \mathbf{X}_2^\top, \dots, \mathbf{X}_N^\top\right]^\top \in \mathbb{R}^{(\sum_{i}n_i)\times d}$; $\mathbf{A} = \text{diag}(\mathbf{A}_1, \mathbf{A}_2, \dots, \mathbf{A}_N) \in \mathbb{R}^{(\sum_{i}n_i)\times (\sum_{i}n_i)}$ is a block diagonal matrix, $\mathbf{D} = \left[\mathbf{D}_1^\top, \mathbf{D}_2^\top, \dots, \mathbf{D}_K^\top\right]^\top \in \mathbb{R}^{Km\times d}$.
The adjacency matrix with self-connectivity is $\Tilde{\mathbf{A}} = \mathbf{A} + \mathbf{I}$. The huge constructed graph is denoted by $\mathbf{G} = (\Tilde{\mathbf{A}}, \mathbf{X})$. This construction allows us to treat all graphs in dataset $\mathcal{G}$ as a whole and it is crucial for our derivation.

\textbf{Neural network}~~Previously, for a deterministic network $f \in \mathcal{F}$, its output after feeding forward a single graph is $f(G_i)$. However, we mainly deal with the huge constructed graph $\mathbf{G}$ in this section, and notation will be overloaded to $f(\mathbf{G}) = \mathbf{S} \in \mathbb{R}^{N\times K}$, a matrix whose $i$-th row is $f(G_i)^\top$. 

We instantiate the message passing network as Graph Isomorphism Network (GIN) \citep{xu2018gin}. We choose to focus on GIN for two reasons. Firstly, the analysis on GIN is currently limited, most of the current bounds for GNNs don't apply for GIN \citep{garg2020generalization,liao2020pac,pmlr-v202-tang23f}. The other reason is that GIN is used as the message-passing network in our numerical experiments. Notably, our proof can be easily adapted to other message-passing GNNs (e.g. GCN \citep{kipf2016gcn}). GIN updates node representations as
\begin{equation}
    h_v^{(l)} = \text{MLP}^{(l)}\bigg((1+\varepsilon^{(l)}) h_v^{(l-1)} + \sum_{u \in \mathcal{N}(v)} h_u^{(l-1)}\bigg)
\end{equation}
where $h_v^{(l)}$ denotes the node features generated by $l$-th GIN message passing layer. Let $\varepsilon^{(l)} = 0$ for all layers and suppose all MLPs have $r$ layers, the node updates can be written in matrix form as
\begin{equation}\label{eq:message_pass}
    \mathbf{H}^{(l)} =  
    \sigma \left( \cdots \sigma\left(
    \left(
    \Tilde{\mathbf{A}}\mathbf{H}^{(l-1)} 
    \right)\mathbf{W}_1^{(l)}
    \right) \cdots \mathbf{W}_{r-1}^{(l)}
    \right) \mathbf{W}_r^{(l)}
\end{equation}
where $\mathbf{W}_i^{(l)} \in \mathbb{R}^{d_{i-1}^{(l)} \times d_{i}^{(l)}}$ is the weight matrix, and $\mathbf{H}^{(l)}$ is the matrix of node features with $\mathbf{H}^{(0)} = \mathbf{X}$. $\sigma(\cdot)$ is the non-linear activation function. Let $\mathcal{F}^l$ be the function space induced by the $l$-th message passing layer, meaning
\begin{equation}\label{def:mlp}
\begin{split}
    \mathcal{F}^l = \{(\Tilde{\mathbf{A}}, \mathbf{H}^{(l-1)}) \mapsto \mathbf{H}^{(l)}: \mathbf{W}_i^{(l)} \in \mathcal{B}_i^{(l)},i\in[r]\}
\end{split}
\end{equation}
where $\mathcal{B}_i^{(l)}$ is some constraint set on the weight matrix $\mathbf{W}_i^{(l)}$ and $\mathbf{H}^{(l)}$ is given by \eqref{eq:message_pass}. The $L$-layer GIN function space $\mathcal{F}_G$ is the composition of $\mathcal{F}^l$ for $l \in [L]$, i.e.,
\begin{equation}
\begin{split}
    \mathcal{F}_G 
    &= \mathcal{F}^L \circ \mathcal{F}^{L-1} \circ \cdots \circ \mathcal{F}^1 = \{\mathbf{G} \mapsto f^L(\cdots f^1(\mathbf{G})): f^i \in \mathcal{F}^i, \forall i \in [L] \}.
\end{split}
\end{equation}
% Instantiating the similarity measure $\xi$ in \cref{def:Ft} as negative squared MMD, 
Letting ${s}_{ik} = -\mathrm{MMD}^2(\mathbf{H}^{(L)}_i,\mathbf{D}_k)$, the reference layer defines the following function space
\begin{equation}
    \mathcal{F}_D = \{\mathbf{H}^{(L)} \mapsto \mathbf{S} \in \mathbb{R}^{N \times K}: \mathbf{D}_k \in \mathbb{R}^{m\times d},k\in[K]\}.
\end{equation}
Our proposed network (GRDL) is essentially $\mathcal{F} := \mathcal{F}_D \circ \mathcal{F}_G$.

\textbf{Loss Function}~~Instead of the cross entropy loss \eqref{eq:loss_ce}, we consider a general loss function $l_{\gamma}(\cdot, \cdot)$ satisfying $0 \leq l_\gamma \leq \gamma$ to quantify the model performance. Importantly, this loss function is not restricted to the training loss because our generalization bound is optimization-independent. For instance, the loss function can be the ramp loss that is commonly used for classification tasks \citep{bartlett2017spectrally, mohri2018foundations}. Given a neural network $f \in \mathcal{F}$, we want to upper bound the model population risk of graphs and labels from an unknown distribution $\mathcal{X} \times \mathcal{Y}$
\begin{equation}
    L_\gamma(f) := \E_{(G,y) \sim \mathcal{X}\times \mathcal{Y}} \left[l_\gamma(f(G), y))\right].
\end{equation}
Given the observed graph dataset $\mathcal{G}$ sampled from $\mathcal{X} \times \mathcal{Y}$, the empirical risk is
\begin{equation}
    \hat{L}_\gamma(f) := \frac{1}{N}\sum_{i=1}^Nl_\gamma(f(G_i), y_i),
\end{equation}
of which \eqref{eq:loss_ce} is just a special case. \cref{app:proof_setup} provides more details about the setup and our idea.

\subsection{Main Results}\label{sec:theo_result}
For convenience, similar to \citep{bartlett2017spectrally, Ju2023pac}, we make the following assumptions.
\begin{assumption}\label{assumption}
The following conditions hold for $\mathcal{F}_\gamma := \{(G, y) \mapsto l_{\gamma}(f(G), y): f \in \mathcal{F}\}$:
    \begin{enumerate}[label=(\roman*)]
        \item The activation function $\sigma(\cdot)$ is 1-Lipschitz (e.g. Sigmoid, ReLU).
        \item The weight matrices satisfy $\mathbf{W}_i^{(l)} \in \mathcal{B}_i^{(l)} := \{\mathbf{W}_i^{(l)}: \|\mathbf{W}_i^{(l)}\|_{\sigma} \leq \kappa_i^{(l)}, \|\mathbf{W}_i^{(l)}\|_{2,1} \leq  b_i^{(l)}\}$.
        \item The constructed reference matrix satisfy $\|\mathbf{D}\|_2 \leq b_D$.
        \item The Gaussian kernel parameter $\theta$ is fixed.
        \item The loss function $l_{\gamma}(\cdot, y): \mathbb{R}^K \to \mathbb{R}$ is $\mu$-Lipschitz w.r.t $\|\cdot\|_2$ and $0 \leq l_{\gamma} \leq \gamma$.
    \end{enumerate}
\end{assumption}
\begin{theorem}[Generalization bound of GRDL]\label{thm:generalization}
 Let $n=\min_i n_i$, $c = \|\Tilde{\mathbf{A}}\|_\sigma$, and $\bar{d}=\max_{i,l}d_i^{(l)}$. Denote $R_G: =c^{2L}\|\mathbf{X}\|_2^2\ln(2\bar{d}^2) \big(\prod_{l=1}^L(\prod_{i=1}^r \kappa^{(l)}_i)^2\big) \big(\sum_{l=1}^L\sum_{i=1}^r\big(\frac{b^{(l)}_i}{\kappa^{(l)}_i}\big)^{2/3}\big)^3$. For graphs $\mathcal{G}=\left\{(G_i, y_i)\right\}_{i=1}^N$ drawn i.i.d from any probability distribution over $\mathcal{X} \times \{1, \dots, K\}$ and references $\left\{\mathbf{D}_k\right\}_{k=1}^K, \mathbf{D}_k \in \mathbb{R}^{m\times d}$, with probability at least $1-\delta$, every loss function $l_\gamma$ and network $f\in\mathcal{F}$ under Assumption \ref{assumption} satisfy
\begin{equation*}
    L_\gamma(f) \leq \hat{L}_\gamma(f) + 3\gamma\sqrt{\tfrac{\ln{(2/\delta)}}{2N}} + \tfrac{8\gamma + 24\sqrt{v_1+v_2}\ln{N} + 24\gamma\sqrt{Nv_2\ln{v_3}}}{N} 
\end{equation*}
where $v_1 = \frac{64\theta K R_G\mu^2}{n}$, $v_2 = Km\bar{d}$, and $v_3 = \frac{24\sqrt{\theta N}b_D\mu}{\sqrt{m}}$.
% \begin{small}
% \begin{equation*}
%     v_1 = \frac{64\theta K R_G\mu^2}{n}, v_2 = Km\bar{d}, v_3 = \frac{24\sqrt{\theta N}b_D\mu}{\sqrt{m}}, R_G = c^{2L}\|\mathbf{X}\|_2^2\ln(2\bar{d}^2) \left(\prod_{l=1}^L\left(\prod_{i=1}^r \kappa^{(l)}_i\right)^2\right) \left(\sum_{l=1}^L\sum_{i=1}^r\left(\frac{b^{(l)}_i}{\kappa^{(l)}_i}\right)^{2/3}\right)^3.
% \end{equation*}
% \end{small}
\end{theorem}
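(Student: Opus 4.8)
The plan is to bound the population risk via a uniform convergence argument over the class $\mathcal{F}_\gamma$, which by a standard symmetrization/McDiarmid argument reduces to controlling the empirical Rademacher complexity $\hat{\mathfrak{R}}_{\mathcal{G}}(\mathcal{F}_\gamma)$. The term $3\gamma\sqrt{\ln(2/\delta)/(2N)}$ in the bound and the leading $8\gamma/N$ strongly suggest the authors route this through a Dudley-type entropy integral (chaining) rather than a direct Rademacher bound, since the $\ln N$ and $\sqrt{N}$ factors in the third term are characteristic of the covering-number route used by \citet{bartlett2017spectrally}. So the first step is: invoke the generalization-from-covering-number lemma (population risk $\le$ empirical risk $+$ Dudley integral of $\sqrt{\ln \mathcal{N}}/\sqrt N$ $+$ concentration term), reducing everything to an upper bound on the covering number $\mathcal{N}(\mathcal{F}_\gamma, \epsilon, \|\cdot\|)$ in the appropriate metric on the data $\mathbf{G}$.

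Next I would decompose the covering number along the composition $\mathcal{F}_\gamma = \ell_\gamma \circ \mathcal{F}_D \circ \mathcal{F}_G$. By Assumption \ref{assumption}(v) the loss is $\mu$-Lipschitz, so it only rescales the covering radius. For $\mathcal{F}_D$: I would show that $\mathbf{H} \mapsto \mathrm{MMD}^2(\mathbf{H},\mathbf{D}_k)$ is Lipschitz in both $\mathbf{H}$ (via $\|\mathbf{X}\|$-type bounds and the Gaussian kernel's smoothness — note $\theta$ appears linearly in $v_1$ and under a square root in $v_3$, consistent with the Gaussian kernel's Lipschitz constant scaling like $\sqrt\theta$) and in $\mathbf{D}$; covering the reference parameters $\mathbf{D}\in\mathbb{R}^{Km\times d}$ with $\|\mathbf{D}\|_2\le b_D$ gives the $v_2 = Km\bar d$ "parameter count" and the $b_D$ inside $v_3$. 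For $\mathcal{F}_G$: this is the GIN-specific heart of the argument — I would peel off the $L$ message-passing layers one at a time, using $\|\tilde{\mathbf{A}}\mathbf{H}^{(l-1)}\|\le c\|\mathbf{H}^{(l-1)}\|$ for the graph operator, $1$-Lipschitz activations, and the spectral/($2,1$)-norm covering bound for each weight matrix (the $\|\mathbf{W}\|_\sigma\le\kappa$, $\|\mathbf{W}\|_{2,1}\le b$ structure is exactly Bartlett–Mansour–Telgarsky), composed across all $Lr$ matrices. This is what produces $R_G$: the $c^{2L}$ from the graph operator over $L$ layers, the $\|\mathbf{X}\|_2^2$ from the input, the product $\prod_l\prod_i \kappa_i^{(l)2}$ of spectral norms, and the $(\sum (b/\kappa)^{2/3})^3$ term from summing per-layer perturbations à la the spectral-norm bound, with $\ln(2\bar d^2)$ from the log-covering of matrix balls.

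Then I would combine: the total log-covering number splits roughly as (contribution from $\mathcal{F}_G$, scaling like $\theta K R_G \mu^2 / (n \epsilon^2)$ — hence $v_1$, with the $1/n$ coming from the normalization $\tfrac1n\sum\phi(\mathbf h_i)$ in MMD so that small graphs hurt) plus (contribution from $\mathcal{F}_D$, scaling like $Km\bar d \log(\sqrt{\theta N}\,b_D\mu/\sqrt m \cdot 1/\epsilon)$ — hence $v_2\ln v_3$). Feeding these into the Dudley integral $\int \sqrt{\ln\mathcal N}\,d\epsilon$ and optimizing the truncation point yields the two complexity terms $24\sqrt{v_1+v_2}\ln N$ and $24\gamma\sqrt{Nv_2\ln v_3}$; the $\ln N$ and the explicit constants $8,24$ come from the standard bookkeeping in the chaining lemma. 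The main obstacle I anticipate is the MMD/reference layer: unlike a linear readout, $\xi_\theta$ is a nonlinear functional of an empirical distribution, so one must carefully establish its joint Lipschitz continuity with respect to a matrix norm on $\mathbf{H}^{(L)}$ (and correctly track how the $\tfrac1n$, $\tfrac1m$ averaging and the Gaussian-kernel parameter $\theta$ enter the constant) in order to compose its covering number with the GIN layers — the existing spectral-norm GNN bounds stop at a global pooling / linear layer and give no guidance here. A secondary technical point is keeping the block-diagonal construction $\mathbf{G}=(\tilde{\mathbf A},\mathbf X)$ consistent so that per-graph quantities like $n=\min_i n_i$ emerge correctly from the "one huge graph" analysis.
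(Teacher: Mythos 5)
Your proposal follows essentially the same route as the paper: a Rademacher/Dudley chaining reduction to covering numbers, a Bartlett--Foster--Telgarsky-style layer-peeling bound for the GIN module yielding $R_G/\epsilon^2$, a $4\sqrt{\theta}\,(n^{-1/2}\|\mathbf{H}-\mathbf{H}'\|_2+m^{-1/2}\|\mathbf{D}-\mathbf{D}'\|_2)$ Lipschitz bound for the squared-MMD reference layer, and a volumetric $(3b_D/\epsilon)^{Kmd}$ cover of the references, combined and fed into the entropy integral with truncation $\alpha=1/\sqrt{N}$. You correctly identify the one genuinely new ingredient (the joint Lipschitz continuity of the MMD layer and the provenance of the $\theta$, $1/n$, $1/m$ factors), so the plan matches the paper's proof in all essentials.
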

% Our derivation in \cref{proof:gin_cover} can be easily adapted to other GNNs such as GCN \citep{kipf2016gcn} and MPGNNs \citep{liao2020pac} with minor modifications. 
The bound shows how the properties of the neural network, graphs, reference distributions, etc, influence the gap between training error and testing error. A detailed discussion will be presented in Section \ref{sec:theo_discussion}.
Some interesting corollaries of \cref{thm:generalization}, e.g., misclassification rate bound, can be found in \cref{app:corollary}.
Besides small generalization error $L_{\gamma}(f)-\hat{L}_\gamma(f)$, a good model should have small empirical risk $\hat{L}_\gamma(f)$. The empirical risk $\hat{L}_\gamma(f)$ is typically a surrogate loss of misclassification rate of training data and a lower misclassification rate implies a smaller $\hat{L}_\gamma(f)$. We now provide a guarantee for the correct classification of training data, namely small $\hat{L}_\gamma(f)$.

Notably, the node embeddings $\mathbf{H}_i$ from the $k$-th class as well as the reference distributions $\mathbf{D}_k$ are essentially some \emph{finite samples from an underlying continuous distribution} $\mathbb{P}_k$. 
% We have underlying continuous distributions $\mathbb{P}_1, \mathbb{P}_2, \dots, \mathbb{P}_K$. 
One potential risk is that, although the continuous distributions $\mathbb{P}_1, \mathbb{P}_2, \dots, \mathbb{P}_K$ are distinct, we can only observe their finite samples and may fail to distinguish them from each other with MMD. Specifically, suppose a node embedding $\mathbf{H}_i$ is from the $k$-th class, although $0 = \mathrm{MMD}(\mathbb{P}_k, \mathbb{P}_k) < \mathrm{MMD}(\mathbb{P}_k, \mathbb{P}_j)$ for any $j \neq k$, it is likely that $\mathrm{MMD}(\mathbf{H}_i, \mathbf{D}_k) > \mathrm{MMD}(\mathbf{H}_i, \mathbf{D}_j)$ for some $j\neq k$. The following theorem provides the correctness guarantee for the training dataset $\mathcal{G}$:
\begin{theorem}\label{thm:correct_training}
All graphs in the training set $\mathcal{G}$ are classified correctly with probability at least $1 - \delta$ if 
\begin{equation*}
    \min_{i\neq j} \mathrm{MMD}(\mathbb{P}_i, \mathbb{P}_j) > \left(\tfrac{1}{\sqrt{m}} + \tfrac{1}{\sqrt{n}}\right)\left(4 + 4\sqrt{\log\tfrac{2N}{\delta}}\right).
\end{equation*}
\end{theorem}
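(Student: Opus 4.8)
The plan is to remove the randomness in two stages: first reduce the event ``every training graph is classified correctly'' to a deterministic margin condition that compares the population discrepancies $\mathrm{MMD}(\mathbb{P}_i,\mathbb{P}_j)$ against finite-sample fluctuations, and then control those fluctuations with a concentration inequality plus a union bound.

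\textbf{Step 1: reduction via the triangle inequality.} Since $s_{ik}=-\mathrm{MMD}^2(\mathbf{H}_i,\mathbf{D}_k)$ and the predicted label of $G_i$ is $\arg\max_k s_{ik}$, a graph $G_i$ with true label $y_i=k$ is classified correctly exactly when $\mathrm{MMD}(\mathbf{H}_i,\mathbf{D}_k)<\mathrm{MMD}(\mathbf{H}_i,\mathbf{D}_j)$ for all $j\ne k$. I would use that the Gaussian kernel \eqref{def:kernel} is characteristic, so $\mathrm{MMD}$ satisfies the triangle inequality on empirical and population measures alike. Upper-bounding the correct-class term and lower-bounding each competing term,
\begin{gather*}
  \mathrm{MMD}(\mathbf{H}_i,\mathbf{D}_k)\le \mathrm{MMD}(\mathbf{H}_i,\mathbb{P}_k)+\mathrm{MMD}(\mathbb{P}_k,\mathbf{D}_k),\\
  \mathrm{MMD}(\mathbf{H}_i,\mathbf{D}_j)\ge \mathrm{MMD}(\mathbb{P}_k,\mathbb{P}_j)-\mathrm{MMD}(\mathbb{P}_k,\mathbf{H}_i)-\mathrm{MMD}(\mathbf{D}_j,\mathbb{P}_j),
\end{gather*}
so correct classification of $G_i$ is implied by $2\,\mathrm{MMD}(\mathbf{H}_i,\mathbb{P}_k)+\mathrm{MMD}(\mathbf{D}_k,\mathbb{P}_k)+\mathrm{MMD}(\mathbf{D}_j,\mathbb{P}_j)<\mathrm{MMD}(\mathbb{P}_k,\mathbb{P}_j)$ for every $j\ne k$. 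It therefore suffices to make all the sampling errors $\mathrm{MMD}(\mathbf{H}_i,\mathbb{P}_{y_i})$ and $\mathrm{MMD}(\mathbf{D}_k,\mathbb{P}_k)$ small simultaneously.

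\textbf{Step 2: concentration of the empirical MMD.} Per the modelling setup recalled just before the theorem, I treat the rows of $\mathbf{H}_i$ as $n_i\ (\ge n)$ i.i.d.\ draws from $\mathbb{P}_{y_i}$ and the rows of $\mathbf{D}_k$ as $m$ i.i.d.\ draws from $\mathbb{P}_k$. For $t$ i.i.d.\ samples $z_1,\dots,z_t\sim P$ with empirical measure $\hat{P}_t$, I would follow the standard argument \citep{gretton2012kernel}: writing $\mathrm{MMD}(\hat{P}_t,P)=\|\tfrac1t\sum_s\phi(z_s)-\E_{z}\phi(z)\|_2$ with $\phi$ as in \eqref{def:mmd1}, Jensen gives $\E\,\mathrm{MMD}(\hat{P}_t,P)\le(\E[k(z,z)]/t)^{1/2}\le 1/\sqrt{t}$ because $k(z,z)=1$ for the Gaussian kernel, and the map $(z_1,\dots,z_t)\mapsto\mathrm{MMD}(\hat{P}_t,P)$ has bounded differences $2/t$ (since $\|\phi(z)\|_2=1$), so McDiarmid's inequality gives $\mathrm{MMD}(\hat{P}_t,P)\le \tfrac{1}{\sqrt{t}}\bigl(1+\sqrt{2\ln(1/\delta')}\bigr)$ with probability at least $1-\delta'$. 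Taking $\delta'=\delta/(2N)$ bounds $\mathrm{MMD}(\mathbf{H}_i,\mathbb{P}_{y_i})$ by $\epsilon_n:=\tfrac{1}{\sqrt{n}}(1+\sqrt{2\ln(2N/\delta)})$ and $\mathrm{MMD}(\mathbf{D}_k,\mathbb{P}_k)$ by $\epsilon_m:=\tfrac{1}{\sqrt{m}}(1+\sqrt{2\ln(2N/\delta)})$, each on an event of probability at least $1-\delta/(2N)$.

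\textbf{Step 3: union bound, and the main obstacle.} There are $N$ graph events and $K\le N$ reference events, at most $2N$ in total, so all the bounds of Step 2 hold simultaneously with probability at least $1-\delta$. On that event the left side of the Step-1 condition is at most $2\epsilon_n+2\epsilon_m=2\bigl(1+\sqrt{2\ln(2N/\delta)}\bigr)\bigl(\tfrac{1}{\sqrt{n}}+\tfrac{1}{\sqrt{m}}\bigr)$, and since $2\sqrt{2}<4$ this is at most $\bigl(\tfrac{1}{\sqrt{m}}+\tfrac{1}{\sqrt{n}}\bigr)\bigl(4+4\sqrt{\log(2N/\delta)}\bigr)$; hence the theorem's hypothesis forces the Step-1 condition for every $i$ and every $j\ne y_i$, and all training graphs are classified correctly. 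I expect the only genuine computation to be Step 2 — obtaining the clean $1/\sqrt{t}$ expectation term (which leans on $k(z,z)\equiv 1$), fixing the McDiarmid constant, and checking that $2(1+\sqrt{2\ln(\cdot)})\le 4+4\sqrt{\log(\cdot)}$ so the stated constants go through; the conceptual core, the double triangle-inequality reduction of Step 1, is short. The caveat worth stating at the outset is that the result is implicitly conditional on the i.i.d.\ sampling model for the rows of $\mathbf{H}_i$ and $\mathbf{D}_k$: the correlations that message passing induces among node embeddings within a graph are outside the scope of this argument.
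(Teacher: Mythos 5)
Your proposal is correct and does deliver the stated constants: on your good event the accumulated sampling error is $2\bigl(\tfrac{1}{\sqrt{n}}+\tfrac{1}{\sqrt{m}}\bigr)\bigl(1+\sqrt{2\ln(2N/\delta)}\bigr)$, and since $2\le 4$ and $2\sqrt{2}\le 4$ this is dominated by the theorem's threshold. The route differs from the paper's in its bookkeeping rather than its substance. The paper proves a per-graph lemma (its Lemma~\ref{lemma:correct_class}) by invoking the \emph{two-sample} deviation bound of \citet{gretton2012kernel}, $|\mathrm{MMD}(\mathbb{P},\mathbb{Q})-\mathrm{MMD}(P,Q)|\le(\tfrac{1}{\sqrt{m}}+\tfrac{1}{\sqrt{n}})(2+\sqrt{2\log(2/\delta)})$, applied to the pairs $(\mathbf{H}_i,\mathbf{D}_k)$ and $(\mathbf{H}_i,\mathbf{D}_{k'})$, and then union-bounds over the $N$ graphs; you instead prove the \emph{one-sample} concentration $\mathrm{MMD}(\hat P_t,P)\le t^{-1/2}(1+\sqrt{2\ln(1/\delta')})$ via Jensen plus McDiarmid (which is exactly how Gretton's Theorem~7 is itself obtained) and reassemble the comparison with the triangle inequality in the RKHS. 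The probabilistic engine is therefore identical; what your version buys is a cleaner treatment of the ``for all $j\neq k$'' quantifier: by controlling each of the $N$ empirical embeddings and $K$ references once, every competing class is handled simultaneously, whereas the paper's lemma fixes $k'=\arg\min_{j\neq k}\mathrm{MMD}(\mathbf{H}_i,\mathbf{D}_j)$ --- a data-dependent index --- and then applies a concentration bound to that pair as if it were deterministic, a small rigor gap your decomposition avoids. Two cosmetic points: the triangle inequality for MMD holds for any kernel (it is the norm of a difference of mean embeddings, hence a pseudometric), so characteristicness is not needed for Step~1; and your union-bound count of ``at most $2N$ events'' silently assumes $K\le N$, which is harmless but worth stating. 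Your closing caveat about the i.i.d.\ model for the rows of $\mathbf{H}_i$ applies equally to the paper's own argument.
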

\cref{thm:correct_training} implies that a larger reference distribution size $m$ benefits the classification accuracy of training data, resulting in a lower $\hat{L}_{\gamma}(f)$. Moreover, a larger $\min_{i\neq j} \mathrm{MMD}(\mathbb{P}_i, \mathbb{P}_j)$ also makes correct classification easier according to the theorem, justifying our usage of discriminative loss \eqref{eq:loss_dis}.

\subsection{Bound Discussion and Numerical Verification}\label{sec:theo_discussion}
Let $\bar{\kappa}=\max_{i,l}\kappa_i^{(l)}$, $\bar{b}=\max_{i,l}\frac{b_i^{(l)}}{\kappa_i^{(l)}}$ and suppose $\delta$ is large enough, we simplify
\cref{thm:generalization} as
% \begin{equation}\label{eq:thm_simplified}
% \begin{aligned}
% &L_\gamma(f)
% \leq \hat{L}_\gamma(f) + \tilde{\mathcal{O}}\left(\frac{\sqrt{v_1} + \gamma\sqrt{Nv_2}}{N}\right) \\
% \leq &\hat{L}_\gamma(f) + \tilde{\mathcal{O}}\left(\frac{\mu \bar{b}\|\mathbf{X}\|_2c^L(Lr)^{\frac{3}{2}}\bar{\kappa}^{Lr}\sqrt{\theta K/n}}{N}+\gamma\sqrt{\frac{Kmd}{N}}\right) 
% \end{aligned}
% \end{equation}
% \begin{equation*}
%     L_\gamma(f)
%     \leq \hat{L}_\gamma(f) + \tilde{\mathcal{O}}\left(\frac{\sqrt{v_1} + \gamma\sqrt{Nv_2}}{N}\right)
%     \leq \hat{L}_\gamma(f) + \tilde{\mathcal{O}}\left(\frac{\mu \bar{b}\|\mathbf{X}\|_2c^L(Lr)^{\frac{3}{2}}\bar{\kappa}^{Lr}\sqrt{\theta K/n}}{N}+\gamma\sqrt{\frac{Kmd}{N}}\right) 
% \end{equation*}
\begin{equation*}
    L_\gamma(f)
    \leq \hat{L}_\gamma(f) + \tilde{\mathcal{O}}\big(\tfrac{\sqrt{v_1} + \gamma\sqrt{Nv_2}}{N}\big)
    \leq \hat{L}_\gamma(f) + \tilde{\mathcal{O}}\big(\tfrac{\mu \bar{b}\|\mathbf{X}\|_2c^L(Lr)^{\frac{3}{2}}\bar{\kappa}^{Lr}\sqrt{\theta K/n}}{N}+\gamma\sqrt{\tfrac{Kmd}{N}}\big) 
\end{equation*}
% We observe some interesting implications.

\textbf{I. Dependence on graph property}~~One distinctive feature of our bound is its dependence on the spectral norm of graphs' adjacency matrix. The large adjacency matrix $\Tilde{\mathbf{A}}$ is a block-diagonal matrix, so its spectral norm $c = \|\Tilde{\mathbf{A}}\|_\sigma = \max_{i\in[N]} \|\Tilde{\mathbf{A}}_i\|_\sigma$. By \cref{lemma:gin_cover}, incorporating $c^L$ is sufficient for any $L$-step GIN message passing. This result aligns with \citet{Ju2023pac}, who achieved this conclusion via PAC-Bayesian analysis. Our derivation, based on the Rademacher complexity, provides an alternative perspective supporting this result. Notably, \citet{liao2020pac} and \citet{garg2020generalization} proposed bounds scaling with graphs' maximum node degree, which is larger than the spectral norm of the graphs' adjacency matrix (\cref{lemma:spectral_deg}). Consequently, our bound is tighter.

\textbf{II. Use moderate-size message passing GIN}~~
The bound scales with the size of the message passing GIN, following $\tilde{O}(c^L(Lr)^{\frac{3}{2}}\bar{\kappa}^{Lr})$. Empirical observations reveal $\bar{\kappa} > 1$, and we prove that $c > 1$ (refer to \cref{lemma:spectral_g_1}). Therefore, when the message-passing GNN has sufficient expressive power (resulting in a small $\hat{L}_\gamma(f)$), a network with a smaller $L$ and $r$ may guarantee a tighter bound on the population risk compared to a larger one. Therefore, a promising strategy is to use a moderate-size message passing GNN. This is empirically supported by Figure~\ref{fig:loss} of \cref{exp:generalization}.

\textbf{III. Use moderate-size references}~~
The bound scales with the size of reference distributions $m$ as $\tilde{O}(\sqrt{m})$. When $m$ is smaller, the bound tends to be tighter. However, if $m$ is too small, the model's expressive capacity is limited, potentially resulting in a large empirical risk $\hat{L}_\gamma(f)$, and consequently, a large population risk. Therefore, using moderate-size references is a promising choice, as supported by our empirical validation results in \cref{app:hyper} (see Figure \ref{fig:vary_m}).

\textbf{IV. Regularization on references norm barely helps}~~Regularizing the norm of references $\|\mathbf{D}\|_2$, i.e., reducing $b_D$, might be considered to enhance the model's generalization. However, it is important to note that $b_D$ only influences the term $v_3$ (in logarithm) in \cref{thm:generalization} and has a tiny influence on the overall bound. Conversely, such regularization constrains the model's expressive capacity, potentially leading to a large $\hat{L}_\gamma(f)$ and increasing the population risk. This observation is empirically supported by experiments in \cref{exp:generalization} (see Table \ref{tab:reg_norm}).

\textbf{V. GRDL has a tighter bound than GIN with global pooling}~~
In Appendix \ref{app:comparison_gin}, we provide the generalization error bound, i.e., \cref{thm:gin_generalization}, for GIN with global pooling and compare it with \cref{thm:generalization}. The result shows that our GRDL has a stronger generalization ability than GIN, which is further supported by the numerical results in Table \ref{tab:grdl_gin}.
% \begin{remark}
%     It is worth noting that our bound in Theorem \ref{thm:generalization} cannot be directly compared with the bounds in previous work such as \citep{Ju2023pac} because they are for different models (usually having different training errors). However, the results in Appendix \ref{app:comparison_gin} show that our GRDL has stronger generalization ability than GIN.
% \end{remark}
% \textbf{Conditions guaranteeing small $\hat{L}_{\gamma}(f)$}~~Please refer to Appendix \ref{app:correctness} (see \cref{lemma:correct_class} and \cref{corollary:correct_training}).
\begin{remark}
    Currently, we use $K$ reference distributions for classification (one for each class). One natural approach to enhancing the model's expressive power is increasing the number of references for each class. However, counterintuitively, our empirical observations, supported by Theorem \ref{thm:multi_generalization}, suggest that having only one reference per class is optimal. We discuss this further in \cref{app:multi-reference}.
\end{remark}

\section{Related Work}\label{sec:relate}
Various sophisticated pooling operations have been designed to preserve the structural information of graphs. DIFFPOOL, designed by \citet{ying2018diffpool}, learns a differentiable soft cluster assignment for nodes and maps nodes to a set of clusters to output a coarsened graph. Another method by \citet{lee2019self} utilizes a self-attention mechanism to distinguish nodes for retention or removal, and both node features and graph topology are considered with the self-attention mechanism.

A recent research direction focuses on preserving structural information by leveraging the optimal transport (OT) \citep{peyre2020computational}. OT-GNN, proposed by \cite{chen2021otgnn}, embeds a graph to a vector by computing Wasserstein distances between node embeddings and some ``learned point clouds". TFGW, introduced by \cite{vincent2022template}, embeds a graph to a vector of Fused Gromov-Wasserstein (FGW) distance \citep{vayer2018fused} to a set of ``template graphs". OT distances have also been combined with dictionary learning to learn graph vector embedding in an unsupervised way (GDL) \citep{liu2022robust, vincent2021online, zeng2023generative}.

Similar to the ``learned point clouds" in OT-GNN, ``template graphs" in TFGW, and dictionaries in GDL, our GRDL preserves information in node embeddings using reference distributions. To the best of the authors' knowledge, we are the first to model a graph's node embeddings as a discrete distribution and propose to classify it directly without aggregating it into a vector, marking our novel contribution. Additionally, our work stands out as the first to analyze the generalization bounds for this type of model, adding a theoretically grounded dimension to the research. By the way, our method is much more efficient than OT-GNN and TFGW. Please see Figure \ref{fig:train_time} and Table \ref{tab:complexity}.

\begin{table*}[t]
\centering
\caption{Classification accuracy (\%). Bold text indicates the top 3 mean accuracy.}
\label{tab:pred}
%\vskip 0.1in
\resizebox{\textwidth}{!}{
\begin{sc}
\begin{tabular}{lcccccccccc}
\toprule
    \multirow{2}{3em}{\textbf{Method}} & \multicolumn{8}{c}{\textbf{Dataset}} & \multirow{2}{3em}{\textbf{Average}} \\
\cmidrule{2-9}
                & MUTAG                 & PROTEINS              & NCI1                  & IMDB-B                & IMDB-M                 & PTC-MR           & BZR                       & COLLAB                & \\
\midrule \midrule
    PATCHY-SAN  & \textbf{92.6$\pm$4.2} & 75.1$\pm$3.3          & 76.9$\pm$2.3          & 62.9$\pm$3.9          & 45.9$\pm$2.5          & 60.0$\pm$4.8          & 85.6$\pm$3.7          & 73.1$\pm$2.7          & 71.5\\
    GIN         & 89.4$\pm$5.6          & 76.2$\pm$2.8          & 82.2$\pm$0.8          & 64.3$\pm$3.1          & 50.9$\pm$1.7          & 64.6$\pm$7.0          & 82.6$\pm$3.5          & 79.3$\pm$1.7          & 73.6\\
    DropGIN     & 90.4$\pm$7.0          & 76.9$\pm$4.3          & 81.9$\pm$2.5          & 66.3$\pm$4.5          & 51.6$\pm$3.2          & 66.3$\pm$8.6          & 77.8$\pm$2.6          & 80.1$\pm$2.8          & 73.9\\
    DIFFPOOL    & 89.4$\pm$4.6          & 76.2$\pm$1.4          & 80.9$\pm$0.7          & 61.1$\pm$3.0          & 45.8$\pm$1.4          & 60.0$\pm$5.2          & 79.8$\pm$3.6          & 80.8$\pm$1.6 & 71.8\\
    SEP         & 89.4$\pm$6.1  & 76.4$\pm$0.4  & 78.4$\pm$0.6   & \textbf{74.1$\pm$0.6}  & 51.5$\pm$0.7  & 68.5$\pm$5.2  & 86.9$\pm$0.8  & \textbf{81.3$\pm$0.2} & 75.8\\
    GMT      & 89.9$\pm$4.2  & 75.1$\pm$0.6  & 79.9$\pm$0.4   & \textbf{73.5$\pm$0.8}  & 50.7$\pm$0.8  & \textbf{70.2}$\pm$6.2  & 85.6$\pm$0.8  & 80.7$\pm$0.5 & 75.7\\
    MinCutPool  & 90.6$\pm$4.6  & 74.7$\pm$0.5  & 74.3$\pm$0.9   & 72.7$\pm$0.8  & 51.0$\pm$0.7  & 68.3$\pm$4.4  & 87.2$\pm$1.0  & \textbf{80.9$\pm$0.3} & 75.0\\
    ASAP        & 87.4$\pm$5.7  & 73.9$\pm$0.6  & 71.5$\pm$0.4   & 72.8$\pm$0.5  & 50.8$\pm$0.8  & 64.6$\pm$6.8  & 85.3$\pm$1.3  & 78.6$\pm$0.5 & 73.1\\
    WitTopoPool & 89.4$\pm$5.4  & 80.0$\pm$3.2  & 79.9$\pm$1.3   & 72.6$\pm$1.8  & \textbf{52.9$\pm$0.8}  & 64.6$\pm$6.8  & 87.8$\pm$2.4  & 80.1$\pm$1.6 & 75.9\\
    % MPNN-VN  & \textbf{92.1$\pm$5.2}  & 78.3$\pm$1.0  & 80.9$\pm$0.8   & 72.4$\pm$1.2  & 50.9$\pm$1.9  & \textbf{71.4}$\pm$5.2  & 90.2$\pm$1.1  & 80.1$\pm$0.8 & \textbf{77.0}\\
\midrule
    OT-GNN      & 91.6$\pm$4.6          & 76.6$\pm$4.0          & \textbf{82.9$\pm$2.1} & 67.5$\pm$3.5          & 52.1$\pm$3.0          & 68.0$\pm$7.5          & 85.9$\pm$3.3          & 80.7$\pm$2.9 & 75.7\\ 
    WEGL        & 91.0$\pm$3.4          & 73.7$\pm$1.9          & 75.5$\pm$1.4          & 66.4$\pm$2.1          & 50.3$\pm$1.0          & 66.2$\pm$6.9          & 84.4$\pm$4.6          & 79.6$\pm$0.5          & 73.4\\
\midrule
    FGW - ADJ   & 82.6$\pm$7.2          & 72.4$\pm$4.7          & 74.4$\pm$2.1          & 70.8$\pm$3.6          & 48.9$\pm$3.9          & 55.3$\pm$8.0          & 86.9$\pm$1.0          & 80.6$\pm$1.5          & 71.5\\
    FGW - SP    & 84.4$\pm$7.3          & 74.3$\pm$3.3          & 72.8$\pm$1.5          & 65.0$\pm$4.7          & 47.8$\pm$3.8          & 55.5$\pm$7.0          & 86.9$\pm$1.0          & 77.8$\pm$2.4          & 70.6\\
    WL          & 87.4$\pm$5.4          & 74.4$\pm$2.6          & \textbf{85.6$\pm$1.2} & 67.5$\pm$4.0          & 48.4$\pm$4.2          & 56.0$\pm$3.9          & 81.3$\pm$0.6          & 78.5$\pm$1.7          & 72.4\\
    WWL         & 86.3$\pm$7.9          & 73.1$\pm$1.4          & \textbf{85.7$\pm$0.8} & 71.6$\pm$3.8          & 52.6$\pm$3.0               & 52.6$\pm$6.8          & 87.6$\pm$0.6          & \textbf{81.4$\pm$2.1} & 73.9\\
\midrule
    SAT         & \textbf{92.6$\pm$4.3} & 77.7$\pm$3.2          & 82.5$\pm$0.8          & 70.0$\pm$1.3          & 47.3$\pm$3.2          & 68.3$\pm$4.9          & \textbf{91.7$\pm$2.1} & 80.6$\pm$0.6          & 76.1\\
    Graphormer  & 89.6$\pm$6.2          & 76.3$\pm$2.7          & 78.6$\pm$2.1          & 70.3$\pm$0.9          & 48.9$\pm$2.0          & \textbf{71.4$\pm$5.2} & 85.3$\pm$2.3          & 80.3$\pm$1.3          & 75.1\\
\midrule \midrule
    GRDL        & \textbf{92.1$\pm$5.9} & \textbf{82.6$\pm$1.2} & 80.4$\pm$0.8          & \textbf{74.8$\pm$2.0} & \textbf{52.9$\pm$1.8} & 68.3$\pm$5.4 & \textbf{92.0$\pm$1.1} & 79.8$\pm$0.9          & \textbf{77.9}\\
    GRDL-W      & 90.8$\pm$4.6          & \textbf{82.1$\pm$0.9} & 80.9$\pm$0.8          & 72.2$\pm$3.1 & \textbf{53.1$\pm$0.9} & 68.5$\pm$3.2 & 90.6$\pm$1.5          & 80.4$\pm$1.1          & \textbf{77.3}\\
    GRDL-S      & 90.6$\pm$5.7          & \textbf{81.1$\pm$1.4} & 81.2$\pm$1.5          & 72.4$\pm$3.3 & 52.5$\pm$1.1          & 64.2$\pm$3.2          & \textbf{91.6$\pm$1.3} & 78.6$\pm$1.3          & 76.5\\
\bottomrule
\end{tabular}
\end{sc}}
\vskip -0.1in
\end{table*}

    % SEP (2022)    & 89.4$\pm$6.1  & 76.4$\pm$0.4  & 78.4$\pm$0.6   & 74.1$\pm$0.6  & 51.5$\pm$0.7  & 68.5$\pm$5.2  & 86.9$\pm$0.8  & \textbf{81.3}$\pm$0.2 & 75.8\\
    % GMT (2022)    & 89.9$\pm$4.2  & 75.1$\pm$0.6  & 79.9$\pm$0.4   & 73.5$\pm$0.8  & 50.7$\pm$0.8  & \textbf{70.2}$\pm$6.2  & 85.6$\pm$0.8  & 80.7$\pm$0.5 & 75.7\\
    % MinCutPool (2020)  & 90.6$\pm$4.6  & 74.7$\pm$0.5  & 74.3$\pm$0.9   & 72.7$\pm$0.8  & 51.0$\pm$0.7  & 68.3$\pm$4.4  & 87.2$\pm$1.0  & 80.9$\pm$0.3 & 75.0\\
    % ASAP (2020)  & 87.4$\pm$5.7  & 73.9$\pm$0.6  & 71.5$\pm$0.4   & 72.8$\pm$0.5  & 50.8$\pm$0.8  & 64.6$\pm$6.8  & 85.3$\pm$1.3  & 78.6$\pm$0.5 & 73.1\\
    % WitTopoPool (2023) & 89.4$\pm$5.4  & 80.0$\pm$3.2  & 79.9$\pm$1.3   & 72.6$\pm$1.8  & 52.9$\pm$0.8  & 64.6$\pm$6.8  & 87.8$\pm$2.4  & 80.1$\pm$1.6 & 75.9\\

% \bibliographystyle{unsrtnat}
% \bibliography{references} 

\section{Numerical Experiments}\label{sec:exp}
% This section aims to show the effectiveness and efficiency of our GRDL. 
% First, we test our method on real-world datasets to establish its competitive performance in \cref{exp:benchmark}. Second, we compare the time cost of our model with other competitive models in \cref{exp:cost}. Third, we conduct ablation studies to show the necessity of discrimination loss and learnable Gaussian kernel parameter in \cref{exp:ablation}. Finally, we visualize the embedding to interpret the learned references in \cref{exp:vis}.

\subsection{Graph Classification Benchmark}\label{exp:benchmark}
\paragraph{Datasets} We leverage eight popular graph classification benchmarks \citep{Morris2020tudataset}, comprising five bioinformatics datasets (MUTAG, PROTEINS, NCI1, PTC-MR, BZR) and three social network datasets (IMDB-B, IMDB-M, COLLAB). We also use three large-scale imbalanced datasets (PC-3, MCF-7, and ogbg-molhiv \citep{hu2020ogb}). 
%Notably, our model relies on node features for classification. However, social network nodes lack any inherent features. Therefore, for social network data, we encode node features as one-hot encodings of their degrees. 
A summary of data statistics is in \cref{tab:dataset}.

\begin{wraptable}{R}{0.55\textwidth}
\vspace{-5pt}
\caption{AUC-ROC scores of large imbalanced data classification. Bold text indicates the best.}
\label{tab:auc}
\centering
\resizebox{\linewidth}{!}{
\begin{sc}
\begin{tabular}{lccc}
\toprule
    \multirow{2}{3em}{\textbf{Method}} & \multicolumn{3}{c}{\textbf{Dataset}} \\
\cmidrule{2-4}
               & PC-3                  & MCF-7                 & ogbg-molhiv\\
\midrule 
    GIN        & 84.6$\pm$1.4          & 80.6$\pm$1.5           & 77.8$\pm$1.3\\
    DIFFPOOL   & 83.2$\pm$1.9          & 77.2$\pm$1.3           & 73.7$\pm$1.8\\
    PATCHY-SAN & 80.7$\pm$2.1          & 78.9$\pm$3.1           & 70.2$\pm$2.1\\
    GRDL       & \textbf{85.1$\pm$1.6} & \textbf{81.4$\pm$1.3}  & \textbf{79.8$\pm$1.0}\\
\bottomrule
\end{tabular}
\end{sc}
}
\vspace{-5pt}
\end{wraptable}
\textbf{Baselines}~~Our approach is benchmarked against four groups of state-of-the-art baselines: 1) GNN models with global or sophisticated pooling operations, including PATCHY-SAN \citep{niepert2016patchysan}, DIFFPOOL \citep{ying2018diffpool}, GIN \citep{xu2018gin}, DropGIN \citep{papp2021dropgin}, SEP \citep{wu2022sep}, GMT \citep{baek2021gmt}, MinCutPool \citep{bianchi2020mincutpool}, ASAP \citep{ranjan2020asap}, and Wit-TopoPool \citep{chen2023wittopo}; 2) Optimal transport based models such as WEGL \citep{kolouri2020wegl} and OT-GNN \citep{chen2021otgnn}; 3) Kernel-based approaches including FGW \citep{titouan2019fgw} operating on adjacency (ADJ) and shortest path (SP) matrices, the WL subtree kernel \citep{shervashidze2011weisfeiler}, and the Wasserstein WL kernel \citep{togninalli2019wassersteinWL}; 4) Graph transformers including Graphormer \citep{ying2021graphormer} and SAT\citep{Chen22sat}. We also show the results of two variations of 
our GRDL: GRDL using Sinkhorn divergence (GRDL-S) and GRDL using Wasserstein distance (GRDL-W).  For large imbalanced datasets, we only benchmark our GRDL against PATCHY-SAN, GIN, and DIFFPOOL because other methods are too costly. Details about the initialization and hyper-parameters setting can be found in \cref{app:hyper}.

\textbf{Experiment Settings} Due to the page limitation, please refer to Appendix \ref{app_exp_set}.

\cref{tab:pred} shows the classification results. The AUC-ROC scores of experiments results on the three large imbalanced datasets are reported in \cref{tab:auc}. Our method has top 3 classification performance over baselines in almost all datasets. Our GRDL, GRDL-W and GRDL-S have close performance. However, as shown later in \cref{fig:train_time}, our original GRDL has significantly lower time costs and thus is preferable for practical use. Graph transformers also have competitive performance, but they have significantly larger amount parameters and much higher time costs that our model, as shown in \cref{exp:transformer} \cref{tab:transformer_num_param}.

\begin{figure}[t]
\vskip 0.1in
\begin{center}
\centerline{\includegraphics[width=0.8\columnwidth]{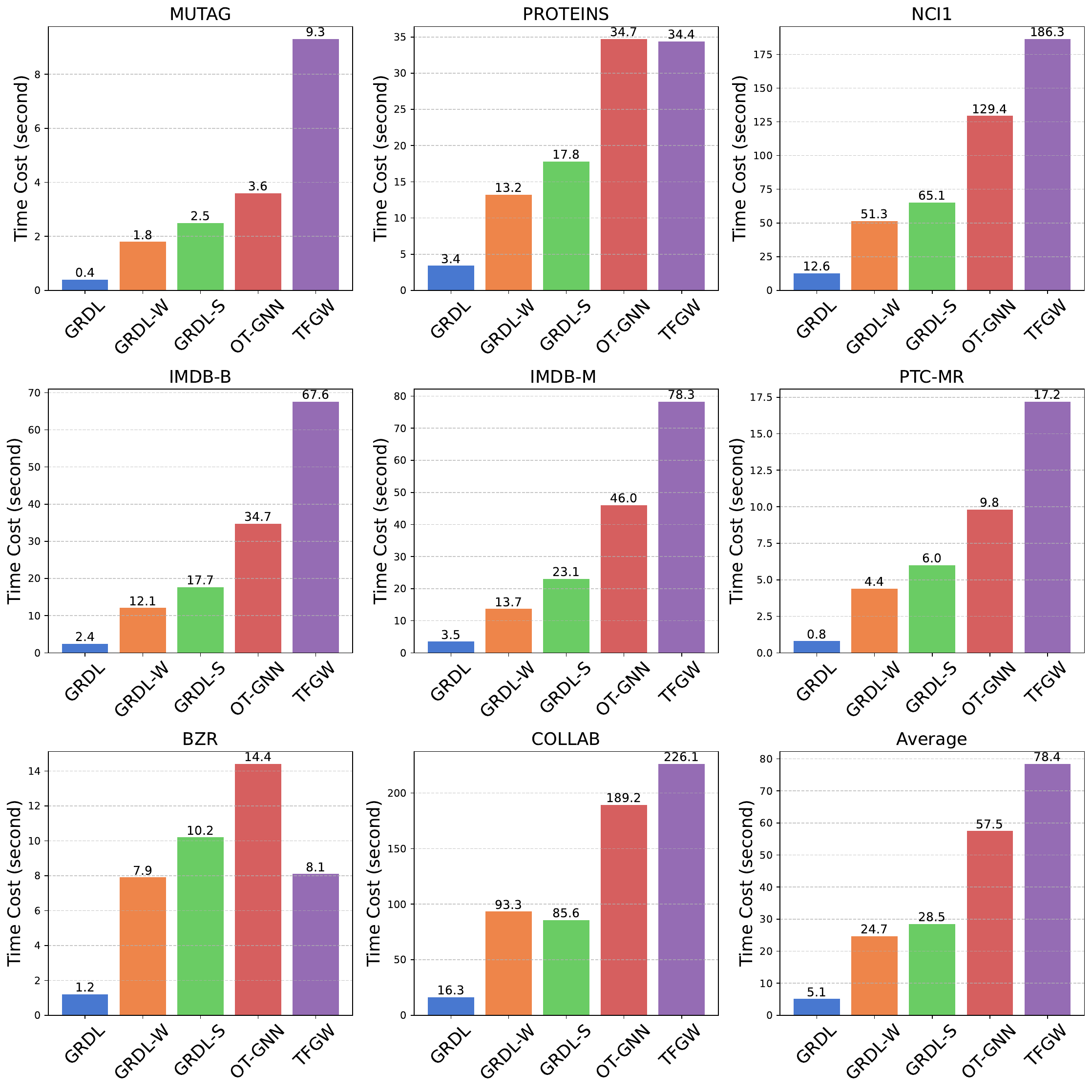}}
\caption{Average training time per epoch (second). Our GRDL is 10 times faster than OT-GNN and TFGW.}
\label{fig:train_time}
\end{center}
\vskip -0.1in
\end{figure}

% \begin{wrapfigure}{R}{0.6\textwidth}
% \vspace{-30pt}
% \begin{center}
% \centerline{\includegraphics[width=1\linewidth]{Figures/time_plot.pdf}}
% \caption{Average training time per epoch (second). Our GRDL is 10 times faster than OT-GNN and TFGW.}
% \label{fig:train_time}
% \end{center}
% % \vskip -0.1in
% \vspace{-20pt}
% \end{wrapfigure}

\begin{wrapfigure}{R}{0.4\textwidth}
\vspace{-50pt}
\begin{center}
\centerline{\includegraphics[width=1\linewidth]{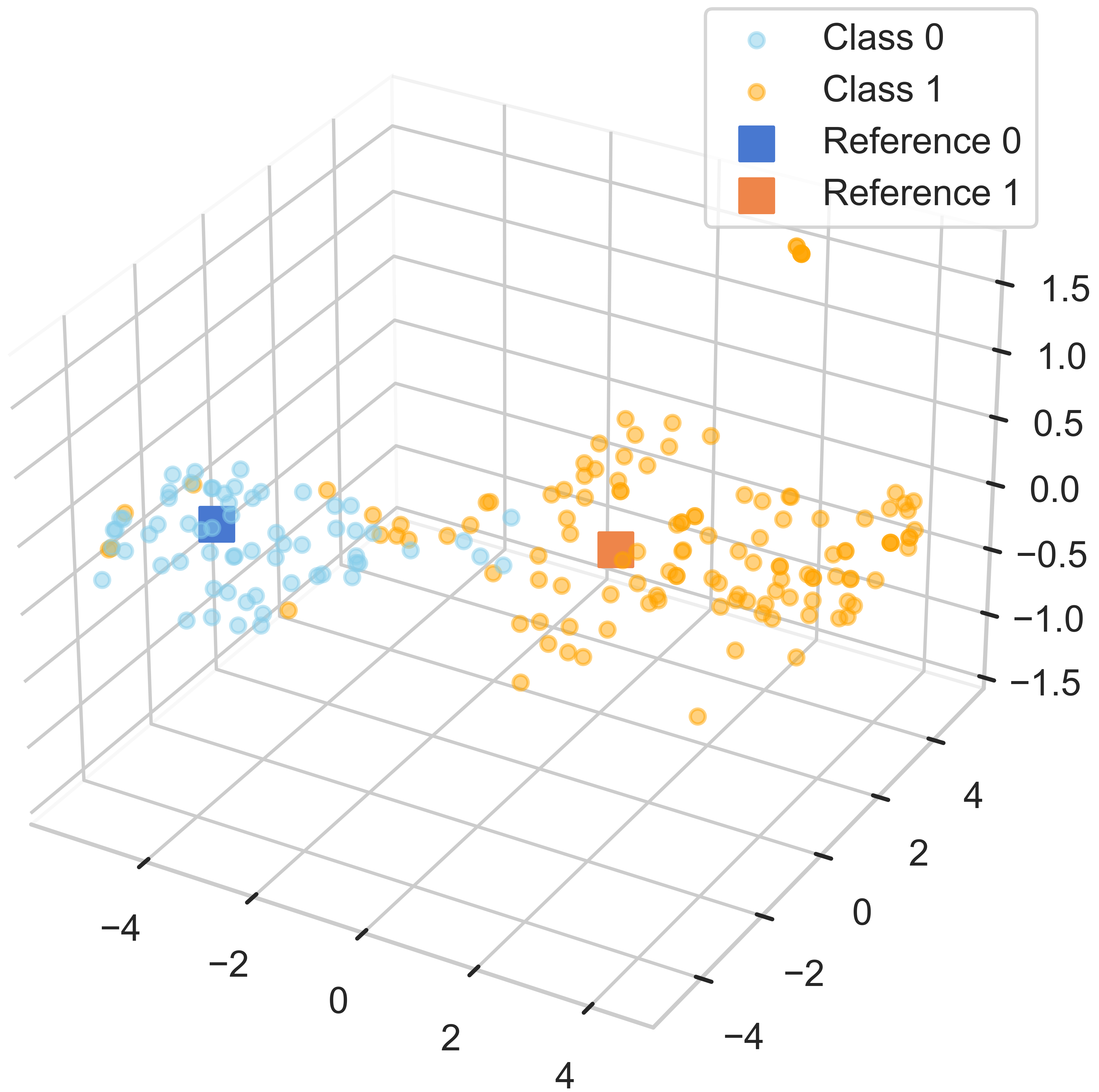}}
\caption{T-SNE visualization of MUTAG embeddings and reference distributions given by GRDL. Each dot denotes a graph and each square denotes a reference distribution.}
\label{fig:mutag_emd}
\end{center}
\vspace{-80pt}
\end{wrapfigure}

\subsection{Time Cost Comparison}\label{exp:cost}
We compare the time cost of our GRDL with two models that leverage optimal transport distances discussed in \cref{sec:relate}: OT-GNN \citep{chen2021otgnn} and TFGW \citep{vincent2022template}. Compared with them, our model has significantly lower time costs. We present empirical average training time per epoch in \cref{fig:train_time} and average prediction time per graph in \cref{tab:pred_time} in Appendix \ref{app:time_complexity}. Experiments were conducted on CPUs (Apple M1) using identical batch sizes, ensuring a fair comparison. It's noteworthy that the OT solver employed in TFGW and OT-GNN is currently confined to CPU, influencing the choice of hardware for this evaluation. We also analyzed the theoretical time complexity in \cref{app:time_complexity} (see \cref{tab:complexity}).

We also compare training time with two latest pooling methods including Wit-TopoPool \citep{chen2023wittopo} and MSGNN \cite{lv2023msgnn} on eight real datasets and three synthetic datasets. The three synthetic datasets have 2000 graphs with 100(SYN-100), 300(SYN-300), and 500(SYN-500) nodes per graph, respectively. The edge number is $0.1n^2$ where $n$ is the number of nodes. The empirical training time per epoch is shown in Table \ref{tab:time_syn}, where empty of MSGNN means it takes more than 200 seconds to train a single epoch, which is too costly. As can be seen, our method is the most efficient among these three methods.
\begin{table}[h]
\centering
\caption{Comparison of time cost (second) per epoch with Wit-TopoPool and MSGNN.}
\label{tab:time_syn}
\resizebox{\textwidth}{!}{
\begin{sc}
\begin{tabular}{lccccccccccc}
\toprule
  & MUTAG                 & PROTEINS              & NCI1                  & IMDB-B                & IMDB-M                & PTC-MR                & BZR    & COLLAB & SYN-100 & SYN-300 & SYN-500\\
  \midrule
GRDL (ours) & 0.4 & 3.4 & 12.6 & 2.4 & 3.5 & 0.8 & 1.2 & 16.3 & 26.6 & 45.8 & 88.7 \\
WitTopoPool  & 0.4  & 2.6 & 21.4 & 2.4 & 2.6 & 1.0 & 1.3 & 39.1 & 32.9 & 50.8 & 97.5 \\
MSGNN & 45.2  & - & - & - & - & 75.5  & 135.3 & - & - & - & -\\
\bottomrule
\end{tabular}
\end{sc}
}
\end{table}

\subsection{Graph Visualization}\label{exp:vis}
We use t-SNE \citep{van2008visualizing} to visualize the distributions of graphs' node embeddings given by our GRDL model, which is equivalent to visualizing each graph in a 3-D
coordinate system. Firstly we use MMD to calculate a distance matrix $\mathbf{C} \in \mathbb{R}^{(N+K)\times(N+K)}$ between 
the node embeddings $\{\mathbf{H}_i\}_{i=1}^N$ and the reference distributions $\{\mathbf{D}_k\}_{k=1}^K$. The 3-D visualization given by t-SNE using $\mathbf{C}$ is presented in \cref{fig:mutag_emd}. The graphs are located around the references. It means that the learned references can represent realistic graphs' latent node embeddings from the data.

\subsection{More Numerical Results}
The ablation study, influence of $\theta$, generalization comparison with GIN are in Appendix \ref{app:ablation}, \ref{app:influence_theta}, \ref{app:comparison_gin}, respectively.

\section{Conclusions}
We proposed GRDL, a novel framework for graph classification without global pooling operations and hence effectively preserve the information of node embeddings. What's more, we theoretically analyzed the generalization ability of GRDL, which provided valuable insights into how the generalization ability scales with the properties of the graph data and network structure. Extensive experiments on moderate-scale and large-scale benchmark datasets verify the effectiveness and efficiency of GRDL in comparison to baselines. However, on some benchmark datasets (e.g. NCI1), our model does not outperform the baseline, which may be a limitation of our work and requires further investigation in the future.

\newpage

\bibliography{references}

\begin{thebibliography}{57}
\providecommand{\natexlab}[1]{#1}
\providecommand{\url}[1]{\texttt{#1}}
\expandafter\ifx\csname urlstyle\endcsname\relax
  \providecommand{\doi}[1]{doi: #1}\else
  \providecommand{\doi}{doi: \begingroup \urlstyle{rm}\Url}\fi

\bibitem[Baek et~al.(2021)Baek, Kang, and Hwang]{baek2021gmt}
Jinheon Baek, Minki Kang, and Sung~Ju Hwang.
\newblock Accurate learning of graph representations with graph multiset
  pooling.
\newblock \emph{arXiv preprint arXiv:2102.11533}, 2021.

\bibitem[Bartlett et~al.(2017)Bartlett, Foster, and
  Telgarsky]{bartlett2017spectrally}
Peter~L Bartlett, Dylan~J Foster, and Matus~J Telgarsky.
\newblock Spectrally-normalized margin bounds for neural networks.
\newblock \emph{Advances in neural information processing systems}, 30, 2017.

\bibitem[Bengio and Grandvalet(2003)]{bengio2003no}
Yoshua Bengio and Yves Grandvalet.
\newblock No unbiased estimator of the variance of k-fold cross-validation.
\newblock \emph{Advances in Neural Information Processing Systems}, 16, 2003.

\bibitem[Bianchi et~al.(2020)Bianchi, Grattarola, and
  Alippi]{bianchi2020mincutpool}
Filippo~Maria Bianchi, Daniele Grattarola, and Cesare Alippi.
\newblock Spectral clustering with graph neural networks for graph pooling.
\newblock In \emph{International conference on machine learning}, pages
  874--883. PMLR, 2020.

\bibitem[Buterez et~al.(2022)Buterez, Janet, Kiddle, Oglic, and
  Li{\`o}]{buterez2022graph}
David Buterez, Jon~Paul Janet, Steven~J Kiddle, Dino Oglic, and Pietro Li{\`o}.
\newblock Graph neural networks with adaptive readouts.
\newblock \emph{Advances in Neural Information Processing Systems},
  35:\penalty0 19746--19758, 2022.

\bibitem[Chen et~al.(2021)Chen, Bécigneul, Ganea, Barzilay, and
  Jaakkola]{chen2021otgnn}
Benson Chen, Gary Bécigneul, Octavian-Eugen Ganea, Regina Barzilay, and Tommi
  Jaakkola.
\newblock Optimal transport graph neural networks, 2021.

\bibitem[Chen et~al.(2022{\natexlab{a}})Chen, O'Bray, and Borgwardt]{Chen22sat}
Dexiong Chen, Leslie O'Bray, and Karsten Borgwardt.
\newblock Structure-aware transformer for graph representation learning.
\newblock In \emph{Proceedings of the 39th International Conference on Machine
  Learning~(ICML)}, Proceedings of Machine Learning Research,
  2022{\natexlab{a}}.

\bibitem[Chen et~al.(2022{\natexlab{b}})Chen, Lim, M{\'e}moli, Wan, and
  Wang]{chen2022weisfeiler}
Samantha Chen, Sunhyuk Lim, Facundo M{\'e}moli, Zhengchao Wan, and Yusu Wang.
\newblock Weisfeiler-lehman meets gromov-wasserstein.
\newblock In \emph{International Conference on Machine Learning}, pages
  3371--3416. PMLR, 2022{\natexlab{b}}.

\bibitem[Chen and Gel(2023)]{chen2023wittopo}
Yuzhou Chen and Yulia~R. Gel.
\newblock Topological pooling on graphs.
\newblock In \emph{Proceedings of the Thirty-Seventh AAAI Conference on
  Artificial Intelligence and Thirty-Fifth Conference on Innovative
  Applications of Artificial Intelligence and Thirteenth Symposium on
  Educational Advances in Artificial Intelligence}, AAAI'23/IAAI'23/EAAI'23,
  2023.

\bibitem[Cuturi(2013)]{cuturi2013sinkhorn}
Marco Cuturi.
\newblock Sinkhorn distances: Lightspeed computation of optimal transport.
\newblock \emph{Advances in neural information processing systems}, 26, 2013.

\bibitem[Defferrard et~al.(2016)Defferrard, Bresson, and
  Vandergheynst]{defferrard2016convolutional}
Micha{\"e}l Defferrard, Xavier Bresson, and Pierre Vandergheynst.
\newblock Convolutional neural networks on graphs with fast localized spectral
  filtering.
\newblock \emph{Advances in neural information processing systems}, 29, 2016.

\bibitem[Garg et~al.(2020)Garg, Jegelka, and Jaakkola]{garg2020generalization}
Vikas Garg, Stefanie Jegelka, and Tommi Jaakkola.
\newblock Generalization and representational limits of graph neural networks.
\newblock In \emph{International Conference on Machine Learning}, pages
  3419--3430. PMLR, 2020.

\bibitem[G{\"a}rtner et~al.(2003)G{\"a}rtner, Flach, and
  Wrobel]{gartner2003graph}
Thomas G{\"a}rtner, Peter Flach, and Stefan Wrobel.
\newblock On graph kernels: Hardness results and efficient alternatives.
\newblock In \emph{Learning Theory and Kernel Machines: 16th Annual Conference
  on Learning Theory and 7th Kernel Workshop, COLT/Kernel 2003, Washington, DC,
  USA, August 24-27, 2003. Proceedings}, pages 129--143. Springer, 2003.

\bibitem[Gilmer et~al.(2017)Gilmer, Schoenholz, Riley, Vinyals, and
  Dahl]{gilmer2017neural}
Justin Gilmer, Samuel~S Schoenholz, Patrick~F Riley, Oriol Vinyals, and
  George~E Dahl.
\newblock Neural message passing for quantum chemistry.
\newblock In \emph{International conference on machine learning}, pages
  1263--1272. PMLR, 2017.

\bibitem[Gretton et~al.(2012{\natexlab{a}})Gretton, Borgwardt, Rasch,
  Sch{\"o}lkopf, and Smola]{gretton2012kernel}
Arthur Gretton, Karsten~M Borgwardt, Malte~J Rasch, Bernhard Sch{\"o}lkopf, and
  Alexander Smola.
\newblock A kernel two-sample test.
\newblock \emph{The Journal of Machine Learning Research}, 13\penalty0
  (1):\penalty0 723--773, 2012{\natexlab{a}}.

\bibitem[Gretton et~al.(2012{\natexlab{b}})Gretton, Sejdinovic, Strathmann,
  Balakrishnan, Pontil, Fukumizu, and Sriperumbudur]{gretton2012optimal}
Arthur Gretton, Dino Sejdinovic, Heiko Strathmann, Sivaraman Balakrishnan,
  Massimiliano Pontil, Kenji Fukumizu, and Bharath~K Sriperumbudur.
\newblock Optimal kernel choice for large-scale two-sample tests.
\newblock \emph{Advances in neural information processing systems}, 25,
  2012{\natexlab{b}}.

\bibitem[Hamilton et~al.(2017)Hamilton, Ying, and
  Leskovec]{hamilton2017inductive}
Will Hamilton, Zhitao Ying, and Jure Leskovec.
\newblock Inductive representation learning on large graphs.
\newblock \emph{Advances in neural information processing systems}, 30, 2017.

\bibitem[Hu et~al.(2020)Hu, Fey, Zitnik, Dong, Ren, Liu, Catasta, and
  Leskovec]{hu2020ogb}
Weihua Hu, Matthias Fey, Marinka Zitnik, Yuxiao Dong, Hongyu Ren, Bowen Liu,
  Michele Catasta, and Jure Leskovec.
\newblock Open graph benchmark: Datasets for machine learning on graphs.
\newblock \emph{arXiv preprint arXiv:2005.00687}, 2020.

\bibitem[Ju et~al.(2023)Ju, Li, Sharma, and Zhang]{Ju2023pac}
Haotian Ju, Dongyue Li, Aneesh Sharma, and Hongyang~R. Zhang.
\newblock Generalization in graph neural networks: Improved pac-bayesian bounds
  on graph diffusion.
\newblock In \emph{Proceedings of The 26th International Conference on
  Artificial Intelligence and Statistics}, volume 206, pages 6314--6341. PMLR,
  25--27 Apr 2023.

\bibitem[Jumper et~al.(2021)Jumper, Evans, Pritzel, Green, Figurnov,
  Ronneberger, Tunyasuvunakool, Bates, {\v{Z}}{\'\i}dek, Potapenko,
  et~al.]{jumper2021highly}
John Jumper, Richard Evans, Alexander Pritzel, Tim Green, Michael Figurnov,
  Olaf Ronneberger, Kathryn Tunyasuvunakool, Russ Bates, Augustin
  {\v{Z}}{\'\i}dek, Anna Potapenko, et~al.
\newblock Highly accurate protein structure prediction with alphafold.
\newblock \emph{Nature}, 596\penalty0 (7873):\penalty0 583--589, 2021.

\bibitem[Kingma and Ba(2014)]{kingma2014adam}
Diederik~P Kingma and Jimmy Ba.
\newblock Adam: A method for stochastic optimization.
\newblock \emph{arXiv preprint arXiv:1412.6980}, 2014.

\bibitem[Kipf and Welling(2016)]{kipf2016gcn}
Thomas~N Kipf and Max Welling.
\newblock Semi-supervised classification with graph convolutional networks.
\newblock \emph{arXiv preprint arXiv:1609.02907}, 2016.

\bibitem[Kolouri et~al.(2020)Kolouri, Naderializadeh, Rohde, and
  Hoffmann]{kolouri2020wegl}
Soheil Kolouri, Navid Naderializadeh, Gustavo~K Rohde, and Heiko Hoffmann.
\newblock Wasserstein embedding for graph learning.
\newblock \emph{arXiv preprint arXiv:2006.09430}, 2020.

\bibitem[Ktena et~al.(2017)Ktena, Parisot, Ferrante, Rajchl, Lee, Glocker, and
  Rueckert]{ktena2017distance}
Sofia~Ira Ktena, Sarah Parisot, Enzo Ferrante, Martin Rajchl, Matthew Lee, Ben
  Glocker, and Daniel Rueckert.
\newblock Distance metric learning using graph convolutional networks:
  Application to functional brain networks.
\newblock In \emph{Medical Image Computing and Computer Assisted Intervention-
  MICCAI 2017: 20th International Conference, Quebec City, QC, Canada,
  September 11-13, 2017, Proceedings, Part I 20}, pages 469--477. Springer,
  2017.

\bibitem[Lee et~al.(2021)Lee, Kim, Lee, Park, and Yu]{lee2021learnable}
Dongha Lee, Su~Kim, Seonghyeon Lee, Chanyoung Park, and Hwanjo Yu.
\newblock Learnable structural semantic readout for graph classification.
\newblock In \emph{2021 IEEE International Conference on Data Mining (ICDM)},
  pages 1180--1185. IEEE, 2021.

\bibitem[Lee et~al.(2019)Lee, Lee, and Kang]{lee2019self}
Junhyun Lee, Inyeop Lee, and Jaewoo Kang.
\newblock Self-attention graph pooling.
\newblock In \emph{International conference on machine learning}, pages
  3734--3743. PMLR, 2019.

\bibitem[Li et~al.(2015)Li, Tarlow, Brockschmidt, and Zemel]{li2015gated}
Yujia Li, Daniel Tarlow, Marc Brockschmidt, and Richard Zemel.
\newblock Gated graph sequence neural networks.
\newblock \emph{arXiv preprint arXiv:1511.05493}, 2015.

\bibitem[Liao et~al.(2021)Liao, Urtasun, and Zemel]{liao2020pac}
Renjie Liao, Raquel Urtasun, and Richard Zemel.
\newblock A {PAC}-{B}ayesian approach to generalization bounds for graph neural
  networks.
\newblock In \emph{International Conference on Learning Representations}, 2021.

\bibitem[Liu et~al.(2022{\natexlab{a}})Liu, Xie, Zhang, Yamada, Zheng, and
  Qian]{liu2022robust}
Weijie Liu, Jiahao Xie, Chao Zhang, Makoto Yamada, Nenggan Zheng, and Hui Qian.
\newblock Robust graph dictionary learning.
\newblock In \emph{The Eleventh International Conference on Learning
  Representations}, 2022{\natexlab{a}}.

\bibitem[Liu et~al.(2022{\natexlab{b}})Liu, Jin, Pan, Zhou, Zheng, Xia, and
  Philip]{liu2022graph}
Yixin Liu, Ming Jin, Shirui Pan, Chuan Zhou, Yu~Zheng, Feng Xia, and S~Yu
  Philip.
\newblock Graph self-supervised learning: A survey.
\newblock \emph{IEEE Transactions on Knowledge and Data Engineering},
  35\penalty0 (6):\penalty0 5879--5900, 2022{\natexlab{b}}.

\bibitem[Lv et~al.(2023)Lv, Tian, Xie, and Song]{lv2023msgnn}
Yiqin Lv, Zhiliang Tian, Zheng Xie, and Yiping Song.
\newblock Multi-scale graph pooling approach with adaptive key subgraph for
  graph representations.
\newblock In \emph{Proceedings of the 32nd ACM International Conference on
  Information and Knowledge Management}, CIKM '23. Association for Computing
  Machinery, 2023.

\bibitem[Maron et~al.(2019)Maron, Ben-Hamu, Serviansky, and
  Lipman]{maron2019provably}
Haggai Maron, Heli Ben-Hamu, Hadar Serviansky, and Yaron Lipman.
\newblock Provably powerful graph networks.
\newblock \emph{Advances in neural information processing systems}, 32, 2019.

\bibitem[Mohri et~al.(2018)Mohri, Rostamizadeh, and
  Talwalkar]{mohri2018foundations}
Mehryar Mohri, Afshin Rostamizadeh, and Ameet Talwalkar.
\newblock \emph{Foundations of machine learning}.
\newblock MIT press, 2018.

\bibitem[Morris et~al.(2020)Morris, Kriege, Bause, Kersting, Mutzel, and
  Neumann]{Morris2020tudataset}
Christopher Morris, Nils~M. Kriege, Franka Bause, Kristian Kersting, Petra
  Mutzel, and Marion Neumann.
\newblock Tudataset: A collection of benchmark datasets for learning with
  graphs.
\newblock In \emph{ICML 2020 Workshop on Graph Representation Learning and
  Beyond (GRL+ 2020)}, 2020.
\newblock URL \url{www.graphlearning.io}.

\bibitem[Niepert et~al.(2016)Niepert, Ahmed, and Kutzkov]{niepert2016patchysan}
Mathias Niepert, Mohamed Ahmed, and Konstantin Kutzkov.
\newblock Learning convolutional neural networks for graphs.
\newblock In \emph{International conference on machine learning}, pages
  2014--2023. PMLR, 2016.

\bibitem[Papp et~al.(2021)Papp, Martinkus, Faber, and
  Wattenhofer]{papp2021dropgin}
P{\'a}l~Andr{\'a}s Papp, Karolis Martinkus, Lukas Faber, and Roger Wattenhofer.
\newblock Dropgnn: Random dropouts increase the expressiveness of graph neural
  networks.
\newblock \emph{Advances in Neural Information Processing Systems},
  34:\penalty0 21997--22009, 2021.

\bibitem[Peyré and Cuturi(2020)]{peyre2020computational}
Gabriel Peyré and Marco Cuturi.
\newblock Computational optimal transport, 2020.

\bibitem[Ranjan et~al.(2020)Ranjan, Sanyal, and Talukdar]{ranjan2020asap}
Ekagra Ranjan, Soumya Sanyal, and Partha Talukdar.
\newblock Asap: Adaptive structure aware pooling for learning hierarchical
  graph representations.
\newblock In \emph{Proceedings of the AAAI conference on artificial
  intelligence}, volume~34, pages 5470--5477, 2020.

\bibitem[Shervashidze et~al.(2011)Shervashidze, Schweitzer, Van~Leeuwen,
  Mehlhorn, and Borgwardt]{shervashidze2011weisfeiler}
Nino Shervashidze, Pascal Schweitzer, Erik~Jan Van~Leeuwen, Kurt Mehlhorn, and
  Karsten~M Borgwardt.
\newblock Weisfeiler-lehman graph kernels.
\newblock \emph{Journal of Machine Learning Research}, 12\penalty0 (9), 2011.

\bibitem[Sun et~al.(2019)Sun, Hoffmann, Verma, and Tang]{sun2019infograph}
Fan-Yun Sun, Jordan Hoffmann, Vikas Verma, and Jian Tang.
\newblock Infograph: Unsupervised and semi-supervised graph-level
  representation learning via mutual information maximization.
\newblock \emph{arXiv preprint arXiv:1908.01000}, 2019.

\bibitem[Sun et~al.(2023)Sun, Ding, and Fan]{sun2023lovsz}
Ziheng Sun, Chris Ding, and Jicong Fan.
\newblock Lov\'asz principle for unsupervised graph representation learning.
\newblock In \emph{Thirty-seventh Conference on Neural Information Processing
  Systems}, 2023.

\bibitem[Tang and Liu(2023)]{pmlr-v202-tang23f}
Huayi Tang and Yong Liu.
\newblock Towards understanding generalization of graph neural networks.
\newblock In Andreas Krause, Emma Brunskill, Kyunghyun Cho, Barbara Engelhardt,
  Sivan Sabato, and Jonathan Scarlett, editors, \emph{Proceedings of the 40th
  International Conference on Machine Learning}, volume 202 of
  \emph{Proceedings of Machine Learning Research}, pages 33674--33719. PMLR,
  23--29 Jul 2023.

\bibitem[Titouan et~al.(2019)Titouan, Courty, Tavenard, Laetitia, and
  Flamary]{titouan2019fgw}
Vayer Titouan, Nicolas Courty, Romain Tavenard, Chapel Laetitia, and R{\'e}mi
  Flamary.
\newblock Optimal transport for structured data with application on graphs.
\newblock In \emph{Proceedings of the 36th International Conference on Machine
  Learning}, volume~97 of \emph{Proceedings of Machine Learning Research},
  pages 6275--6284. PMLR, 09--15 Jun 2019.

\bibitem[Togninalli et~al.(2019)Togninalli, Ghisu, Llinares-L{\'o}pez, Rieck,
  and Borgwardt]{togninalli2019wassersteinWL}
Matteo Togninalli, Elisabetta Ghisu, Felipe Llinares-L{\'o}pez, Bastian Rieck,
  and Karsten Borgwardt.
\newblock Wasserstein weisfeiler-lehman graph kernels.
\newblock \emph{Advances in neural information processing systems}, 32, 2019.

\bibitem[Van~der Maaten and Hinton(2008)]{van2008visualizing}
Laurens Van~der Maaten and Geoffrey Hinton.
\newblock Visualizing data using t-sne.
\newblock \emph{Journal of machine learning research}, 9\penalty0 (11), 2008.

\bibitem[Vayer et~al.(2018)Vayer, Chapel, Flamary, Tavenard, and
  Courty]{vayer2018fused}
Titouan Vayer, Laetita Chapel, Rémi Flamary, Romain Tavenard, and Nicolas
  Courty.
\newblock Fused gromov-wasserstein distance for structured objects: theoretical
  foundations and mathematical properties, 2018.

\bibitem[Veli{\v{c}}kovi{\'c} et~al.(2017)Veli{\v{c}}kovi{\'c}, Cucurull,
  Casanova, Romero, Lio, and Bengio]{velivckovic2017graph}
Petar Veli{\v{c}}kovi{\'c}, Guillem Cucurull, Arantxa Casanova, Adriana Romero,
  Pietro Lio, and Yoshua Bengio.
\newblock Graph attention networks.
\newblock \emph{arXiv preprint arXiv:1710.10903}, 2017.

\bibitem[Vincent-Cuaz et~al.(2021)Vincent-Cuaz, Vayer, Flamary, Corneli, and
  Courty]{vincent2021online}
C{\'e}dric Vincent-Cuaz, Titouan Vayer, R{\'e}mi Flamary, Marco Corneli, and
  Nicolas Courty.
\newblock Online graph dictionary learning.
\newblock In \emph{International conference on machine learning}, pages
  10564--10574. PMLR, 2021.

\bibitem[Vincent-Cuaz et~al.(2022)Vincent-Cuaz, Flamary, Corneli, Vayer, and
  Courty]{vincent2022template}
C{\'e}dric Vincent-Cuaz, R{\'e}mi Flamary, Marco Corneli, Titouan Vayer, and
  Nicolas Courty.
\newblock Template based graph neural network with optimal transport distances.
\newblock \emph{Advances in Neural Information Processing Systems},
  35:\penalty0 11800--11814, 2022.

\bibitem[Wang et~al.(2018)Wang, Chen, Ren, Yu, Cheng, and Lin]{wang2018social}
Zhouxia Wang, Tianshui Chen, Jimmy Ren, Weihao Yu, Hui Cheng, and Liang Lin.
\newblock Deep reasoning with knowledge graph for social relationship
  understanding.
\newblock In \emph{Proceedings of the Twenty-Seventh International Joint
  Conference on Artificial Intelligence, {IJCAI-18}}, pages 1021--1028.
  International Joint Conferences on Artificial Intelligence Organization, 7
  2018.
\newblock \doi{10.24963/ijcai.2018/142}.
\newblock URL \url{https://doi.org/10.24963/ijcai.2018/142}.

\bibitem[Wu et~al.(2022)Wu, Chen, Xu, and Li]{wu2022sep}
Junran Wu, Xueyuan Chen, Ke~Xu, and Shangzhe Li.
\newblock Structural entropy guided graph hierarchical pooling.
\newblock In \emph{International conference on machine learning}, pages
  24017--24030. PMLR, 2022.

\bibitem[Xiao et~al.(2022)Xiao, Chen, Guo, Zhuang, and Wang]{xiao2022decoupled}
Teng Xiao, Zhengyu Chen, Zhimeng Guo, Zeyang Zhuang, and Suhang Wang.
\newblock Decoupled self-supervised learning for graphs.
\newblock \emph{Advances in Neural Information Processing Systems},
  35:\penalty0 620--634, 2022.

\bibitem[Xu et~al.(2018)Xu, Hu, Leskovec, and Jegelka]{xu2018gin}
Keyulu Xu, Weihua Hu, Jure Leskovec, and Stefanie Jegelka.
\newblock How powerful are graph neural networks?
\newblock \emph{arXiv preprint arXiv:1810.00826}, 2018.

\bibitem[Ying et~al.(2021)Ying, Cai, Luo, Zheng, Ke, He, Shen, and
  Liu]{ying2021graphormer}
Chengxuan Ying, Tianle Cai, Shengjie Luo, Shuxin Zheng, Guolin Ke, Di~He,
  Yanming Shen, and Tie-Yan Liu.
\newblock Do transformers really perform badly for graph representation?
\newblock \emph{Advances in neural information processing systems},
  34:\penalty0 28877--28888, 2021.

\bibitem[Ying et~al.(2018)Ying, You, Morris, Ren, Hamilton, and
  Leskovec]{ying2018diffpool}
Zhitao Ying, Jiaxuan You, Christopher Morris, Xiang Ren, Will Hamilton, and
  Jure Leskovec.
\newblock Hierarchical graph representation learning with differentiable
  pooling.
\newblock \emph{Advances in neural information processing systems}, 31, 2018.

\bibitem[You et~al.(2021)You, Chen, Shen, and Wang]{you2021graph}
Yuning You, Tianlong Chen, Yang Shen, and Zhangyang Wang.
\newblock Graph contrastive learning automated.
\newblock In \emph{International Conference on Machine Learning}, pages
  12121--12132. PMLR, 2021.

\bibitem[Zeng et~al.(2023)Zeng, Zhu, Xia, Zeng, and Tong]{zeng2023generative}
Zhichen Zeng, Ruike Zhu, Yinglong Xia, Hanqing Zeng, and Hanghang Tong.
\newblock Generative graph dictionary learning.
\newblock In \emph{International Conference on Machine Learning}, pages
  40749--40769. PMLR, 2023.

\end{thebibliography}
\bibliographystyle{plainnat}

%%%%%%%%%%%%%%%%%%%%%%%%%%%%%%%%%%%%%%%%%%%%%%%%%%%%%%%%%%%%
\newpage

\appendix
\renewcommand \thepart{} % make "Part" text invisible
\renewcommand \partname{}
\part{Appendix}
\parttoc
\newpage

% \section*{Appendix}
\section{Generalization Comparison to GIN with Global Pooling}\label{app:comparison_gin}
To see the advantage of our GRDL, we compare it with GIN. The only difference between GRDL and GIN is that GRDL uses a reference layer while GIN uses readout. We add an $r'$-layer MLP as the classifier after message-passing modules in the GIN. The following theorem gives an upper bound of GIN's generalization error:
\begin{theorem}[Generalization bound of GIN]\label{thm:gin_generalization}
 Let $n=\min_i n_i$, $c = \|\Tilde{\mathbf{A}}\|_\sigma$, and $\bar{d}=\max_{i,l}d_i^{(l)}$. Denote $R_G: =c^{2L}\|\mathbf{X}\|_2^2\ln(2\bar{d}^2) \big(\prod_{l=1}^{L}(\prod_{i=1}^r \kappa^{(l)}_i)^2\big) \big(\sum_{l=1}^{L}\sum_{i=1}^r\big(\frac{b^{(l)}_i}{\kappa^{(l)}_i}\big)^{2/3}\big)^3$. For graphs $\mathcal{G}=\left\{(G_i, y_i)\right\}_{i=1}^N$ drawn i.i.d from any probability distribution over $\mathcal{X} \times \{1, \dots, K\}$, with probability at least $1-\delta$, GIN network with mean readout satisfies
\begin{equation*}
    L_\gamma(f) \leq \hat{L}_\gamma(f) + 3\gamma\sqrt{\tfrac{\ln{(2/\delta)}}{2N}} + \tfrac{8\gamma + 24\mu(\prod_{i=1}^{r'} \kappa^{(L+1)}_i)\sqrt{R_G+R_G'}\ln{N}}{N} 
\end{equation*}
where $R_G' = c^{2L}\|\mathbf{X}\|_2^2\ln(2\bar{d}^2) \big(\prod_{l=1}^{L}(\prod_{i=1}^r \kappa^{(l)}_i)^2\big)\big(3C_2^2C_1 + 3C_2C_1^2 + C_1^3\big)$, $C_1 = \sum_{i=1}^{r'}\tfrac{b^{(L+1)}_i}{\kappa^{(L+1)}_i}$, and $C_2 = \sum_{l=1}^{L}\sum_{i=1}^r\big(\tfrac{b^{(l)}_i}{\kappa^{(l)}_i}\big)^{2/3}$.
\end{theorem}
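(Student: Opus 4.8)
The plan is to follow the same Rademacher-complexity route that underlies \cref{thm:generalization}, only replacing the reference layer by mean readout followed by the $r'$-layer classifier MLP. Write the GIN-with-readout class as $\mathcal{F}^{\mathrm{GIN}}=\{\mathbf{G}\mapsto g\circ\mathrm{MeanPool}\circ f_G(\mathbf{G}):f_G\in\mathcal{F}_G\}$, where $\mathrm{MeanPool}$ averages node embeddings within each graph and $g$ ranges over $r'$-layer MLPs with $\|\mathbf{W}_i^{(L+1)}\|_\sigma\le\kappa_i^{(L+1)}$, $\|\mathbf{W}_i^{(L+1)}\|_{2,1}\le b_i^{(L+1)}$ (i.e.\ the paper's $\mathcal{F}^{(L+1)}$ specialised to $\tilde{\mathbf{A}}=\mathbf{I}$, since after pooling there is no graph structure). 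First I would apply the standard uniform-convergence bound: with probability at least $1-\delta$, $L_\gamma(f)\le\hat{L}_\gamma(f)+2\widehat{\mathfrak{R}}_{\mathcal{G}}(\mathcal{F}^{\mathrm{GIN}}_\gamma)+3\gamma\sqrt{\ln(2/\delta)/(2N)}$, which immediately delivers the $3\gamma\sqrt{\ln(2/\delta)/(2N)}$ term in the statement; then strip off the $\mu$-Lipschitz loss via Talagrand's contraction lemma, paying a factor $\mu$. What remains is to control $\widehat{\mathfrak{R}}_{\mathcal{G}}(\mathcal{F}^{\mathrm{GIN}})$ through Dudley's entropy integral, which reduces the whole problem to covering numbers of $\mathcal{F}^{\mathrm{GIN}}$ over $\mathcal{G}$ in the empirical Frobenius metric.

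The crux is the covering-number estimate, which I would build up along the composition. The $L$-layer GIN block $f_G$ is covered exactly as in the proof of \cref{thm:generalization} using \cref{lemma:gin_cover}; this produces the shared prefactor $c^{2L}\|\mathbf{X}\|_2^2\ln(2\bar{d}^2)\big(\prod_{l=1}^L(\prod_{i=1}^r\kappa_i^{(l)})^2\big)$ and introduces $C_2=\sum_{l=1}^L\sum_{i=1}^r(b_i^{(l)}/\kappa_i^{(l)})^{2/3}$ into the complexity exponent, with the block-diagonal self-looped $\tilde{\mathbf{A}}$ contributing only $c^L$ for the full $L$-step message passing. Since $\|\mathrm{MeanPool}(\mathbf{H}_i)-\mathrm{MeanPool}(\mathbf{H}_i')\|_2\le n_i^{-1/2}\|\mathbf{H}_i-\mathbf{H}_i'\|_F\le\|\mathbf{H}_i-\mathbf{H}_i'\|_F$, readout is $1$-Lipschitz: it neither enlarges the covering radius nor adds to the entropy, and it keeps the pooled representations bounded by the same norm chain $c^L\big(\prod_{l,i}\kappa_i^{(l)}\big)\|\mathbf{X}\|_2$ that controls the node embeddings. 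The classifier $g$ is then covered via the spectral-/$(2,1)$-norm matrix-covering estimate of \citep{bartlett2017spectrally} (the same device used for the GIN sublayers), while perturbations already incurred in $f_G$ are propagated through $g$ at Lipschitz cost $\prod_{i=1}^{r'}\kappa_i^{(L+1)}$, which pulls that factor outside the eventual square root. Assembling these covers, $\ln\mathcal{N}(\mathcal{F}^{\mathrm{GIN}},\epsilon)=\tilde{\mathcal{O}}\big(\epsilon^{-2}(\prod_{i=1}^{r'}\kappa_i^{(L+1)})^2\,c^{2L}\|\mathbf{X}\|_2^2\ln(2\bar{d}^2)(\prod_{l=1}^L(\prod_{i=1}^r\kappa_i^{(l)})^2)\,(C_1+C_2)^3\big)$; the decisive algebraic point is that the message-passing contribution $C_2$ and the classifier contribution $C_1=\sum_{i=1}^{r'}b_i^{(L+1)}/\kappa_i^{(L+1)}$ combine additively, and $(C_1+C_2)^3=C_2^3+3C_2^2C_1+3C_2C_1^2+C_1^3$ is exactly $R_G+R_G'$ once the common prefactor is divided out.

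To conclude, I would feed this covering bound into Dudley's integral: with an $\epsilon^{-2}$ integrand the integral evaluates (after truncating the lower limit at scale $\Theta(1/N)$) to a $\ln N$ factor, and combining with the $1/\sqrt{N}$ from symmetrization — using that $\|\mathbf{X}\|_2^2=O(N)$ so that $\sqrt{R_G+R_G'}$ carries an extra $\sqrt{N}$, exactly as in \cref{thm:generalization} — and the contraction factor $\mu$ yields the term $\tfrac{24\mu(\prod_{i=1}^{r'}\kappa_i^{(L+1)})\sqrt{R_G+R_G'}\,\ln N}{N}$; the residual $8\gamma/N$ absorbs the truncation remainder together with the crude $0\le l_\gamma\le\gamma$ bound on the tail, completing the bound as stated.

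I expect the main obstacle to be exactly this covering-number bookkeeping for the message-passing $+$ pooling $+$ classifier composition: getting the classifier's layer ratios to enter the complexity exponent at power one (forming $C_1$) rather than at power $2/3$ like the message-passing ratios (forming $C_2$), correctly extracting the Lipschitz prefactor $\prod_{i=1}^{r'}\kappa_i^{(L+1)}$ outside the square root, and checking that inserting mean readout between $f_G$ and $g$ really costs nothing in the chaining. The surrounding ingredients — symmetrization, Talagrand contraction, and the Dudley integral — are routine and identical to the proof of \cref{thm:generalization}, so the genuinely new work is confined to folding the classifier head into the covering argument.
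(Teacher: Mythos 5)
Your overall route is the same as the paper's: cover the $L$-layer GIN block with \cref{lemma:gin_cover}, observe that mean readout is a matrix multiplication by a block matrix $\mathbf{M}$ with $\|\mathbf{M}\|_\sigma\le 1$ (hence $1$-Lipschitz and free in the chaining), fold the $r'$-layer classifier into the composition cover via \cref{lemma:cover_composition} and the matrix-covering estimate, and finish with \cref{lemma:general_upper_bound} and the Dudley integral of \cref{lemma:dudley} at $\alpha=1/\sqrt{N}$. All of that matches the paper's proof in Section \ref{proof:gin_generalization}.

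The one genuine gap is exactly the point you flag as the ``main obstacle'' and then do not resolve: you assert that the classifier's layer ratios enter the entropy at power one (forming $C_1=\sum_{i=1}^{r'}b_i^{(L+1)}/\kappa_i^{(L+1)}$) while the message-passing ratios enter at power $2/3$ (forming $C_2$), and that $(C_1+C_2)^3$ then ``is exactly $R_G+R_G'$.'' But the H\"older optimization over the per-layer radii $\epsilon_i$ (the step in \eqref{eq:holder} and in the proof of \cref{lemma:gin_cover}) treats every layer of the composition identically and necessarily produces the exponent $2/3$ on \emph{all} $Lr+r'$ layer ratios; there is no mechanism in this argument that singles out the classifier layers for a different exponent. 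Indeed the paper's own proof arrives at $\mathcal{A}\propto\big(C_2+\sum_{i=1}^{r'}(b_i^{(L+1)}/\kappa_i^{(L+1)})^{2/3}\big)^3$, i.e.\ the $2/3$ version, which is inconsistent with the theorem's stated $C_1$ (this looks like a typo in the statement rather than something your, or the paper's, method can actually deliver). So your ``decisive algebraic point'' is an unproved assertion: to make your write-up correct you should either derive the bound with the classifier ratios also at power $2/3$ (matching what the chaining argument actually gives) or supply a genuinely different optimization of the $\epsilon_i$'s that produces the asymmetric exponents — the latter does not exist within the Bartlett-style covering framework you are using. A secondary, minor inaccuracy: the $\ln N$ factor comes from $\int_{\gamma\alpha}^{\gamma\sqrt N}\epsilon^{-1}\,d\epsilon$ with $\alpha=1/\sqrt N$, not from any assumption that $\|\mathbf{X}\|_2^2=O(N)$ injects an extra $\sqrt N$; that parenthetical should be dropped, though it does not affect the final expression.
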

This bound is derived using the same techniques as our GRDL bound in \cref{thm:generalization}. To compare these two bounds, we only need to compare the following two terms:
\begin{align}
    & Q_{\text{GRDL}}:=24\sqrt{v_1+v_2}\ln{N} + 24\gamma\sqrt{Nv_2\ln{v_3}} \tag{\cref{thm:generalization}}\\
    & Q_{\text{GIN}}:=24\mu\Big(\prod_{i=1}^{r'} \kappa^{(L+1)}_i\Big)\sqrt{R_G+R_G'}\ln{N}\tag{\cref{thm:gin_generalization}}
\end{align}
where $v_1 = \frac{64\theta K\mu^2}{n} R_G$, $v_2 = Km\bar{d}$, and $v_3 = \frac{24\sqrt{\theta N}b_D\mu}{\sqrt{m}}$. Our observations are as follows.
\begin{itemize}
    \item The $\theta$ in $v_1$ can be absorbed into $\mathbf{W}_r^{(L)}$ and $\{\mathbf{D}_k\}_{k=1}^K$. Since $n\gg K$, we conclude that $\tfrac{64\theta K}{n}$ is smaller than 1 in practice. Therefore, $v_1\leq \mu^2R_G$.
    \item $v_2 = Km\bar{d}$ and $v_3 = \frac{24\sqrt{\theta N}b_D\mu}{\sqrt{m}}$ are much smaller than $R_G'$ as well as $R_G$, i.e., $v_2 \ll R_G'$ and $v_3\ll R_G'$. The reason is that $R_G'$ and $R_G$ involve the multiplication of terms related to $c$, $\|\mathbf{X}\|_2^2$, and $\kappa_i^{(l)}$.
    \item In \cref{thm:gin_generalization}, $\prod_{i=1}^{r'} \kappa^{(L+1)}_i$ is typically larger than $3$ for $r'>1$ based on empirical observations. We also observe that $\prod_{i=1}^{r'} \kappa^{(L+1)}_i$ may be smaller than $1$ for $r'$ = 1, but the linear classifier's expressive capacity is very limited and thus has large training error. Therefore, we focus on the case where $r' > 1$.
\end{itemize}
Now we can conclude that $Q_{\text{GRDL}}<Q_{\text{GIN}}$.
Therefore, the generalization error upper bound of GIN is larger than of GRDL, meaning our GRDL generalizes better on unseen data than GIN in the worst case. It is worth noting that these results apply to other GNNs such as GCN.

We now use numerical experiments on real datasets to support our claim. The training and testing accuracy of GRDL and GIN are shown in \cref{tab:grdl_gin}. 
We see that the training accuracy of our GRDL is close to that of GIN, but the testing accuracy of our GRDL is much higher than that of GIN. This means that GRDL and GIN have similar training errors but the former has a stronger generalization ability.

\begin{table}[ht]
    \caption{Comparison of the trainining and testing accuracy between GRDL and GIN.}
    \centering
    \vskip 0.1in
    \begin{tabular}{lcccc}
    \toprule
            & \multicolumn{2}{c}{GRDL}      & \multicolumn{2}{c}{GIN}   \\
    % \cmidrule{2-3, 4-5}
    Dataset & Training Accuracy     & Testing Accuracy     & Training Accuracy    & Testing Accuracy     \\
    \midrule
    MUTAG   & 93.3          & 92.1          & 92.7          & 89.4\\
    PROTEINS& 83.1          & 82.6          & 80.5          & 76.2\\
    NCI1    & 82.8          & 80.4          & 83.3          & 82.2\\
    IMDB-B  & 76.3          & 74.8          & 75.9          & 64.3\\
    IMDB-M  & 53.1          & 52.9          & 51.7          & 50.9\\
    PTC-MR  & 71.3          & 68.3          & 66.1          & 64.6\\
    BZR     & 93.1          & 92.0          & 93.9          & 82.6\\
    COLLAB  & 82.1          & 79.9          & 82.3          & 79.3\\
    \midrule
    Average & 79.4          & 77.9          & 78.3          & 73.6\\
    \bottomrule
    \end{tabular}
    \label{tab:grdl_gin}
\end{table}

\section{Theory and Experiments of GRDL with Multiple Reference Distributions}\label{app:multi-reference}
Currently, we use $K$ reference distributions for classification (one for each class). One natural approach to enhance the model's expressive power is to increase the number of reference distributions for each class. However, counterintuitively, our empirical observations suggest that having only one reference per class is optimal. In this section, we will explore and provide insights into this phenomenon.

Suppose we have $P$ reference distributions for each class, i.e. $ \mathcal{D} \triangleq \left\{\mathbf{D}_k^{(p)}\right\}_{k\in[K]}^{p\in[P]}$, where $\mathbf{D}_k^{(p)}$ is the $p$-th reference in the $k$-th class. The prediction in \cref{eq:label} is changed to
\begin{equation}
    y_{\text{pred}, i} = \arg\max_{k\in[K]}{s_{ik}}, \quad s_{ik} = \sum_{p=1}^P\xi(\mathbf{H}_i, \mathbf{D}_k^{(p)}).
\end{equation}
The training algorithm is nearly the same as our GRDL with one reference per class (\cref{alg:training}) except for the mini-batch training loss because of the multiple references, i.e.,
\begin{equation}
    \mathcal{L}=-\frac{1}{B}\sum_{i\in {\mathcal{B}}} \sum_{k=1}^K y_{ik}\log{\hat{y}_{ik}}+\lambda\sum_{k'\neq k}\sum_{p,p'=1}^P\xi_{\theta}(\mathbf{D}_k^{(p)},\mathbf{D}_{k'}^{(p')})
\end{equation}

We compare the model with $P=2$ (GRDL-2) and $P=3$ (GRDL-3) with our GRDL ($P=1$). \cref{tab:multi_reference} shows the classification accuracy of the models on the benchmark datasets.
\begin{table}[ht]
\caption{Classification accuracy of models with multiple reference distributions. \textbf{Bold} text indicates the best mean accuracy.}
\label{tab:multi_reference}
\vskip 0.1in
\begin{center}
\begin{small}
\begin{sc}
\begin{tabular}{lccc}
\toprule
\multirow{2}{3em}{\textbf{Dataset}} &  \multicolumn{3}{c}{\textbf{Method}}\\
\cmidrule{2-4}
        & GRDL                  & GRDL-2        & GRDL-3 \\
\midrule
MUTAG   & \textbf{92.1$\pm$5.9} & 91.5$\pm$4.8  & 90.4$\pm$3.1  \\
PROTEINS & \textbf{82.6$\pm$1.2} & 81.4$\pm$2.1  & 81.3$\pm$2.9  \\
NCI1    & \textbf{80.4$\pm$0.8} & 79.3$\pm$1.0  & 80.0$\pm$1.6  \\
IMDB-B  & \textbf{74.8$\pm$2.0} & 73.6$\pm$2.2  & 74.0$\pm$1.4  \\
IMDB-M  & \textbf{52.9$\pm$1.8} & 51.1$\pm$1.2  & 50.3$\pm$2.1  \\
PTC-MR  & \textbf{68.3$\pm$5.4} & 66.3$\pm$6.4  & 65.4$\pm$5.5  \\
BZR     & \textbf{92.0$\pm$1.1} & 87.1$\pm$2.7  & 88.2$\pm$3.1  \\
COLLAB  & \textbf{79.8$\pm$0.9} & 77.9$\pm$1.2  & 77.5$\pm$0.7  \\
\bottomrule
\end{tabular}
\end{sc}
\end{small}
\end{center}
\vskip -0.1in
\end{table}

To explain why GRDL performs better than the models with more references, we first introduce the following theorem
\begin{theorem}\label{thm:multi_generalization}
 Let $n$ be the minimum number of nodes for graphs $\left\{G_i\right\}_{i=1}^N$, $\theta$ be the hyper-parameter in the Gaussian kernel (\cref{def:kernel}), $c = \|\Tilde{\mathbf{A}}\|_\sigma$. For graphs $\mathcal{G}=\left\{(G_i, y_i)\right\}_{i=1}^N$ drawn i.i.d from any probability distribution over $\mathcal{X} \times \{1, \dots, K\}$ and references $\left\{\mathbf{D}_k^{(p)}\right\}_{k\in[K]}^{p\in[P]}, \mathbf{D}_k^{(p)} \in \mathbb{R}^{m\times d}$, with probability at least $1-\delta$, every loss function $l_\gamma$ and network $f\in\mathcal{F}$ under Assumption \ref{assumption} satisfy
 \begin{equation*}
    L_\gamma(f) \leq \hat{L}_\gamma(f) + 3\gamma\sqrt{\frac{\ln{(2/\delta)}}{2N}} + \frac{8\gamma + 24\sqrt{v_1+v_2}\ln{N} + 24\gamma\sqrt{Nv_2\ln{v_3}}}{N} 
 \end{equation*}
where 
\begin{small}
\begin{align*}
    v_1 = \frac{64\theta P^2 K R_G\mu^2}{n}, v_2 = Km\bar{d}, v_3 = \frac{24P\sqrt{\theta N}b_D\mu}{\sqrt{m}}, R_G =
    c^{2L}\|\mathbf{X}\|_2^2\ln(2\bar{d}^2) \left(\prod_{l=1}^L\left(\prod_{i=1}^r \kappa^{(l)}_i\right)^2\right) \left(\sum_{l=1}^L\sum_{i=1}^r\left(\frac{b^{(l)}_i}{\kappa^{(l)}_i}\right)^{2/3}\right)^3.
\end{align*}
\end{small}
\end{theorem}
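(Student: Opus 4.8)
The plan is to reuse the entire proof pipeline of \cref{thm:generalization} and only re-track the three quantities in which the number $P$ of references per class appears. Recall the skeleton: population risk $\le$ empirical risk $+ 2\mathfrak{R}_N(\mathcal{F}_\gamma) + $ a bounded-difference concentration term, where $\mathcal{F}_\gamma=\{(G,y)\mapsto l_\gamma(f(G),y)\}$ and $\mathfrak{R}_N(\mathcal{F}_\gamma)$ is controlled by a Dudley entropy integral over covering numbers of $\mathcal{F}=\mathcal{F}_D\circ\mathcal{F}_G$. Because the reference layer is Lipschitz in its input with some constant $L_D$, the metric entropy splits as $\log\mathcal{N}(\mathcal{F},\epsilon)\lesssim\log\mathcal{N}(\mathcal{F}_G,\epsilon/L_D)+\log\mathcal{N}(\mathcal{F}_D,\epsilon)$; the GIN piece $\log\mathcal{N}(\mathcal{F}_G,\cdot)$ is taken verbatim from \cref{lemma:gin_cover} and still manufactures the factor $R_G$ (it depends only on the message-passing layers, not on the references), and the $3\gamma\sqrt{\ln(2/\delta)/(2N)}$ and $8\gamma/N$ pieces are untouched. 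So the job reduces to recomputing (i) the Lipschitz constant $L_D$ of $\mathbf{H}\mapsto\mathbf{s}$ in the multi-reference model, (ii) the metric entropy of the enlarged reference-parameter space, and (iii) the Lipschitz dependence of $\sum_p\xi_\theta(\cdot,\mathbf{D}_k^{(p)})$ on the reference parameters, which is the quantity that surfaces inside the logarithm as $\ln v_3$.

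For (i): with $s_{ik}=\sum_{p=1}^P\xi_\theta(\mathbf{H}_i,\mathbf{D}_k^{(p)})$ and $\xi_\theta=-\mathrm{MMD}^2$ under the Gaussian kernel, I would first re-establish the single-reference estimate that $\mathbf{H}\mapsto-\mathrm{MMD}^2(\mathbf{H},\mathbf{D})$ is Lipschitz with constant $O(\sqrt\theta/\sqrt n)$ — this follows from boundedness and smoothness of $k$ together with the $\tfrac1n\sum$ averaging — and then observe that summing $P$ such maps multiplies the constant by $P$, so $L_D$ scales as $P$ times its $P=1$ value. Since $L_D$ enters the propagated GIN covering radius and the final variance term is of the form $L_D^2 R_G$ (up to $\theta,\mu,n$ factors), we pick up exactly the factor $P^2$, giving $v_1=\tfrac{64\theta P^2 K R_G\mu^2}{n}$. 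For (ii) and (iii): the reference layer now carries $PK$ matrices $\mathbf{D}_k^{(p)}\in\mathbb{R}^{m\times d}$ constrained by $\|\mathbf D\|_2\le b_D$, so its metric entropy is of order (number of reference coordinates)$\cdot\log(1/\epsilon)$ and enters the bound through the $v_2$ term; meanwhile the Lipschitz constant of $\mathbf{D}_k^{(p)}\mapsto\sum_p\xi_\theta(\mathbf{H},\mathbf{D}_k^{(p)})$ is $O(\sqrt\theta/\sqrt m)$ per reference, times $P$, times the $\mu$-Lipschitzness of $l_\gamma$, times a $\sqrt N$ coming from the empirical-norm normalization over the $N$ graphs, which sets the covering radius to $O\!\big(P\sqrt{\theta N}\,b_D\mu/\sqrt m\big)=v_3$ inside the logarithm. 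Substituting these back into the Dudley integral and the concentration step reproduces the displayed inequality.

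The main obstacle I anticipate is step (i): pinning down the Lipschitz constants of $\mathrm{MMD}^2$ with the Gaussian kernel cleanly enough that the $P$-dependence comes out exactly linear and the $\theta,n,m$ dependences match \cref{thm:generalization} exactly. One must verify that the three MMD pieces (the cross term and the two self terms) combine so that a perturbation of $\mathbf{H}$ — or of a single $\mathbf{D}_k^{(p)}$ — propagates only through the $\tfrac1n\sum$ (resp. $\tfrac1m\sum$) averages, which is what produces the $1/\sqrt n$ and $1/\sqrt m$ and keeps the only node-count dependence at $n=\min_i n_i$, and that summing over $p$ introduces no cross-interactions between distinct references inside one $s_{ik}$. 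A secondary bookkeeping point is keeping the total metric entropy in the additive ``GIN part $+$ reference part'' form throughout the Dudley integral, which is legitimate precisely because $\mathcal{F}_D$ is Lipschitz in its input so that the two covering errors add. Everything else — the bounded-difference inequality giving the $3\gamma\sqrt{\ln(2/\delta)/(2N)}$ term, the layer-by-layer peeling across the $L$ GIN layers, and the $\ell_{2,1}$/spectral-norm covering of each weight matrix — is identical to the $P=1$ proof and can be cited rather than repeated.
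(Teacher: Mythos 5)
Your proposal follows essentially the same route as the paper: the paper's proof likewise observes that summing $P$ MMD terms multiplies the Lipschitz constants in both the $\mathbf{H}$- and $\mathbf{D}$-arguments by $P$ (triangle inequality applied to \cref{lemma:cover_mmd}), propagates this through the covering-number bound of \cref{lemma:f_cover} to obtain the $P^2$ in $v_1$ and the $P$ inside the logarithm in $v_3$, and then reruns the Dudley/Rademacher argument of \cref{thm:generalization} verbatim. One minor bookkeeping remark: counting ``reference coordinates'' literally for the enlarged parameter space would give $PKm\bar{d}$ rather than $Km\bar{d}$ for $v_2$, but the paper's own proof also keeps $Km\bar{d}$, so your treatment is no looser than the paper's.
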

This is essentially a more general version of \cref{thm:generalization}. The following is a brief proof of this theorem.
\begin{proof}
The only difference between multiple reference distributions and a single reference distribution comes from the calculation of $s_{ij}$.
\begin{align*}
    |s_{ij} - s_{ij}'|  
    &= \left|\sum_{p=1}^P\left(\mathrm{MMD}^2\left(\mathbf{H}_i, \mathbf{D}_j^{(p)}\right) - \mathrm{MMD}^2\left(\mathbf{H}_i', \mathbf{D}_j^{(p)'}\right)\right)\right|  \\
    &\leq \sum_{p=1}^P\left|\mathrm{MMD}^2\left(\mathbf{H}_i, \mathbf{D}_j^{(p)}\right) - \mathrm{MMD}^2\left(\mathbf{H}_i', \mathbf{D}_j^{(p)'}\right)\right|  \\
    &\leq 4\sqrt{\theta}P\left(n^{-1/2}\|\mathbf{H}_i - \mathbf{H}_i'\|_2 + m^{-1/2} \|\mathbf{D}_j - \mathbf{D}_j'\|_2\right)\\
\end{align*}
Then with minor modifications of proof of \cref{lemma:f_cover}, the covering number of $\mathcal{F}$ is given by
\begin{equation*}
        \ln\mathcal{N}\left(\epsilon, \mathcal{F}, \rho\right) \leq  \frac{64\theta P^2KR_G}{n\epsilon^2} + Kmd\ln\left(\frac{24b_DP\sqrt{\theta N}}{\sqrt{m}\epsilon}\right).
\end{equation*}
Then the theorem can be proved using the same process as the proof of \cref{thm:generalization}.
\end{proof}
Let $\bar{\kappa}=\max_{i,l}\kappa_i^{(l)}$, $\bar{b}=\max_{i,l}\frac{b_i^{(l)}}{\kappa_i^{(l)}}$ and suppose $\delta$ is sufficiently large. The bound in
\cref{thm:multi_generalization} can be simplified to
\begin{equation}\label{eq:thm_multi_simplified}
\begin{aligned}
L_\gamma(f) \leq \hat{L}_\gamma(f) + \tilde{\mathcal{O}}\left(\frac{\sqrt{v_1} + \gamma\sqrt{Nv_2}}{N}\right) \leq \hat{L}_\gamma(f) + \tilde{\mathcal{O}}\left(\frac{\mu \bar{b}\|\mathbf{X}\|_2c^L(Lr)^{\frac{3}{2}}\bar{\kappa}^{Lr}P\sqrt{\theta K/n}}{N}+\gamma\sqrt{\frac{Kmd}{N}}\right) 
\end{aligned}
\end{equation}

Empirically, we observe that the training loss $\hat{L}_{\gamma}$ and the misclassification rate are nearly the same for small $P$ and large $P$ as shown in \cref{fig:acc}. Therefore, smaller $P$ implies tighter generalization bound in \eqref{eq:thm_multi_simplified}. This means that one reference distribution for each class ($P = 1$) may be the optimal choice.

Another explanation is that, the nodes embeddings of graphs in the same class can be regarded as samples drawn from a single discrete distribution, thus learning a single reference distribution is sufficient to provide high classification accuracy. On the other hand, the union of multiple distributions can be regarded as a single distribution.

\begin{figure}[ht]
\vskip 0.1in
\begin{center}
\centerline{\includegraphics[width=0.9\columnwidth]{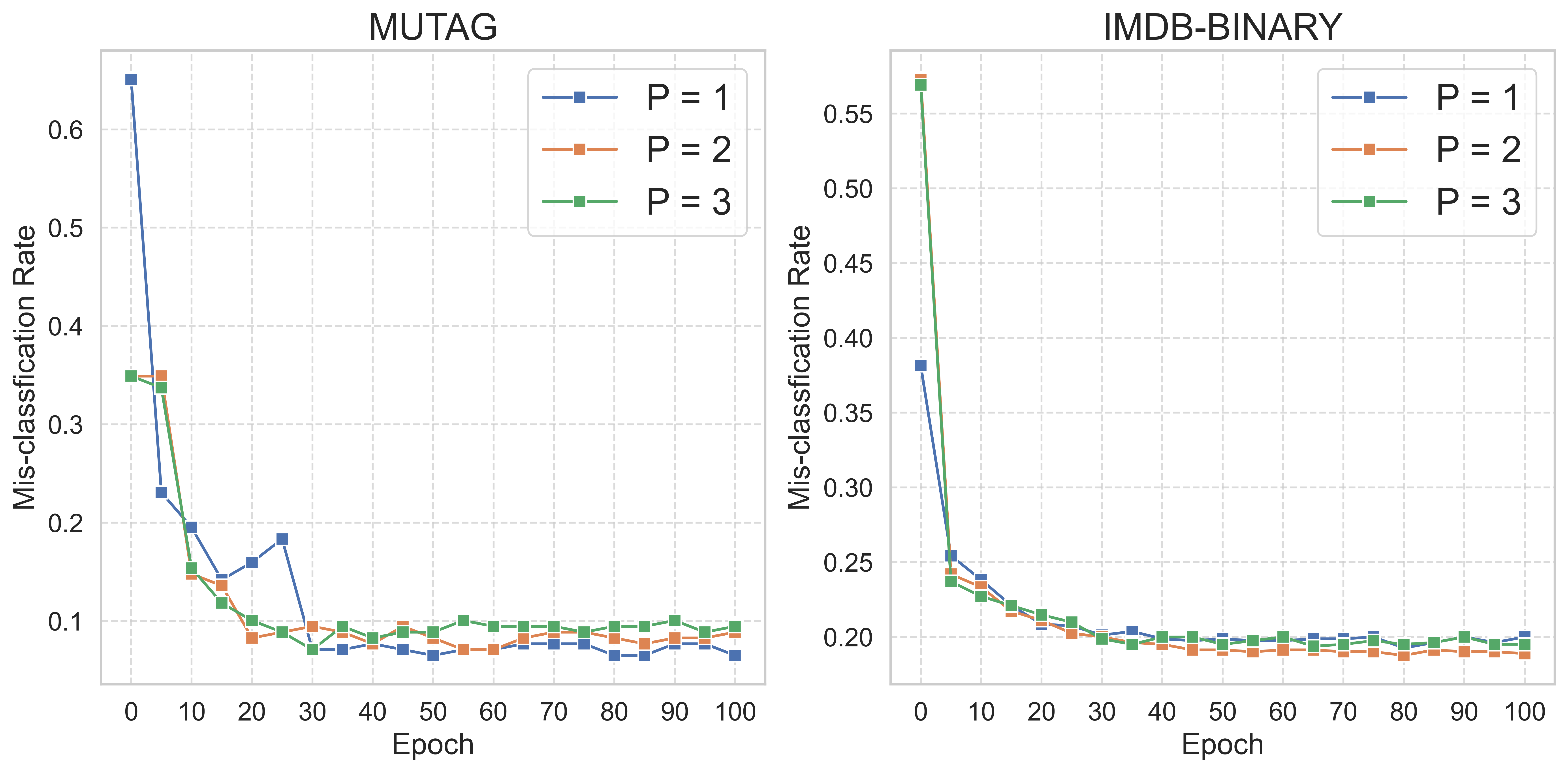}}
\caption{Training data misclassification rate on MUTAG (left) and IMDB-BINARY (right) with different numbers of references for each class ($P$). The effect of $P$ on the training misclassification rates is not obvious.}
\label{fig:acc}
\end{center}
\vskip -0.1in
\end{figure}

\section{Detailed Training Algorithm of GRDL}\label{app_algorithm}

In practice, since the scale of $\theta$ is different from the scale of other parameters in the model, a different learning rate is used to update it. Suppose Adam \citep{kingma2014adam} is used to optimize the parameters, then the training of GRDL model is presented in \cref{alg:training}.

\begin{algorithm}
    \caption{GRDL Training}
    \label{alg:training}
\begin{algorithmic}[1]
\STATE {\bfseries Input:} Graphs $\{G_i\}_{i=1}^N$, $\alpha_1$ the learning rate of $\mathbf{w}$ and $\mathcal{D}$, $\alpha_2$ the learning rate of $\theta$, batch size $B$.
\STATE Initialize $\mathbf{w}$, $\mathcal{D}$, and $\theta$.
    \REPEAT %\Comment{Begin training}
        \STATE Sample a minibatch $\{G_{i}: i\in\mathcal{B},|\mathcal{B}|=B\}$
        \STATE $\mathbf{s}_{i} \gets f_{\mathbf{w},\mathcal{D},\theta}(G_i),~ i\in\mathcal{B}$ %\Comment{Minibatch forward propagation}
        \STATE $\hat{\mathbf{y}}_i \gets \operatorname{softmax}(\mathbf{s}_{i}),~ i\in\mathcal{B}$
        \STATE $\begin{aligned}
            \mathcal{L}=-\frac{1}{B}\sum_{i\in {\mathcal{B}}} \sum_{k=1}^K y_{ik}\log{\hat{y}_{ik}}+\lambda\sum_{k'\neq k}\xi_{\theta}(\mathbf{D}_k,\mathbf{D}_{k'})\end{aligned}$
        \STATE $(g_{\mathbf{w}}, g_{\mathcal{D}}, g_\theta) \gets \nabla_{\mathbf{w},\mathcal{D},\theta} \mathcal{L}$
        \STATE $(\mathbf{w},\mathcal{D}) \gets (\mathbf{w},\mathcal{D}) + \alpha_1\cdot\mathrm{Adam}\left((\mathbf{w},{\mathcal{D}}), (g_{\mathbf{w}}, g_{\mathcal{D}})\right)$ 
        \STATE $\theta \gets \theta + \alpha_2\cdot\mathrm{Adam}(\theta, g_{ \theta})$
    \UNTIL{convergence conditions are met} % \Comment{End training}
\STATE {\bfseries Output:} $f_{\mathbf{w},{\mathcal{D}},\theta}$
\end{algorithmic}
\end{algorithm}

\section{More Experiments}\label{app:exp}
\subsection{Dataset Statistics}\label{app:dataset}
The statistics of the datasets are reported in Table \ref{tab:dataset}. PC-3, MCF-7, and ogbg-molhiv are three large graph datasets.
\begin{table}[ht]
\caption{Dataset Statistics}
\label{tab:dataset}
\vskip 0.1in
\begin{center}
\begin{small}
% \begin{sc}
\begin{tabular}{lcccccc}
    \toprule
    datasets& features              & \#graphs      & \#classes & min \#nodes   & median \#nodes    & max \#nodes\\
    \midrule
    MUTAG   & $\{0, \dots , 6 \}$   & 188           & 2         & 10            & 17                & 28 \\
    % \hline
    PROTEINS & $\mathbb{R}^{29}$     & 1113          & 2         & 2             & 13                & 63 \\
    % \hline
    NCI1    & $\{0, \dots , 36 \}$  & 4110          & 2         & 3             & 27                & 111 \\
    PTC-MR  & $\{0, \dots , 17 \}$  & 344           & 2         & 2             & 13                & 64 \\
    BZR     & $\mathbb{R}^{3}$      & 405           & 2         & 13            &35                 & 57 \\
    IMDB-B  & None                  & 1000          & 2         & 12            & 17                & 136 \\
    IMDB-M  & None                  & 1500          & 3         & 7             & 10                & 89 \\
    COLLAB  & None                  & 5000          & 3         & 32            &52                 & 492 \\ 
    \midrule
    PC-3    & $\{0, \dots , 28 \}$  & 27509         & 2         & 3             &24                 & 113 \\
    MCF-7   & $\{0, \dots , 28 \}$  & 27770         & 2         & 3             &24                 & 113 \\
    ogbg-molhiv  & $\mathbb{R}^{29}$     & 41127         & 2         & 2             &23                 & 222 \\
    \bottomrule
\end{tabular}
% \end{sc}
\end{small}
\end{center}
\end{table}

\subsection{Experiment Settings}\label{app_exp_set}
In the GNN literature, researchers typically perform 10-fold cross-validation and report the best average accuracy along with standard deviation \citep{xu2018gin, papp2021dropgin, maron2019provably}. But here, we adopt a different strategy used in \citep{vincent2022template}. We quantify the generalization capacities of models by performing a 10-fold cross-validation with a holdout test set which is never seen during training. The validation accuracy is tracked every 5 epochs, and the model that maximizes the validation accuracy is retained for testing. This setting is more realistic than a simple 10-fold CV and can better quantify models' generalization performances \citep{bengio2003no}. However, the test sets for MUTAG and PTC-MR contain only 18 and 34 graphs respectively, making them too small for assessing generalization ability. Therefore, for MUTAG and PTC-MR, we use 10-fold cross-validation following \citep{xu2018gin}. Notice that kernel-based methods do not require a stopping criterion dependent on a validation set, so we report results of 10-fold nested cross-validation repeated 10 times following \citep{titouan2019fgw}.

\subsection{Hyper-parameter Settings and Parameter Initializations}\label{app:hyper}
\paragraph{Model} For all the baselines, we adopt the hyper-parameters suggested in the original papers. For our methods, we use GIN \citep{xu2018gin} layers as the embedding network. Every GIN layer is an MLP of 2 layers ($r=2$) with batch normalization, whose number of units is validated in $\{32, 64\}$ for all datasets. The parameter $\lambda$ is validated in $\{0.1, 1\}$. We validate the number of GIN layers ($L$) in $\{3, 4, 5, 6, 7, 8, 9\}$. For each dataset, the references' dimension ($m$) is validated in the minimum number of nodes (G1), average of the minimum and median number of nodes (G2), median number of nodes (G3), average of the median and maximum number of nodes (G4), and the maximum number of nodes (G5) of graphs in the dataset. The reference for each class is initialized as follows: 1) If there is a $m$-node graph in the dataset that belongs to the corresponding class, then the reference is initialized as node embeddings of the graph. 2) If no graph in the class has $m$ nodes, then we perform K-Means clustering on the graphs of the class, and the $m$ clustering center is chosen to be the initial reference.
% We show MUTAG and BZR classification results under different settings in \cref{fig:mutag_hyper} and \cref{fig:bzr_hyper} respectively.
% \begin{figure}
%     \centering
%     \includegraphics[width=\linewidth]{Figures/mutag_hyper.png}
%     \caption{Classification performance on MUTAG dataset w.r.t number of references' support (a) and number of network layers (b)}
%     \label{fig:mutag_hyper}
% \end{figure}
% \begin{figure}
%     \centering
%     \includegraphics[width=\linewidth]{Figures/bzr_hyper.png}
%     \caption{Classification performance on BZR dataset w.r.t number of references' support (a) and number of network layers (b)}
%     \label{fig:bzr_hyper}
% \end{figure}
% From the figures we cannot observe obvious relationship between references' dimension $m$ and  classification accuracy. Gains from increasing the number of GIN layers are not obvious after $L=5$. 

\paragraph{Optimization} The models are trained with Adam optimizer with an initial learning rate $\alpha_1 = 10^{-3}$ for network weights and references. The learning rate $\alpha_1$ decays exponentially with a factor 0.95. Since the Gaussian kernel parameter $\theta$ is small in practice (around $10^{-3}$ in our model), it is hard to choose a learning rate for it. Therefore, we consider $\pi = \frac{1}{\theta}$ instead. $\pi$ is initialized to $500$ and its initial learning rate $\alpha_2$ is validated in $\{0.1, 1\}$. The batch size for all datasets is fixed to 32.

Detailed hyper-parameters setting can be found in \cref{tab:hyper}.
\begin{table}[ht]
    \caption{Hyper-parameter settings for experiment results in \cref{tab:pred}}.
    \label{tab:hyper}
    \vskip 0.1in
    \begin{center}
    \begin{small}
    \begin{sc}
    \begin{tabular}{lccccc}
        \toprule
         datasets   & \#layers  & reference dim & hidden channels & $\lambda$ & $\alpha_2$ \\ 
         \midrule
         MUTAG      & 5         & G2            & 32              & 1.0       & 1.0         \\      
         PROTEINS    & 5         & G3            & 32              & 1.0       & 0.1         \\
         NCI1       & 5         & G3            & 32              & 1.0       & 0.1  
         \\
         IMDB-B     & 5         & G3            & 32              & 1.0       & 1.0         \\
         IMDB-M     & 5         & G3            & 32              & 0.1       & 0.1         \\
         PTC-MR     & 5         & G3            & 32              & 1.0       & 0.1         \\
         BZR        & 5         & G3            & 32              & 1.0       & 0.1         \\
         COLLAB     & 6         & G3            & 32              & 1.0       & 0.1 
         \\
         PC-3       & 6         & G4            & 64              & 1.0       & 0.1         \\ 
         MCF-7      & 6         & G4            & 64              & 1.0       & 0.1         \\
         ogbg-molhiv     & 6         & G4            & 64              & 1.0       & 0.1         \\
         \bottomrule
    \end{tabular}    
    \end{sc}
    \end{small}
    \end{center}
    \vskip -0.1in
\end{table}

\subsection{Time Complexity} \label{app:time_complexity}
To provide a comprehensive understanding, we first show the forward propagation time complexity of the three models for a single graph \( G = (\mathbf{A}, \mathbf{X}), \mathbf{A} \in \mathbb{R}^{n\times n}, \mathbf{X} \in \mathbb{R}^{n\times d} \). Given that all three models employ a Graph Neural Network (GNN) for obtaining node embeddings, we denote the complexity and the number of parameters of the GNN embedding part as \( C_1 \) and \( N_1 \), respectively. Additionally, both OT-GNN and TFGW are augmented with an MLP for classification, introducing an extra complexity \( C_2 \) and additional parameters \( N_2 \). In the case of our GRDL, the number of references aligns with the number of classes \( K \). 

Let's assume the number of references for OT-GNN is \( K_1 \), and for TFGW, it is \( K_2 \). Notably, TFGW and OT-GNN usually choose \( K_1 = 2K \) and \( K_2 = 2K \). Additionally, assuming that all references \( \mathbf{D}_i \in \mathbb{R}^{m\times d} \). Since the GW distance is iteratively computed in \cite{vincent2022template}, we denote the number of iterations for convergence as \( T \). The time complexity and the number of parameters are outlined in \cref{tab:complexity}.

\begin{table*}[ht]
\caption{Time complexity and number of parameters for GRDL, OT-GNN and TFGW.}
\label{tab:complexity}
\vskip 0.1in
\begin{center}
\begin{small}
\begin{sc}
\begin{tabular}{lccc}
\toprule
    Model & Complexity & Parameters \\
\midrule
    GRDL  & $C_1+ \mathcal{O}\left(K(n^2 + mn + m^2)\right)$ & $N_1 + Kmd$\\
    OT-GNN  & $C_1+ \mathcal{O}\left(K_1(m+n)^3\log{(m+n)}\right) + C_2$ & $N_1 + K_1md + N_2$ \\
    TFGW    & $C_1+ \mathcal{O}\left(TK_2(m^2n + n^2m)\right) + C_2$ & $N_1 + K_2md + N_2$\\
\bottomrule
\end{tabular}
\end{sc}
\end{small}
\end{center}
\vskip -0.1in
\end{table*}

Notice that Wasserstein distance can be approximated by sinkhorn iterations with complexity $O((m+n)^2)$ per iteration \citep{cuturi2013sinkhorn}. But in practice, the exact calculation with $O((m+n)^3\log{(m+n)})$ complexity empirically gives better performance in terms of both precision and speed, so it is implemented in the original paper of OT-GNN \citep{chen2021otgnn}. Theoretically, our GRDL has lower prediction time complexity and a reduced parameter count. \cref{tab:pred_time} shows the empirical prediction time per graph
% \begin{table}[ht]
% \caption{Average prediction time per graph ($10^{-3}$ seconds).}
% \label{tab:pred_time}
% \vskip 0.1in
% \begin{center}
% \begin{small}
% \begin{sc}
% \begin{tabular}{lccc}
% \toprule
%     \multirow{2}{3em}{\textbf{Dataset}} & \multicolumn{3}{c}{\textbf{Method}} \\
% \cmidrule{2-4}
%         & GRDL              & OT-GNN    & TFGW \\
% \midrule
% MUTAG   & \textbf{1.2}      & 25.6      & 53.5 \\
% PROTEINS & \textbf{1.3}      & 29.4      & 48.3 \\
% NCI1    & \textbf{1.2}      & 25.1      & 83.5 \\ 
% IMDB-B  & \textbf{1.2}      & 16.3      & 66.0 \\
% IMDB-M  & \textbf{1.1}      & 16.3      & 66.5 \\
% PTC-MR  & \textbf{1.8}      & 15.0      & 58.4 \\
% BZR     & \textbf{2.1}      & 23.8      & 76.2 \\
% \midrule
% Average & \textbf{1.4}      & 21.6      & 64.6 \\
% \bottomrule
% \end{tabular}
% \end{sc}
% \end{small}
% \end{center}
% \vskip -0.1in
% \end{table}

\begin{table}[ht]
\vskip 0.1in
\begin{minipage}{.5\linewidth}
\caption{Average prediction time per graph ($10^{-3}$ seconds).} \label{tab:pred_time}
\begin{center}
\begin{small}
\begin{sc}
\begin{tabular}{lccc}
\toprule
    \multirow{2}{3em}{\textbf{Dataset}} & \multicolumn{3}{c}{\textbf{Method}} \\
\cmidrule{2-4}
        & GRDL              & OT-GNN    & TFGW \\
\midrule
MUTAG   & \textbf{1.2}      & 25.6      & 53.5 \\
PROTEINS & \textbf{1.3}      & 29.4      & 48.3 \\
NCI1    & \textbf{1.2}      & 25.1      & 83.5 \\ 
IMDB-B  & \textbf{1.2}      & 16.3      & 66.0 \\
IMDB-M  & \textbf{1.1}      & 16.3      & 66.5 \\
PTC-MR  & \textbf{1.8}      & 15.0      & 58.4 \\
BZR     & \textbf{2.1}      & 23.8      & 76.2 \\
\midrule
Average & \textbf{1.4}      & 21.6      & 64.6 \\
\bottomrule
\end{tabular}
\end{sc}
\end{small}
\end{center}
\end{minipage}
\begin{minipage}{.5\linewidth}
\caption{Comparison of GRDL with/without regularization on references norm.} \label{tab:reg_norm}
\begin{center}
\begin{small}
\begin{sc}
\begin{tabular}{lcc}
\toprule
    \multirow{2}{3em}{\textbf{Dataset}} & \multicolumn{2}{c}{\textbf{Method}} \\
\cmidrule{2-3}
        & GRDL                  & GRDL-R   \\
\midrule
MUTAG   & \textbf{92.1$\pm$5.9} & 91.6$\pm$5.5     \\
PROTEINS & \textbf{82.6$\pm$1.2} & 80.3$\pm$1.2     \\
NCI1    & \textbf{80.4$\pm$0.8} & 78.6$\pm$0.9     \\ 
IMDB-B  & \textbf{74.8$\pm$2.0} & 73.6$\pm$1.6     \\
IMDB-M  & \textbf{52.9$\pm$1.8} & 48.3$\pm$1.6     \\
PTC-MR  & \textbf{68.3$\pm$5.4} & 68.1$\pm$6.4      \\
BZR     & \textbf{92.0$\pm$1.1} & 90.7$\pm$2.4     \\
COLLAB  & \textbf{79.8$\pm$0.9} & 77.9$\pm$0.7      \\
\bottomrule
\end{tabular}
\end{sc}
\end{small}
\end{center}
\end{minipage}
\vskip -0.1in
\end{table}

\subsection{Ablation Study}\label{app:ablation}
We consider two variants of GRDL. The first one is GRDL with a fixed $\theta = 10^{-2}$ in the Gaussian kernel. The other is GRDL with $\lambda = 0$ in \eqref{eq:main}, which does not have the discriminativeness constraint on the references. We also include another baseline where we first use sum pooling over node embeddings and get the graph embedding vectors. The graph vectors are then used to compare with references (vectors in this case) with discrimination loss. The classification results of benchmark datasets are given in \cref{tab:ablation}. The original model with learnable $\theta$ and discriminativeness constraints consistently outperforms the other two.

\begin{table}[ht]
\caption{Classification accuracy of ablation methods. Bold text indicates the best mean accuracy.}
\label{tab:ablation}
\vskip 0.1in
\begin{center}
\begin{small}
\begin{sc}
\begin{tabular}{lcccc}
\toprule
\multirow{2}{3em}{\textbf{Dataset}} &  \multicolumn{4}{c}{\textbf{Method}}\\
\cmidrule{2-5}
        & GRDL                  & GRDL fixed $\theta$ & GRDL $\lambda=0$ & GRDL+Sum $\lambda=0$\\
\midrule
MUTAG   & \textbf{92.1$\pm$5.9} & 90.4$\pm$6.4  & 89.9$\pm$4.9 & 89.9$\pm$6.0\\
PROTEINS& \textbf{82.6$\pm$1.2} & 81.8$\pm$0.9  & 81.8$\pm$1.3 & 78.4$\pm$0.6 \\
NCI1    & \textbf{80.4$\pm$0.8} & 80.2$\pm$2.2  & 80.0$\pm$1.6 & 77.2$\pm$1.7 \\
IMDB-B  & \textbf{74.8$\pm$2.0} & 72.8$\pm$1.8  & 73.1$\pm$1.5 & 71.6$\pm$5.2 \\
IMDB-M  & \textbf{52.9$\pm$1.8} & 52.1$\pm$1.2  & 51.3$\pm$1.4 & 49.8$\pm$5.4 \\
PTC-MR  & \textbf{68.3$\pm$5.4} & 66.6$\pm$5.7  & 66.6$\pm$5.9 & 62.5$\pm$6.3 \\
BZR     & \textbf{92.0$\pm$1.1} & 90.1$\pm$1.5  & 89.5$\pm$2.3 & 85.3$\pm$1.5 \\
COLLAB  & \textbf{79.8$\pm$0.9} & 79.5$\pm$0.7  & 79.0$\pm$1.0 & 77.1$\pm$0.9 \\
\bottomrule
\end{tabular}
\end{sc}
\end{small}
\end{center}
\vskip -0.1in
\end{table}

\subsection{Influence of $\theta$}\label{app:influence_theta}
In our model, we initialized the Gaussian kernel hyper-parameter $\theta$ to $2\times10^{-3}$ and it was adaptively learned during training. Actually, all values between $1\times10^{-4}$ and $1\times10^{-1}$ give similar performance, as it is adaptively adjusted in the training. The initialization of the neural network parameters and the reference distributions cannot guarantee that $\mathbf{x}$ is close to $\mathbf{x}'$. If $\theta$ is too large, the Gaussian kernel $k(\mathbf{x}, \mathbf{x}') = \exp{(-\theta\|\mathbf{x}-\mathbf{x}'\|_2^2)}$ will be too sharp, which will lead to almost zero values. Hence, MMD will fail to effectively quantify the distance between the embeddings and reference distributions, as shown in Table~\ref{tab:theta}.
\begin{table}[ht]
\caption{Classification accuracy of MUTAG dataset with different $\theta$.}
\label{tab:theta}
\begin{center}
\begin{small}
\begin{tabular}{lcccccccc}
\toprule
$\theta$ & $1\times10^{-4}$ & $1\times10^{-3}$ & $1\times10^{-2}$ & $1\times10^{-1}$ & $1$ & $1\times10^{1}$ & $1\times10^{2}$ & $1\times10^{3}$\\
\midrule
Accuracy & 0.9096 & 0.9149 & 0.9113 & 0.9113 & 0.8254 & 0.6822 & 0.5737 & 0.3345\\
\bottomrule
\end{tabular}
\end{small}
\end{center}
\end{table}

% However, it is still important to understand what scale the $\theta$ should have to ensure good performance.

\subsection{Experiments on The Generalization Error Bound}\label{exp:generalization}
\paragraph{Use moderate-size message passing network} We choose the training loss to be the cross-entropy loss ($\lambda=0$) and $l_\gamma$ is chosen the same as the training loss. We validate the number of GIN layers $L\in \{3, 4, 5, 6\}$ and the number of MLP layers for each GIN layer $r \in \{2, 3, 4\}$. $L_{\gamma}$ is set to be the validation loss and $\hat{L}_{\gamma}$ is set to be the training loss. From \cref{fig:loss}, we have the following observations
\begin{itemize}
    \item For any fixed $r$, if $L$ is increasing, the empirical risk $\hat{L}_{\gamma}$ increases, and the population risk  $L_{\gamma}$ either increases ($r=3$ and $r=4$) or first decreases then increases ($r=2$).
    \item For any fixed $L$, if $r$ is increasing, the population risk $L_{\gamma}$ first decreases then increases in most of the cases, and the empirical risk $\hat{L}_{\gamma}$ decreases in most of the cases.
\end{itemize}
These observations align with our bound and support our claim that moderate-size GNN should be used.

\begin{figure}[ht]
\vskip 0.1in
\begin{center}
\centerline{\includegraphics[width=\columnwidth]{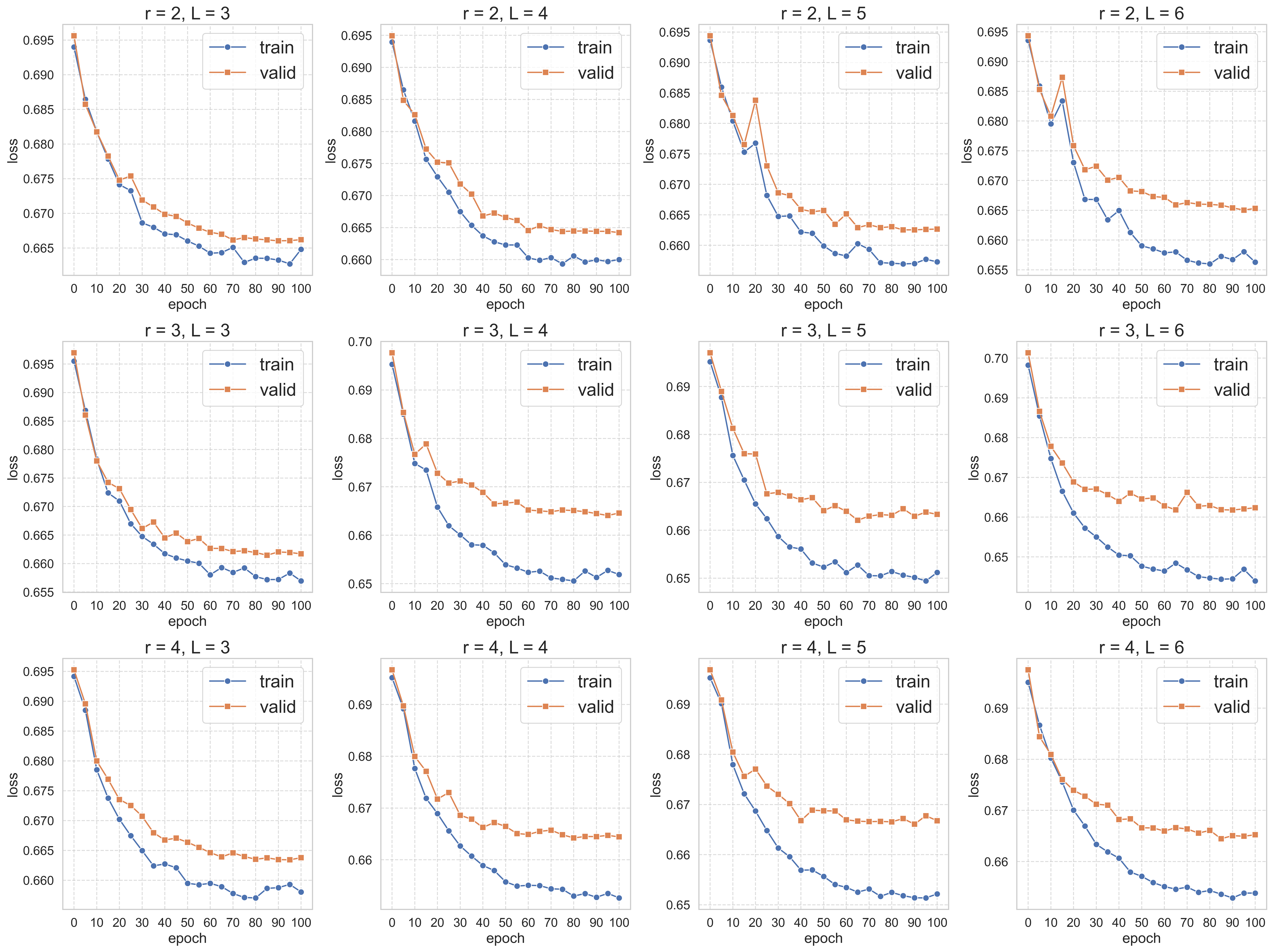}}
\caption{The blue and orange lines denote the training error $\hat{L}_{\gamma}$ and validation error $L_{\gamma}$, respectively, of GRDL with $r \in \{2, 3, 4\}, L\in \{3, 4, 5, 6\}$}.
\label{fig:loss}
\end{center}
\vskip -0.1in
\end{figure}

\paragraph{Use moderate-size references} We validate the reference size ($m$) in our experiments on real datasets, as detailed in \cref{app:hyper}. The results in \cref{app:hyper} show that moderate-size references (G2, G3, G4) provide better generalization results. Here, we present the classification results for MUTAG and PROTEINS by choosing $m$ from more fine-grained sets. For MUTAG, $m$ is chosen from ${1, 2, \dots, 30}$. For PROTEINS, $m$ is chosen from ${1, 3, 5, \dots, 61}$. The figures in \cref{fig:vary_m} show that our model performs the best when a moderate $m$ is used.

\begin{figure}[ht]
    \centering
    \begin{subfigure}[b]{0.45\textwidth}
        \centering
        \includegraphics[width=\textwidth]{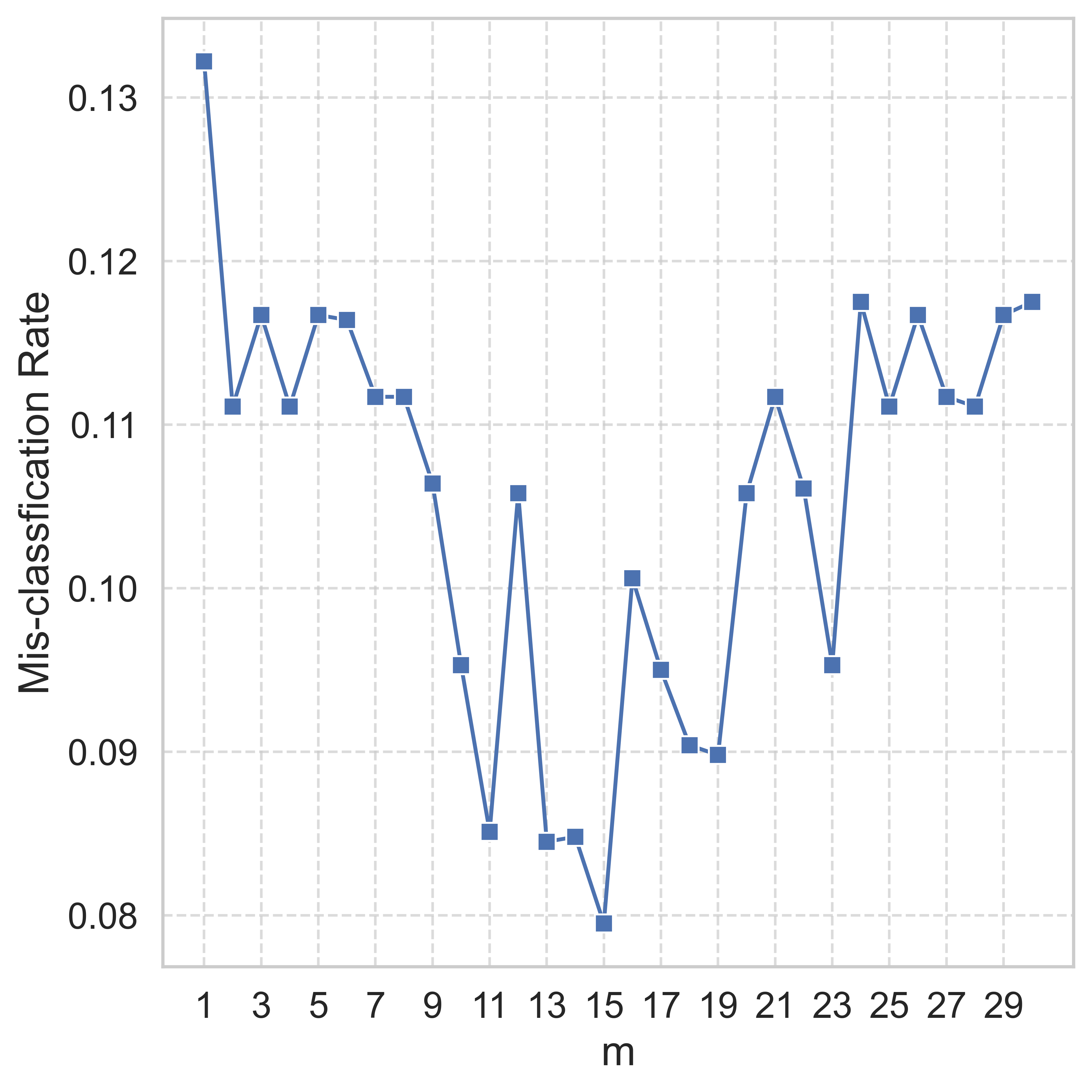}
        \caption{MUTAG}
        \label{subfig:vary_m_mutag}
    \end{subfigure}
    \hfill
    \begin{subfigure}[b]{0.45\textwidth}
        \centering
        \includegraphics[width=\textwidth]{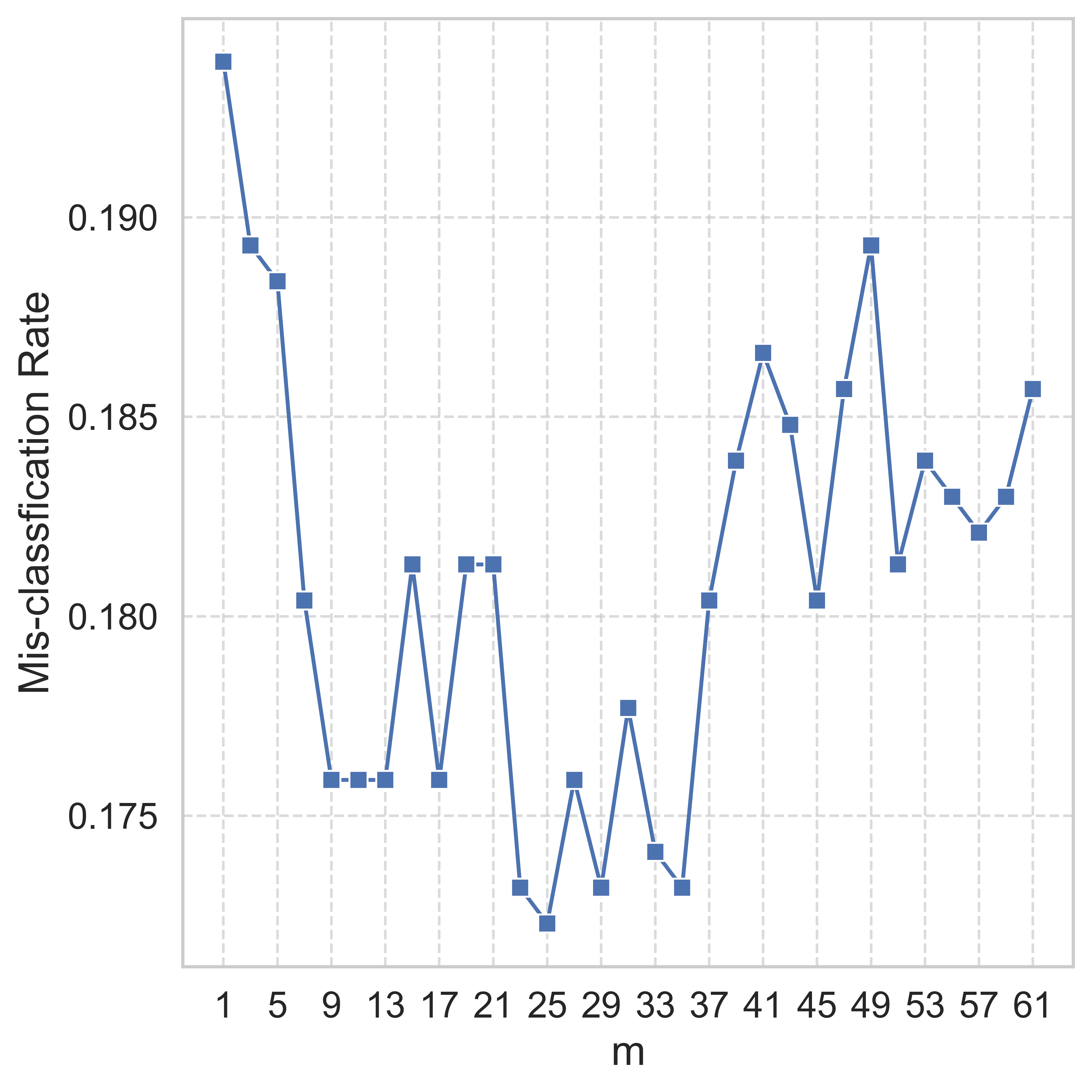}
        \caption{MUTAG}
        \label{subfig:vary_m_protein}
    \end{subfigure}
    \caption{Misclassification rate of GRDL on MUTAG (\cref{subfig:vary_m_mutag}) and PROTEINS (\cref{subfig:vary_m_protein}) using different reference sizes $m$. The figures show that our model performs the best when a moderate $m$ is used.}
    \label{fig:vary_m}
\end{figure}

% \begin{figure}
%      \centering
%      \begin{subfigure}[b]{0.3\textwidth}
%          \centering
%          \includegraphics[width=\textwidth]{graph1}
%          \caption{$y=x$}
%          \label{fig:y equals x}
%      \end{subfigure}
%      \hfill
%      \begin{subfigure}[b]{0.3\textwidth}
%          \centering
%          \includegraphics[width=\textwidth]{graph2}
%          \caption{$y=3\sin x$}
%          \label{fig:three sin x}
%      \end{subfigure}
%      \hfill
%      \begin{subfigure}[b]{0.3\textwidth}
%          \centering
%          \includegraphics[width=\textwidth]{graph3}
%          \caption{$y=5/x$}
%          \label{fig:five over x}
%      \end{subfigure}
%         \caption{Three simple graphs}
%         \label{fig:three graphs}
% \end{figure}

\paragraph{Regularization on references norm barely helps} Consider a model regularizing the norm of references $\|\mathbf{D}\|_2$ (GRDL-R)
\begin{equation}
    \min \mathcal{L}_{\text{CE}}+\lambda_1 \mathcal{L}_{\text{Dis}} +  \lambda_2 \|\mathbf{D}\|_2
\end{equation}
The hyper-parameter $\lambda_2$ is set as $0.01$. \cref{tab:reg_norm} compares the empirical results of GRDL model and the results of GRDL-R model.
From the table, we can see that GRDL performs better than GRDL-R on all datasets, which verifies our discussion in \cref{sec:theo_discussion}. Therefore, the regularization of references barely helps in our model.

\subsection{Parameter Number and Time Cost Comparison with Graph Transformers}\label{exp:transformer}
\cref{tab:transformer_num_param} compares the number of parameters and training time per epoch of our GRDL with two graph transformers method. We can see that our GRDL has significantly fewer parameters and training time, making it preferable. All experiments are conducted on one RTX3080.
\begin{table}[ht]
\caption{Number of parameters and time cost per training epoch (seconds) of GRDL and Graph transformers.}
\label{tab:transformer_num_param}
\vskip 0.1in
\begin{center}
\begin{small}
\begin{tabular}{lcccccc}
\toprule
    \multirow{3}{3em}{\textbf{Dataset}} & \multicolumn{6}{c}{\textbf{Method}} \\
\cmidrule{2-7}
        & \multicolumn{2}{c}{GRDL}      & \multicolumn{2}{c}{SAT}       & \multicolumn{2}{c}{Graphormer} \\
        & \# Parameters & Time          & \# Parameters & Time          & \# Parameters & Time \\
\midrule
MUTAG   & \textbf{11k}  & \textbf{0.11} & 663k          & 0.64          & 647k          & 0.59\\
PROTEINS& \textbf{12k}  & \textbf{0.52} & 666k          & 3.51          & 650k          & 3.31\\
NCI1    & \textbf{13k}  & \textbf{1.77} & 667k          & 12.08         & 651k          & 11.37\\ 
IMDB-B  & \textbf{15k}  & \textbf{0.42} & 680k          & 3.69          & 664k          & 3.42\\
IMDB-M  & \textbf{13k}  & \textbf{0.69} & 674k          & 4.32          & 658k          & 4.06\\
PTC-MR  & \textbf{12k}  & \textbf{0.17} & 663k          & 1.17          & 647k          & 0.97\\
BZR     & \textbf{11k}  & \textbf{0.21} & 665k          & 1.26          & 649k          & 1.09\\
COLLAB  & \textbf{31k}  & \textbf{3.71} & 725k          & 17.45         & 709k          & 16.89\\
\midrule
Average & \textbf{15k}  & \textbf{0.95} & 675k          & 5.51          & 659k          & 5.21\\
\bottomrule
\end{tabular}
\end{small}
\end{center}
\vskip -0.1in
\end{table}

\section{Proof Setups}\label{app:proof_setup}
\textbf{Vector and matrix norms}~~The $\ell_2$-norm $\|\cdot\|_2$ is always computed entry-wise; thus, for a matrix, it corresponds to the Frobenius norm. The metric $\rho$ of function spaces is defined as the $\ell_2$-norm of the difference between the outputs of functions given some input $X$, i.e.,
\begin{equation}\label{def:metric_rho}
    \rho(f_1, f_2) = \|f_1(X) - f_2(X)\|_2.
\end{equation}
Finally, let $\|\cdot\|_{\sigma}$ be the spectral norm and $\|\cdot\|_{p, q}$ be the $(p, q)$ matrix norm defined by $\|\mathbf{A}\|_{p, q} := \|(\|\mathbf{A}_{:,1}\|_p, \dots, \|\mathbf{A}_{:,m}\|_p)\|_q$ for $\mathbf{A}\in\mathbb{R}^{d \times m}$.

\textbf{Rademacher complexity}~~Rademacher complexity is a standard complexity measure of hypothesis function space. Given dataset $\mathcal{G}$ and hypothesis function space $\mathcal{F}$, the Rademacher complexity is defined as
\begin{equation}
    \mathcal{R}_{\mathcal{G}}(\mathcal{F}) := \E_{\sigma_1, \dots, \sigma_N}\left[\sup_{f\in \mathcal{F}}\frac{1}{N}\sum_{i=1}^N\sigma_i f(G_i)\right]
\end{equation}
where $\sigma_1, \dots, \sigma_N$ are independent Rademacher variables.

Then the bound can be derived with the help of the following lemma:
\begin{lemma}\label{lemma:general_upper_bound}
Given hypothesis function space $\mathcal{F}$ that maps a graph $G \in \mathcal{X}$ to $\mathbb{R}^K$ and any $\gamma > 0$, define $\mathcal{F}_\gamma := \{(G, y) \mapsto l_{\gamma}(f(G), y): f \in \mathcal{F}\}$. Then, with probability at least $1-\delta$ over a sample $\mathcal{G}$ of size $N$, every $f \in \mathcal{F}$ satisfies $L_\gamma(f) \leq \hat{L}_\gamma(f) + 2\mathcal{R}_{\mathcal{G}}(\mathcal{F_\gamma}) + 3\gamma\sqrt{\frac{\ln{(2/\delta)}}{2N}}$.
\end{lemma}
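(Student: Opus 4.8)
The plan is to prove \cref{lemma:general_upper_bound} by the classical uniform-convergence route: a bounded-differences (McDiarmid) concentration step followed by the symmetrization argument. Note first that the graph structure is irrelevant here --- the statement is purely learning-theoretic once we regard $\mathcal{F}_\gamma$ as an abstract class of $[0,\gamma]$-valued functions on $\mathcal{X}\times\mathcal{Y}$. Define the one-sided uniform deviation $\Phi(\mathcal{G}) := \sup_{f\in\mathcal{F}}\big(L_\gamma(f) - \hat{L}_\gamma(f)\big)$. Since $L_\gamma(f)-\hat{L}_\gamma(f)\le \Phi(\mathcal{G})$ for every $f\in\mathcal{F}$, it suffices to show that with probability at least $1-\delta$ we have $\Phi(\mathcal{G}) \le 2\mathcal{R}_{\mathcal{G}}(\mathcal{F}_\gamma) + 3\gamma\sqrt{\ln(2/\delta)/(2N)}$.

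First I would check the bounded-differences property: replacing a single sample $(G_i,y_i)$ by $(G_i',y_i')$ changes $\hat{L}_\gamma(f)$ by at most $\gamma/N$ for every $f$ (because $0\le l_\gamma\le\gamma$), hence changes $\Phi$ by at most $\gamma/N$. McDiarmid's inequality then gives, with probability at least $1-\delta/2$, $\Phi(\mathcal{G}) \le \E[\Phi(\mathcal{G})] + \gamma\sqrt{\ln(2/\delta)/(2N)}$. Next, the standard ghost-sample symmetrization --- introduce an independent copy $\mathcal{G}'$, write $L_\gamma(f)=\E_{\mathcal{G}'}[\hat{L}'_\gamma(f)]$, move the supremum inside via Jensen, and insert Rademacher signs $\sigma_1,\dots,\sigma_N$ --- yields $\E[\Phi(\mathcal{G})] \le 2\,\E_{\mathcal{G},\sigma}\big[\sup_{f\in\mathcal{F}}\tfrac1N\sum_{i=1}^N \sigma_i\,l_\gamma(f(G_i),y_i)\big] = 2\,\E_{\mathcal{G}}\!\left[\mathcal{R}_{\mathcal{G}}(\mathcal{F}_\gamma)\right]$. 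Finally, to replace the expected Rademacher complexity by the empirical one in the statement, observe that $\mathcal{G}\mapsto\mathcal{R}_{\mathcal{G}}(\mathcal{F}_\gamma)$ also has bounded differences with constant $\gamma/N$, so a second application of McDiarmid gives $\E_{\mathcal{G}}[\mathcal{R}_{\mathcal{G}}(\mathcal{F}_\gamma)] \le \mathcal{R}_{\mathcal{G}}(\mathcal{F}_\gamma) + \gamma\sqrt{\ln(2/\delta)/(2N)}$ with probability at least $1-\delta/2$. A union bound over the two events of probability $\delta/2$ combines the pieces, and the prefactor $3 = 1 + 2$ on the $\gamma\sqrt{\ln(2/\delta)/(2N)}$ term arises from one McDiarmid deviation on $\Phi$ plus twice the McDiarmid deviation on $\mathcal{R}_{\mathcal{G}}$ (the factor two coming from the $2\mathcal{R}_{\mathcal{G}}$).

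I do not expect any genuine obstacle in this lemma --- it is essentially the textbook empirical-Rademacher generalization bound (cf.\ \citet{mohri2018foundations}). The only points requiring a little care are bookkeeping ones: carrying the range $[0,\gamma]$ rather than $[0,1]$, so that the bounded-differences constants and hence the final deviation term pick up the factor $\gamma$; and noting that at this stage we use only boundedness of $l_\gamma$, not its $\mu$-Lipschitzness, so no contraction/Talagrand lemma is invoked here. The real analytical work --- controlling $\mathcal{R}_{\mathcal{G}}(\mathcal{F}_\gamma)$ for the GRDL hypothesis class via Dudley's entropy integral together with covering-number bounds for the GIN layers and the MMD reference layer --- is what feeds into \cref{thm:generalization} and is carried out separately.
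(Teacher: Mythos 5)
Your proof is correct and is exactly the standard argument (McDiarmid on the uniform deviation, ghost-sample symmetrization, then a second McDiarmid to pass from the expected to the empirical Rademacher complexity, with the $3=1+2$ bookkeeping for the $[0,\gamma]$ range); the paper does not prove this lemma at all but simply cites it as the standard tool from \citet{mohri2018foundations}, which is precisely the proof you reconstruct. No gaps.
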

This Lemma is a standard tool in Rademacher complexity \citep{mohri2018foundations}. The only problem left is to calculate the Rademacher complexity $\mathcal{R}_{\mathcal{G}}(\mathcal{F_\gamma})$.

\textbf{Covering number complexity bounds}~~
Direct calculation of the Rademacher complexity is often hard and the covering number is typically used to upper bound it. $V$ is an $\epsilon$-cover of $U$ with respect to some metric $\varrho$ if for all $v \in U$, there exists $v' \in V$ such that $\varrho(v, v') \leq \epsilon$, meaning
\begin{equation}
    \sup_{v\in U}\min_{v' \in V} \varrho(v, v') \leq \epsilon.
\end{equation}
The covering number $\mathcal{N}\left(\epsilon, U,\varrho \right)$ is defined as the least cardinality of the subset $V$. 
% Choices of $U$ in the present work include both the function space $\mathcal{F_\gamma}$ and its image $\mathcal{F_{\gamma|G}}$ given the dataset $\mathcal{G}$. Since $f\in\mathcal{F_\gamma} \Leftrightarrow f(G) \in \mathcal{F_{\gamma|G}}$ for any $G\in \mathcal{G}$, by the definition of metric $\rho$ \eqref{def:metric_rho}, we can write a trivial correspondence between the output and function class points of view as follows:
% \begin{equation}
%     \mathcal{N}\left(\epsilon, \mathcal{F}_\gamma, \rho\right) = \mathcal{N}\left(\epsilon, \mathcal{F_{\gamma|G}}, \|\cdot\|_2\right) 
% \end{equation}
% They are used interchangeably in the present work. 
With covering number, the Rademacher complexity is upper bounded by the following Dudley entropy integral:
\begin{lemma}[Lemma A.5 of \cite{bartlett2017spectrally}, reformulated]\label{lemma:dudley}
    Let $\mathcal{F_\gamma}$ be a real-valued function class taking values in $[0, \gamma]$, and assume that $\mathbf{0} \in \mathcal{F_\gamma}$. Then
    \begin{equation*}
        \mathcal{R}_{\mathcal{G}}(\mathcal{F_\gamma}) \leq \inf_{\alpha > 0} \left(\frac{4\alpha\gamma}{\sqrt{N}}+\frac{12}{N}\int_{\gamma\alpha}^{\gamma\sqrt{N}} \sqrt{\ln \mathcal{N}\left(\epsilon, \mathcal{F}_\gamma, \rho\right)} \,d\epsilon \right).
    \end{equation*}
\end{lemma}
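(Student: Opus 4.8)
The statement is Lemma~A.5 of \citet{bartlett2017spectrally} rewritten for a loss class with range $[0,\gamma]$ rather than $[-1,1]$, so the plan is a short reduction by rescaling. Since every $g\in\mathcal{F}_\gamma$ lies in $[0,\gamma]$ and $\mathbf{0}\in\mathcal{F}_\gamma$ by hypothesis, the class $\bar{\mathcal{F}}:=\gamma^{-1}\mathcal{F}_\gamma$ takes values in $[0,1]\subseteq[-1,1]$ and still contains $\mathbf{0}$, so the cited lemma gives $\mathcal{R}_{\mathcal{G}}(\bar{\mathcal{F}})\le\inf_{\alpha>0}\bigl(\tfrac{4\alpha}{\sqrt{N}}+\tfrac{12}{N}\int_{\alpha}^{\sqrt{N}}\sqrt{\ln\mathcal{N}(\epsilon,\bar{\mathcal{F}},\rho)}\,d\epsilon\bigr)$. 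Two homogeneity facts close the argument: $\mathcal{R}_{\mathcal{G}}(\mathcal{F}_\gamma)=\gamma\,\mathcal{R}_{\mathcal{G}}(\bar{\mathcal{F}})$ (the Rademacher average is positively homogeneous), and $\mathcal{N}(\epsilon,\bar{\mathcal{F}},\rho)=\mathcal{N}(\gamma\epsilon,\mathcal{F}_\gamma,\rho)$ (scaling the class scales the optimal cover and the metric $\rho$ linearly). Substituting $u=\gamma\epsilon$ in the entropy integral turns $[\alpha,\sqrt{N}]$ into $[\gamma\alpha,\gamma\sqrt{N}]$, and multiplying everything by $\gamma$ (relabeling $\alpha$) reproduces the claimed bound.

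For completeness I would also outline the self-contained chaining/Dudley argument, since the constants above are inherited from the cited statement. Condition on the sample and view $\mathcal{F}_\gamma$ as a set $T\subseteq\mathbb{R}^N$ of $\rho$-diameter $\le\gamma\sqrt{N}$. With dyadic radii $\epsilon_j=\gamma\sqrt{N}\,2^{-j}$, pick minimal $\epsilon_j$-covers $V_j$ (so $V_0=\{\mathbf{0}\}$ works) and nearest-point maps $\pi_j$, and telescope $v=\sum_{j\ge1}(\pi_j v-\pi_{j-1}v)$, so that $\mathcal{R}_{\mathcal{G}}(\mathcal{F}_\gamma)=\tfrac1N\E_\sigma\sup_{v\in T}\langle\sigma,v\rangle\le\tfrac1N\sum_{j\ge1}\E_\sigma\sup_{v\in T}\langle\sigma,\pi_j v-\pi_{j-1}v\rangle$. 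Each level's increments range over at most $|V_j|^2$ vectors of norm $\le\epsilon_j+\epsilon_{j-1}=3\epsilon_j$, so Massart's finite-class lemma bounds the $j$-th term by $\tfrac{6\epsilon_j}{N}\sqrt{\ln\mathcal{N}(\epsilon_j,\mathcal{F}_\gamma,\rho)}$. Truncating the chain at a level $j_0$ with $\epsilon_{j_0}=\gamma\alpha$ leaves a tail error $\le\epsilon_{j_0}/\sqrt{N}=\gamma\alpha/\sqrt{N}$ by Cauchy--Schwarz together with $\E_\sigma\|\sigma\|_2\le\sqrt{N}$ (this becomes the $\tfrac{4\alpha\gamma}{\sqrt{N}}$ term), while monotonicity of $\epsilon\mapsto\ln\mathcal{N}(\epsilon,\mathcal{F}_\gamma,\rho)$ lets one compare the remaining dyadic sum to $\tfrac{12}{N}\int_{\gamma\alpha}^{\gamma\sqrt{N}}\sqrt{\ln\mathcal{N}(\epsilon,\mathcal{F}_\gamma,\rho)}\,d\epsilon$.

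Because this lemma is classical, there is no genuine mathematical obstacle, only bookkeeping. In the reduction route the one thing to verify is that the normalization of the cited Lemma~A.5 matches ours, namely that it uses an empirical Rademacher average carrying the $1/N$ factor and takes $\rho$ to be the \emph{un-normalized} $\ell_2$ norm on the $N$ sample points; this is precisely what makes the entropy integral terminate at $\gamma\sqrt{N}$ and the truncation residual scale as $\gamma\alpha/\sqrt{N}$. In the chaining route the delicate step is pinning down the universal constants $4$ and $12$: one must track the factor $2$ from passing $|V_j|\mapsto|V_j|^2$ inside the logarithm, the $\sqrt{2}$ in Massart's lemma, and the loss incurred in replacing the dyadic sum by the integral, and then take the infimum over dyadic $\alpha$ and extend to all $\alpha>0$ by monotonicity.
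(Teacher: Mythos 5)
Your first paragraph is exactly the paper's proof: rescale to $\gamma^{-1}\mathcal{F}_\gamma$, apply the cited Lemma A.5 on $[0,1]$, and use homogeneity of the Rademacher average together with the covering-number scaling (the paper gets the latter from its Lipschitz-composition lemma rather than exact equality, but the effect is identical) before changing variables in the entropy integral. The additional self-contained chaining sketch is correct in outline but goes beyond what the paper does; the reduction alone suffices.
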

Now the only thing left is to bound $\mathcal{N}\left(\epsilon, \mathcal{F}_\gamma, \rho\right)$.

\section{Proof for Theorems, Corollaries, and Lemmas}\label{app:proof}
% \subsection{Proof of \cref{lemma:general_upper_bound}}\label{proof:general_upper_bound}
% \begin{proof}
% According to the definition of $l_{\gamma}$, we have
% \begin{align*}
%     \Pr_{G\sim \mathcal{X}}[\arg\max_{j}f(G)_j \neq y] &= \E_{G\sim \mathcal{X}}\left[\mathds{1}[\arg\max_{j}f(G)_j \neq y]\right] \\
%     & \leq \E_{G\sim \mathcal{X}} \left[ l_\gamma(-\mathcal{M}(f(G), y)) \right] \\
%     & = L_\gamma(f)
% \end{align*}
% Since $l_\gamma \in [0, 1]$, it follows by standard properties of Rademacher complexity \cite{mohri2018foundations} that with probability at least $1-\delta$, every $f\in\mathcal{F}$ statisfies
% \begin{equation*}
%     L_\gamma(f) \leq \hat{L}_\gamma(f) + 2\mathcal{R}_{\mathcal{G}}(\mathcal{F}) + 3\sqrt{\frac{\ln(2/ \delta)}{2N}}
% \end{equation*}
% Then we have
% \begin{equation*}
%     \Pr_{G\sim \mathcal{X}}[\arg\max_{j}f(G)_j \neq y] \leq \hat{L}_\gamma(f) + 2\mathcal{R}_{\mathcal{G}}(\mathcal{F}) + 3\sqrt{\frac{\ln{(2/\delta)}}{2N}}
% \end{equation*}
% \end{proof}
\subsection{Correctness Analysis}\label{app:correctness}
% Notably, the node embeddings $\mathbf{H}_i$ in the $k$-th class as well as the reference distributions $\mathbf{D}_k$ are essentially some \emph{finite samples from an underlying continuous distribution} $\mathbb{P}_k$. Suppose there are $K$ classes in total, then we have underlying continuous distributions $\mathbb{P}_1, \mathbb{P}_2, \dots, \mathbb{P}_K$. One potential risk is that, although the $K$ continuous distributions are distinct, we can only observe their finite samples and may fail to distinguish them from each other with MMD. To be more specific in our graph classification context, suppose a node embedding $\mathbf{H}_i$ is from the $k$-th class, although $0 = \mathrm{MMD}(\mathbb{P}_k, \mathbb{P}_k) < \mathrm{MMD}(\mathbb{P}_k, \mathbb{P}_j)$ for any $j \neq k$, it is likely that $\mathrm{MMD}(\mathbf{H}_i, \mathbf{D}_k) > \mathrm{MMD}(\mathbf{H}_i, \mathbf{D}_j)$ for some $j\neq k$, and thus we assign a wrong label to $\mathbf{H}_i$. The target of this section is to provide a condition under which we can ensure the $\mathbf{H}_i$ is classified correctly.

We first provide the formal definition of correct classification:
\begin{definition}[Correctness of Classification]\label{def:correct_classification}
    For a graph $G_i$ with node embedding $\mathbf{H}_i$ belonging to the $k$-th class, it is correctly classified if $\mathrm{MMD}(\mathbf{H}_i, \mathbf{D}_k) < \min_{j\neq k} \mathrm{MMD}(\mathbf{H}_i, \mathbf{D}_j)$.
\end{definition}

\begin{lemma}[Correctness of Classification]\label{lemma:correct_class}
The classification of a graph $G_i$ belonging to the $k$-th class (with latent node embedding $\mathbf{H}_i$) is correct with probability at least $1-\delta$ if
\begin{equation*}
    \min_{j\neq k} \mathrm{MMD}(\mathbb{P}_k, \mathbb{P}_j) > \left(\frac{1}{\sqrt{m}} + \frac{1}{\sqrt{n}}\right)\left(4 + 4\sqrt{\log\frac{2}{\delta}}\right).
\end{equation*}
\end{lemma}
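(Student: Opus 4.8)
## Proof Proposal for Lemma~\ref{lemma:correct_class}

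The plan is to reduce the correctness condition in Definition~\ref{def:correct_classification} to a statement about how far the empirical MMD $\mathrm{MMD}(\mathbf{H}_i,\mathbf{D}_j)$ can deviate from the population MMD $\mathrm{MMD}(\mathbb{P}_k,\mathbb{P}_j)$, and then control that deviation with a standard concentration bound for MMD. First I would invoke the triangle inequality for MMD (it is a genuine metric on distributions when the kernel is characteristic, and the Gaussian kernel is): writing $\widehat{\mathbb{P}}_k$ for the empirical distribution of the $n$ rows of $\mathbf{H}_i$ and $\widehat{\mathbb{P}}_j^{\mathbf{D}}$ for the empirical distribution of the $m$ rows of $\mathbf{D}_j$, we have
\begin{equation*}
\mathrm{MMD}(\mathbf{H}_i,\mathbf{D}_j) \ge \mathrm{MMD}(\mathbb{P}_k,\mathbb{P}_j) - \mathrm{MMD}(\widehat{\mathbb{P}}_k,\mathbb{P}_k) - \mathrm{MMD}(\widehat{\mathbb{P}}_j^{\mathbf{D}},\mathbb{P}_j)
\end{equation*}
for $j\neq k$, and symmetrically $\mathrm{MMD}(\mathbf{H}_i,\mathbf{D}_k) \le \mathrm{MMD}(\widehat{\mathbb{P}}_k,\mathbb{P}_k) + \mathrm{MMD}(\widehat{\mathbb{P}}_k^{\mathbf{D}},\mathbb{P}_k)$ (using $\mathrm{MMD}(\mathbb{P}_k,\mathbb{P}_k)=0$). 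Subtracting, a sufficient condition for $\mathrm{MMD}(\mathbf{H}_i,\mathbf{D}_k) < \min_{j\neq k}\mathrm{MMD}(\mathbf{H}_i,\mathbf{D}_j)$ is that $\min_{j\neq k}\mathrm{MMD}(\mathbb{P}_k,\mathbb{P}_j)$ exceeds twice the sum of the sampling errors, i.e. roughly $2\,\mathrm{MMD}(\widehat{\mathbb{P}}_k,\mathbb{P}_k) + 2\max_j \mathrm{MMD}(\widehat{\mathbb{P}}_j^{\mathbf{D}},\mathbb{P}_j)$.

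Next I would bound each sampling-error term. Since the Gaussian kernel is bounded by $1$, the classical result (e.g. Gretton et al., Appendix of the MMD paper, or a bounded-difference argument) gives that with probability $\ge 1-\delta'$,
\begin{equation*}
\mathrm{MMD}(\widehat{\mathbb{P}},\mathbb{P}) \le \frac{2}{\sqrt{N_{\mathrm{samp}}}} + \sqrt{\frac{2\log(1/\delta')}{N_{\mathrm{samp}}}} \le \frac{1}{\sqrt{N_{\mathrm{samp}}}}\Bigl(2 + \sqrt{2\log(1/\delta')}\Bigr),
\end{equation*}
where $N_{\mathrm{samp}}$ is the sample size ($n$ for $\mathbf{H}_i$, $m$ for $\mathbf{D}_j$). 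I would apply this to $\mathbf{H}_i$ (sample size $n$), each $\mathbf{D}_j$ (sample size $m$), and take a union bound. A clean way to get the stated form is to union-bound over all the relevant empirical/population pairs — there are at most $2N$ such events when this is later chained over all $N$ training graphs in Theorem~\ref{thm:correct_training}, which is why the $\log(2N/\delta)$ appears there; for the single-graph Lemma one uses $\log(2/\delta)$ with a union over the two "types" of error. Combining, the total error is at most $\bigl(\tfrac{1}{\sqrt m}+\tfrac{1}{\sqrt n}\bigr)\bigl(2+\sqrt{2\log(2/\delta)}\bigr)$ on each side; doubling it and loosening constants ($2\cdot 2 = 4$, and $2\sqrt{2}\le 4$, $\sqrt{2\log} \le \sqrt{\log}\cdot\sqrt2$ handled by the slack in the constant $4$) yields the claimed threshold $\bigl(\tfrac{1}{\sqrt m}+\tfrac{1}{\sqrt n}\bigr)\bigl(4+4\sqrt{\log\tfrac{2}{\delta}}\bigr)$.

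The main obstacle, and the step I would be most careful with, is the bookkeeping of constants and the union bound so that the final numerical constants come out as $4$ and $4$ rather than something larger — in particular deciding exactly which pairs of distributions need their own high-probability event and whether the $\mathbf{D}_k$ reference is treated as a fixed parameter or as a random sample (the statement's phrasing "finite samples from an underlying continuous distribution $\mathbb{P}_k$" suggests the latter). A secondary, more conceptual point to verify is that the triangle inequality is being applied to MMD as a metric on the empirical measures themselves (which is legitimate since MMD is an integral probability metric and $\mathrm{MMD}(\mathbf{H}_i,\mathbf{D}_k)$ in Definition~\ref{def:correct_classification} is exactly $\mathrm{MMD}$ between the two empirical measures); once that identification is made, the rest is concentration plus arithmetic. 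I would also note in passing that the bound is vacuous unless $n$ and $m$ are large enough relative to the class separation, which is consistent with the "potential risk" discussion preceding the theorem.
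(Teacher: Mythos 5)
Your proposal is correct and follows the same overall strategy as the paper: identify correct classification with a separation condition on the population MMDs, show that each empirical MMD $\mathrm{MMD}(\mathbf{H}_i,\mathbf{D}_j)$ concentrates around $\mathrm{MMD}(\mathbb{P}_k,\mathbb{P}_j)$ at rate $\bigl(\tfrac{1}{\sqrt m}+\tfrac{1}{\sqrt n}\bigr)\bigl(2+\sqrt{2\log(2/\delta')}\bigr)$, and conclude via a union bound that the separation must exceed the sum of two such deviations, which after absorbing constants ($2\sqrt{2}\le 4$ and $\log 2\le\log(2/\delta)$) gives the stated threshold. The one place you diverge is the concentration step itself: the paper invokes the \emph{two-sample} deviation bound $|\mathrm{MMD}(\mathbb{P},\mathbb{Q})-\mathrm{MMD}(P,Q)|\le\bigl(\tfrac{1}{\sqrt m}+\tfrac{1}{\sqrt n}\bigr)\bigl(2+\sqrt{2\log(2/\delta)}\bigr)$ directly (its Lemma on MMD generalization, a reformulation of Theorem 7 of Gretton et al.), applied once to the pair $(\mathbf{H}_i,\mathbf{D}_k)$ and once to $(\mathbf{H}_i,\mathbf{D}_{k'})$, whereas you rebuild that bound from \emph{one-sample} kernel-mean-embedding concentration plus the triangle inequality for the MMD metric. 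Your route is self-contained and makes the role of each empirical measure explicit, at the cost of slightly more bookkeeping (three or four one-sample events instead of two two-sample events); the constants still close with room to spare either way. One caveat that applies equally to your argument and to the paper's: the index $k'=\arg\min_{j\neq k}\mathrm{MMD}(\mathbf{H}_i,\mathbf{D}_j)$ is data-dependent, so a fully rigorous treatment should union-bound the reference-side deviation over all $j\neq k$ (costing a $\log K$ inside the square root); your remark about deciding "which pairs need their own high-probability event" correctly flags this, and it is no worse than what the paper does.
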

\cref{lemma:correct_class} suggests that a larger reference distributions size ($m$) and graphs with more nodes ($n$) induce a smaller $\left(\frac{1}{\sqrt{m}} + \frac{1}{\sqrt{n}}\right)\left(4 + 4\sqrt{\log\frac{2}{\delta}}\right)$, making correct classification easier. Proof of this theorem is provided in \cref{proof:correct_class}. 

% In practice, the model is optimized to improve the classification accuracy of the observed graphs in the training set $\mathcal{G}$. The following corollary of \cref{lemma:correct_class} provides the correctness guarantee for $\mathcal{G}$:
% \begin{corollary}\label{corollary:correct_training}
% All graphs in the training set $\mathcal{G}$ are classified correctly with probability at least $1 - \delta$ if 
% \begin{equation*}
%     \min_{i\neq j} \mathrm{MMD}(\mathbb{P}_i, \mathbb{P}_j) > \left(\frac{1}{\sqrt{m}} + \frac{1}{\sqrt{n}}\right)\left(4 + 4\sqrt{\log\frac{2N}{\delta}}\right).
% \end{equation*}
% \end{corollary}
% \cref{corollary:correct_training} implies that a larger reference distribution size $m$ benefits the classification accuracy of training data, resulting in a lower $\hat{L}{\gamma}$. Moreover, a larger $\min_{i\neq j} \mathrm{MMD}(\mathbb{P}_i, \mathbb{P}_j)$ also makes correct classification easier according to the corollary, explaining why we use discriminative loss \eqref{eq:loss_dis} for training.

\subsection{Lipschitz properties}
This section proves some useful lemmas related to functions' lipschitz property.
\begin{lemma}\label{lemma:lip_spetral}
    For any $\mathbf{Z}, \mathbf{Z}' \in \mathbb{R}^{n\times d}$ and $\mathbf{W} \in \mathbb{R}^{d\times m}$, $\| (\mathbf{Z} - \mathbf{Z}')\mathbf{W} \|_2 \leq \|\mathbf{W}^\top\|_{\sigma}\| \mathbf{Z} - \mathbf{Z}' \|_2$
\end{lemma}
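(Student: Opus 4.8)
The statement to prove is Lemma~\ref{lemma:lip_spetral}: for any $\mathbf{Z}, \mathbf{Z}' \in \mathbb{R}^{n\times d}$ and $\mathbf{W} \in \mathbb{R}^{d\times m}$, $\| (\mathbf{Z} - \mathbf{Z}')\mathbf{W} \|_2 \leq \|\mathbf{W}^\top\|_{\sigma}\| \mathbf{Z} - \mathbf{Z}' \|_2$, where $\|\cdot\|_2$ denotes the Frobenius norm (per the convention fixed in Appendix~\ref{app:proof_setup}).

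\textbf{Proof proposal.} The plan is to reduce the matrix inequality to a sum of vector inequalities, one per row. First I would set $\mathbf{M} := \mathbf{Z} - \mathbf{Z}' \in \mathbb{R}^{n\times d}$, so the claim becomes $\|\mathbf{M}\mathbf{W}\|_2 \le \|\mathbf{W}^\top\|_\sigma \|\mathbf{M}\|_2$ with $\|\cdot\|_2$ the Frobenius norm. Writing $\mathbf{m}_i^\top$ for the $i$-th row of $\mathbf{M}$, the $i$-th row of $\mathbf{M}\mathbf{W}$ is $\mathbf{m}_i^\top \mathbf{W} = (\mathbf{W}^\top \mathbf{m}_i)^\top$, so by definition of the Frobenius norm $\|\mathbf{M}\mathbf{W}\|_2^2 = \sum_{i=1}^n \|\mathbf{W}^\top \mathbf{m}_i\|_2^2$. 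Each term is controlled by the operator norm: $\|\mathbf{W}^\top \mathbf{m}_i\|_2 \le \|\mathbf{W}^\top\|_\sigma \|\mathbf{m}_i\|_2$, which is just the definition of the spectral (operator) norm applied to the vector $\mathbf{m}_i$. Squaring and summing gives $\|\mathbf{M}\mathbf{W}\|_2^2 \le \|\mathbf{W}^\top\|_\sigma^2 \sum_{i=1}^n \|\mathbf{m}_i\|_2^2 = \|\mathbf{W}^\top\|_\sigma^2 \|\mathbf{M}\|_2^2$, and taking square roots yields the result.

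There is essentially no obstacle here; the only point requiring a moment's care is keeping the norm conventions straight — namely that $\|\cdot\|_2$ on a matrix means the Frobenius norm while $\|\cdot\|_\sigma$ means the spectral/operator norm, and that $\|\mathbf{W}^\top\|_\sigma = \|\mathbf{W}\|_\sigma$ so the appearance of the transpose is cosmetic. An alternative, even shorter route would be to invoke the standard submultiplicativity-type bound $\|\mathbf{A}\mathbf{B}\|_F \le \|\mathbf{A}\|_\sigma \|\mathbf{B}\|_F$ directly with $\mathbf{A} = \mathbf{M}$, $\mathbf{B}^\top = \mathbf{W}$ (or equivalently $\|\mathbf{M}\mathbf{W}\|_F = \|(\mathbf{M}\mathbf{W})^\top\|_F = \|\mathbf{W}^\top\mathbf{M}^\top\|_F \le \|\mathbf{W}^\top\|_\sigma\|\mathbf{M}^\top\|_F$), but I would prefer to spell out the row-wise argument since it is self-contained and makes the proof fully elementary.
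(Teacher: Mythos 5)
Your proof is correct, and it takes a genuinely different route from the paper's. The paper proves the bound via a trace inequality for positive semi-definite matrices: it diagonalizes $\mathbf{Y}=\mathbf{W}\mathbf{W}^\top$, shows $\mathrm{Tr}(\mathbf{X}\mathbf{Y})\le\lambda_{\max}(\mathbf{Y})\,\mathrm{Tr}(\mathbf{X})$ for PSD $\mathbf{X}$, and then substitutes $\mathbf{X}=(\mathbf{Z}-\mathbf{Z}')^\top(\mathbf{Z}-\mathbf{Z}')$, identifying $\|(\mathbf{Z}-\mathbf{Z}')\mathbf{W}\|_F^2$ with $\mathrm{Tr}(\mathbf{X}\mathbf{Y})$ and $\lambda_{\max}(\mathbf{W}\mathbf{W}^\top)$ with $\|\mathbf{W}^\top\|_\sigma^2$ (the paper's write-up is a bit loose here, omitting the squares on the norms, but the argument is sound). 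Your row-wise decomposition $\|\mathbf{M}\mathbf{W}\|_F^2=\sum_i\|\mathbf{W}^\top\mathbf{m}_i\|_2^2\le\|\mathbf{W}^\top\|_\sigma^2\sum_i\|\mathbf{m}_i\|_2^2$ reaches the same conclusion more directly: it needs only the defining property of the operator norm on vectors, avoids the diagonalization and the PSD trace lemma entirely, and keeps the squared norms explicit throughout, which sidesteps the notational slip in the paper's version. Either approach is fine; yours is the more elementary and self-contained of the two.
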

\begin{proof}
    First consider matrices $\mathbf{X}, \mathbf{Y} \in \mathbb{R}^{d\times d}$ where $\mathbf{X}, \mathbf{Y}$ are positive semi-definite. $\mathbf{Y}$ is unitarily diagonalizable, means $\mathbf{Q}\Lambda\mathbf{Q}^{-1}$ where $\Lambda = diag(\lambda_1, \dots, \lambda_d)$ is the diagonal matrix of eigenvalus of $\mathbf{Y}$. Then we have
    \begin{equation*}
            \text{Tr}(\mathbf{XY}) = \text{Tr}(\mathbf{X}\mathbf{Q}\Lambda\mathbf{Q}^{-1}) = \text{Tr}(\Lambda\mathbf{Q}^{-1}\mathbf{X}\mathbf{Q})
    \end{equation*}
    Let $\mathbf{P} = \mathbf{Q}^{-1}\mathbf{X}\mathbf{Q}$, and $\lambda_0 = \max_i \lambda_i$, we have
    \begin{equation*}
        \text{Tr}(\mathbf{XY}) = \text{Tr}(\Lambda \mathbf{P})=\sum_k\lambda_kp_{kk} \leq \sum_k \lambda_0p_{kk} = \lambda_0\text{Tr}(\mathbf{P}) = \lambda_0\text{Tr}(\mathbf{\mathbf{Q}^{-1}\mathbf{X}\mathbf{Q}}) = \lambda_0\text{Tr}(\mathbf{\mathbf{X}})
    \end{equation*}
    Take $\mathbf{X} = (\mathbf{Z - Z'})^\top(\mathbf{Z - Z'})$ and $\mathbf{Y} = \mathbf{WW}^\top$, easy to see that
    \begin{equation*}
        \| (\mathbf{Z} - \mathbf{Z}')\mathbf{W} \|_2 = \text{Tr}\left((\mathbf{Z - Z'})^\top(\mathbf{Z - Z'})\mathbf{WW}^\top\right) \leq \lambda_{max}(\mathbf{WW}^\top)\text{Tr}\left((\mathbf{Z - Z'})^\top(\mathbf{Z - Z'})\right) = \|\mathbf{W}^\top\|_{\sigma}\| \mathbf{Z} - \mathbf{Z}' \|_2
    \end{equation*}
\end{proof}

\begin{lemma}\label{lemma:lip_sigma}
    If $\sigma: \mathbb{R}^d \to \mathbb{R}^d$ is $\kappa$-Lipschitz along ever coordinate, then it is $\kappa$-Lipschitz according to $\|\cdot\|_p$ for any $p \geq 1$.
\end{lemma}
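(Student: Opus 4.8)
The plan is to unwind the hypothesis into its pointwise form and then reduce the claim to an elementary term-by-term estimate. Here ``$\sigma$ is $\kappa$-Lipschitz along every coordinate'' should be read as: $\sigma$ acts coordinatewise, i.e.\ there are scalar maps $\sigma_1,\dots,\sigma_d:\mathbb{R}\to\mathbb{R}$ with $\sigma(\mathbf{z})_i=\sigma_i(z_i)$, and each $\sigma_i$ satisfies $|\sigma_i(a)-\sigma_i(b)|\le\kappa|a-b|$ for all $a,b\in\mathbb{R}$. (This is exactly the situation of an elementwise activation such as ReLU or the sigmoid, as used in \eqref{eq:message_pass}.) The conclusion to prove is $\|\sigma(\mathbf{z})-\sigma(\mathbf{z}')\|_p\le\kappa\|\mathbf{z}-\mathbf{z}'\|_p$ for every $\mathbf{z},\mathbf{z}'\in\mathbb{R}^d$ and every $p\in[1,\infty]$.

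For finite $p$, first I would expand the $p$-norm coordinatewise:
\[
\|\sigma(\mathbf{z})-\sigma(\mathbf{z}')\|_p^p=\sum_{i=1}^d|\sigma_i(z_i)-\sigma_i(z_i')|^p\le\sum_{i=1}^d\kappa^p|z_i-z_i'|^p=\kappa^p\|\mathbf{z}-\mathbf{z}'\|_p^p,
\]
where the inequality applies the one-dimensional Lipschitz bound inside each summand (both sides are nonnegative, so raising to the $p$-th power preserves it). Taking $p$-th roots gives the claim. For $p=\infty$ the same reasoning with $\max_i$ in place of $\sum_i$ yields $\|\sigma(\mathbf{z})-\sigma(\mathbf{z}')\|_\infty=\max_i|\sigma_i(z_i)-\sigma_i(z_i')|\le\kappa\max_i|z_i-z_i'|=\kappa\|\mathbf{z}-\mathbf{z}'\|_\infty$.

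There is essentially no hard step here: the argument is just the observation that a coordinatewise map cannot inflate any $\ell_p$ distance by more than the worst of the per-coordinate Lipschitz constants. The only point that needs care is the interpretation of the hypothesis -- if ``Lipschitz along every coordinate'' were read as Lipschitzness in each variable with the others held fixed (rather than coordinatewise action), the statement would fail, e.g.\ $(x,y)\mapsto x+y$ is $1$-Lipschitz in each variable separately but has $\ell_2$-Lipschitz constant $\sqrt2$. I would therefore state the coordinatewise-action reading explicitly at the outset so the estimate above is valid.
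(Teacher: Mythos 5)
Your argument is identical to the paper's: expand the $\ell_p$ norm coordinatewise, apply the scalar Lipschitz bound inside each summand, and take $p$-th roots. Your added remark on the correct reading of ``Lipschitz along every coordinate'' (coordinatewise action rather than separate Lipschitzness in each variable) is a sensible clarification, but the proof itself is the same one-line computation the paper gives.
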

\begin{proof}
    For any $z, z' \in \mathbb{R}^d$,
    \begin{equation*}
    \begin{aligned}
        \|\sigma(z) - \sigma(z')\|_p &= \left(\sum_i|\sigma(z)_i - \sigma(z')_i|^p\right)^{1/p} \\
        & \leq \left(\sum_i\kappa^p|z_i - z'_i|^p\right)^{1/p} \\
        & = \kappa \|z - z'\|_p
    \end{aligned}
    \end{equation*}
\end{proof}

\begin{lemma}[Lemma A.3 of \cite{bartlett2017spectrally}]\label{lemma:lip_margin}
    For every $j$ and every $p \geq 1$, $\mathcal{M}(\cdot, j)$ is $2$-Lipschitz w.r.t $\|\cdot\|_p$.
\end{lemma}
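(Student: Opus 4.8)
The plan is to reduce the claim to the case $p=\infty$ and then obtain every $p\geq 1$ for free. Recall that for $v\in\mathbb{R}^K$ the margin operator of Bartlett et al.\ is $\mathcal{M}(v,j)=v_j-\max_{i\neq j}v_i$, and the reformulation we want is that $v\mapsto\mathcal{M}(v,j)$ is $2$-Lipschitz with respect to $\|\cdot\|_p$ for every $p\geq 1$. Since $\|u\|_\infty\leq\|u\|_p$ for all $u\in\mathbb{R}^K$ and all $p\geq 1$, it suffices to prove the single estimate $|\mathcal{M}(v,j)-\mathcal{M}(v',j)|\leq 2\|v-v'\|_\infty$; this quantity is then automatically at most $2\|v-v'\|_p$, which is exactly the statement of the lemma.

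To prove the $\ell_\infty$ estimate, I would fix $j$ and $v,v'\in\mathbb{R}^K$, set $t:=\|v-v'\|_\infty$, and split by the triangle inequality:
\[
|\mathcal{M}(v,j)-\mathcal{M}(v',j)|\ \leq\ |v_j-v'_j|\ +\ \Bigl|\max_{i\neq j}v_i-\max_{i\neq j}v'_i\Bigr|.
\]
The first summand is at most $t$ by definition of $\|\cdot\|_\infty$. For the second summand I would use the elementary fact that the coordinatewise maximum is $1$-Lipschitz in $\ell_\infty$: by symmetry in $v\leftrightarrow v'$ we may assume $\max_{i\neq j}v_i\geq\max_{i\neq j}v'_i$, and picking $i^*\in\arg\max_{i\neq j}v_i$ gives $\max_{i\neq j}v_i-\max_{i\neq j}v'_i=v_{i^*}-\max_{i\neq j}v'_i\leq v_{i^*}-v'_{i^*}\leq\max_{i\neq j}|v_i-v'_i|\leq t$. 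Summing the two bounds yields $|\mathcal{M}(v,j)-\mathcal{M}(v',j)|\leq 2t=2\|v-v'\|_\infty$, completing the reduction and hence the proof.

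There is essentially no obstacle here: the argument is elementary and rests only on (i) the norm monotonicity $\|u\|_\infty\leq\|u\|_p$, which upgrades the sup-norm bound to all $p\geq 1$, and (ii) the $1$-Lipschitzness of $\max$ in the sup-norm. As a sanity check one can take $v=(1,0,\dots,0)$, $v'=(0,1,0,\dots,0)$ and $j=1$: then $\mathcal{M}(v,1)=1$, $\mathcal{M}(v',1)=-1$, while $\|v-v'\|_\infty=1$, so the constant $2$ cannot be improved and is indeed the correct value to record.
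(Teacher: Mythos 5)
Your proof is correct and follows essentially the same route as the paper's: both reduce the claim to the bound $|\mathcal{M}(v,j)-\mathcal{M}(v',j)|\leq 2\|v-v'\|_\infty$ via two coordinatewise differences (the $j$-th coordinate plus one coordinate realizing a max) and then invoke $\|\cdot\|_\infty\leq\|\cdot\|_p$. The only cosmetic difference is that you split with the triangle inequality and isolate the $1$-Lipschitzness of $\max$ as a sub-claim, whereas the paper folds both steps into a single one-sided chain of inequalities; your tightness example is a nice addition.
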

\begin{proof}
    Let $v, v', j$ be given. Without loss of generality, suppose $\mathcal{M}(v, j) \geq \mathcal{M}(v', j)$. Choose coordinate $i \neq j$ so that $\mathcal{M}(v', j) = v_j' - v_i'$. Then
    \begin{align*}
        \mathcal{M}(v, j) - \mathcal{M}(v', j) 
        &= (v_j - \max_{l\neq j}v_l) - (v_j' - v_i') = v_j - v_j' + v_i' + \min_{l\neq j}(-v_l) \\
        &\leq (v_j - v_j') + (v_i' - v_i) \leq 2\|v - v'\|_{\infty} \leq 2\|v - v'\|_p
    \end{align*}
\end{proof}

\begin{lemma}\label{lemma:lip_ramp}
    For every $p > 1$, $r_\zeta(-\mathcal{M}(\cdot, y))$ is $\frac{2}{\zeta}$-Lipschitz w.r.t $\|\cdot\|_p$.
\end{lemma}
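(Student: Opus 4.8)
The plan is to prove the claim exactly in the spirit of the preceding lemmas, namely by composing a one-dimensional Lipschitz estimate for the ramp with the multi-dimensional Lipschitz estimate for the margin operator already established in Lemma~\ref{lemma:lip_margin}. Recall that the ramp $r_\zeta:\mathbb{R}\to[0,1]$ is a clipped affine function, which one may write as $r_\zeta(t)=\min\{1,\max\{0,1+t/\zeta\}\}$ — equivalently, $r_\zeta\equiv 1$ on $[0,\infty)$, $r_\zeta\equiv 0$ on $(-\infty,-\zeta]$, and $r_\zeta$ is affine with slope of magnitude $1/\zeta$ on $[-\zeta,0]$. The first step is to record that $r_\zeta$ is $\tfrac1\zeta$-Lipschitz on $\mathbb{R}$: the inner map $t\mapsto 1+t/\zeta$ is $\tfrac1\zeta$-Lipschitz and the clipping $u\mapsto\min\{1,\max\{0,u\}\}$ is $1$-Lipschitz, so their composition is $\tfrac1\zeta$-Lipschitz. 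Since $t\mapsto -t$ is $1$-Lipschitz, the map $t\mapsto r_\zeta(-t)$ is then also $\tfrac1\zeta$-Lipschitz on $\mathbb{R}$.

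Next I would invoke Lemma~\ref{lemma:lip_margin}, which gives that for every label $j$ the margin operator $\mathcal{M}(\cdot,j)$ is $2$-Lipschitz with respect to $\|\cdot\|_p$ (for every $p\ge 1$). Chaining the two estimates, for any vectors $v,v'$ and any label $y$,
\begin{equation*}
\bigl|r_\zeta(-\mathcal{M}(v,y)) - r_\zeta(-\mathcal{M}(v',y))\bigr|
\le \tfrac1\zeta\bigl|\mathcal{M}(v,y)-\mathcal{M}(v',y)\bigr|
\le \tfrac2\zeta\|v-v'\|_p,
\end{equation*}
which is precisely the asserted $\tfrac2\zeta$-Lipschitz bound. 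The restriction $p>1$ in the statement is harmless, since the argument works verbatim for $p\ge1$; alternatively one can reduce to the $\|\cdot\|_\infty$ estimate used inside the proof of Lemma~\ref{lemma:lip_margin} via the monotonicity $\|\cdot\|_q\le\|\cdot\|_p$ for $q\ge p$.

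I do not expect a genuine obstacle here: the entire content is that Lipschitz constants multiply under composition, applied to the two ingredients $\mathrm{Lip}(r_\zeta)=\tfrac1\zeta$ (read off from the explicit definition of the ramp) and $\mathrm{Lip}(\mathcal{M}(\cdot,y))=2$ (already proved). The only point deserving a line of care is fixing the convention for $r_\zeta$ so that the slope $\tfrac1\zeta$ — and hence the one-dimensional Lipschitz constant — is extracted correctly; once that is pinned down, the proof is the short display above.
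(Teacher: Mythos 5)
Your proof is correct and follows essentially the same route as the paper, which simply recalls the definition of $r_\zeta$ and declares the result immediate from Lemma~\ref{lemma:lip_margin}; you have merely spelled out the composition of the $\tfrac{1}{\zeta}$-Lipschitz ramp with the $2$-Lipschitz margin operator explicitly. No gaps.
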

\begin{proof}
Recall that,
\begin{equation}
    r_\zeta(t) := 
    \begin{cases}
        0 & t < -\zeta, \\
        1 + t/\zeta & r \in [-\zeta, 0], \\
        1 & t > 0,
    \end{cases}
\end{equation}
Then the proof is trivial be \cref{lemma:lip_margin}.
\end{proof}

\begin{lemma}\label{lemma:lip_ce}
    Cross entropy loss $l(\mathbf{x}, \mathbf{y}) = -\sum_{k=1}^K y_{k}\log{\frac{\exp{\left(x_k\right)}}{\sum_{j=1}^K\exp{\left(x_j\right)}}}$ is $\sqrt{2}$-Lipschitz w.r.t $\|\cdot\|_2$.
\end{lemma}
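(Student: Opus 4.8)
The plan is to show the claim by bounding the operator norm of the Jacobian of the map $\mathbf{x}\mapsto l(\mathbf{x},\mathbf{y})$ for a fixed label vector $\mathbf{y}$ in the probability simplex (i.e. $y_k\ge 0$, $\sum_k y_k=1$), and then invoke the mean value inequality. Writing $p_k=p_k(\mathbf{x})=\frac{\exp(x_k)}{\sum_j\exp(x_j)}$ for the softmax output, the cross-entropy loss is $l(\mathbf{x},\mathbf{y})=-\sum_k y_k\log p_k = -\sum_k y_k x_k + \log\!\big(\sum_j \exp(x_j)\big)$. The second form makes differentiation immediate.

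First I would compute the gradient: $\partial l/\partial x_k = -y_k + p_k$, so $\nabla_{\mathbf{x}} l = \mathbf{p}-\mathbf{y}$, where $\mathbf{p}=(p_1,\dots,p_K)^\top$. Next I would bound its $\ell_2$-norm uniformly in $\mathbf{x}$. Since both $\mathbf{p}$ and $\mathbf{y}$ lie in the probability simplex $\Delta^{K-1}$, each has $\ell_2$-norm at most $1$ (because $\|\mathbf{v}\|_2^2=\sum_k v_k^2\le \big(\sum_k v_k\big)^2 = 1$ for nonnegative $v_k$ summing to $1$). Hence $\|\nabla_{\mathbf{x}} l\|_2 = \|\mathbf{p}-\mathbf{y}\|_2 \le \|\mathbf{p}\|_2 + \|\mathbf{y}\|_2 \le 2$. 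This would already give a Lipschitz constant of $2$, so to obtain the sharper $\sqrt 2$ I would instead use the tighter estimate $\|\mathbf{p}-\mathbf{y}\|_2^2 = \|\mathbf{p}\|_2^2 - 2\mathbf{p}^\top\mathbf{y} + \|\mathbf{y}\|_2^2 \le \|\mathbf{p}\|_2^2 + \|\mathbf{y}\|_2^2 \le 2$, using $\mathbf{p}^\top\mathbf{y}\ge 0$; this gives $\|\nabla_{\mathbf{x}} l\|_2 \le \sqrt 2$. Finally, since $l(\cdot,\mathbf{y})$ is continuously differentiable on all of $\mathbb{R}^K$ with gradient norm bounded by $\sqrt 2$, the mean value inequality yields $|l(\mathbf{x},\mathbf{y})-l(\mathbf{x}',\mathbf{y})| \le \sqrt 2\,\|\mathbf{x}-\mathbf{x}'\|_2$ for all $\mathbf{x},\mathbf{x}'$, which is the claim.

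The only mild subtlety — the "hard part," though it is not really hard — is making sure the constant $\sqrt 2$ is actually achievable and that the bound $\mathbf{p}^\top\mathbf{y}\ge 0$ is the right thing to drop; one should note that when $\mathbf{y}$ is a one-hot vector $\mathbf{e}_j$, we have $\|\mathbf{p}-\mathbf{e}_j\|_2^2 = \sum_{k\ne j} p_k^2 + (1-p_j)^2$, which approaches $2$ as $p_j\to 0$ and $\mathbf{p}$ concentrates on a single other coordinate, so the constant cannot be improved below $\sqrt 2$ for the general (non-strict) statement. Everything else is a routine one-line computation, so the proof will be short.
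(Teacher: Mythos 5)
Your proposal is correct and follows essentially the same route as the paper: both compute the gradient of the loss and bound its $\ell_2$-norm by $\sqrt{2}$ uniformly in $\mathbf{x}$. The only (harmless) difference is that the paper specializes to one-hot $\mathbf{y}$ and bounds the gradient components directly via $\sum_{l\neq i}\exp(2x_l)\le\bigl(\sum_{l\neq i}\exp(x_l)\bigr)^2$, whereas you write $\nabla_{\mathbf{x}} l=\mathbf{p}-\mathbf{y}$ and use $\|\mathbf{p}\|_2^2+\|\mathbf{y}\|_2^2\le 2$, which is slightly cleaner and covers arbitrary soft labels in the simplex.
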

\begin{proof}
    Since $l$ is differentiable, it is sufficient to find $\mu$ such that $\|\nabla l\|_2 \leq \mu$. Without loss of generality, suppose $y_i = 1$, then $l(\mathbf{x}, \mathbf{y}) = -\log{\frac{\exp{\left(x_i\right)}}{\sum_{j=1}^K\exp{\left(x_j\right)}}}$. Let $s = \sum_k \exp{x_k}$, we have
    \begin{equation*}
        \nabla l_i = -\frac{\sum_{l \neq i}\exp{\left(x_l\right)}}{s}, \quad\quad \nabla l_j = \frac{\exp{\left(x_j\right)}}{s}\quad \forall j \neq i
    \end{equation*}
    Therefore
    \begin{equation*}
        \|\nabla l\|_2^2 = \frac{(\sum_{l \neq i}\exp{\left(x_l\right)})^2 + \sum_{l \neq i}\exp{(2x_l)}}{s^2} \overset{(a)}{\leq} 2\frac{(\sum_{l \neq i}\exp{\left(x_l\right)})^2}{s^2} \leq 2
    \end{equation*}
    where $(a)$ is because Cauchy–Schwarz inequality.
\end{proof}

\subsection{Covering number}

The following lemma provides an upper bound of the covering number for the network $\mathcal{F}_G$.
\begin{lemma}[Covering number bound of $\mathcal{F}_G$] \label{lemma:gin_cover}
    Let $c = \|\Tilde{\mathbf{A}}\|_\sigma$ and $\bar{d} = \max_{i, l} d_i^{(l)}$. Given an $L$-layer GIN message passing network $\mathcal{F}_G $, for any $\epsilon > 0$
    \begin{equation*}
        \ln\mathcal{N}\left(\epsilon, \mathcal{F}_G, \rho\right) \leq \frac{R_G}{\epsilon^2}
    \end{equation*}
    where $R_G = c^{2L} \|\mathbf{X}\|_2^2 \ln(2\bar{d}^2) \left(\prod_{l=1}^L\kappa_l^2\right) \left(\sum_{l=1}^L\left(\tau_l\right)^{\frac{2}{3}}\right)^3$ and $\kappa_l = \prod_{j=1}^ r \kappa^{(l)}_j$, $\tau_l = \left(\sum_{i=1}^r\left(\frac{b^{(l)}_i}{\kappa^{(l)}_i}\right)^{2/3}\right)^{3/2}$.
\end{lemma}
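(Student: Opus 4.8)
## Proof Plan for Lemma~\ref{lemma:gin_cover}

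The plan is to bound the covering number of the $L$-layer GIN function space $\mathcal{F}_G$ by a layer-peeling induction combined with the matrix-covering bound for linear maps (in the style of \citet{bartlett2017spectrally}). First I would establish the per-layer Lipschitz behavior: each message-passing layer $\mathcal{F}^l$ is the composition of $r$ affine maps (right-multiplication by $\mathbf{W}_i^{(l)}$ preceded by left-multiplication by $\Tilde{\mathbf{A}}$) interleaved with the $1$-Lipschitz activation $\sigma$. Using \cref{lemma:lip_spetral} (for the $\|\cdot\|_\sigma$-controlled contraction of $(\mathbf{Z}-\mathbf{Z}')\mathbf{W}$) and \cref{lemma:lip_sigma} (to pass $\sigma$ through without inflating constants), I get that the output of layer $l$ depends on its input in a $c\big(\prod_{i=1}^r\kappa_i^{(l)}\big)$-Lipschitz way with respect to $\|\cdot\|_2$, and more importantly that perturbing the weight matrices $\mathbf{W}_i^{(l)}$ within their constraint sets $\mathcal{B}_i^{(l)}$ moves the output by a controlled amount. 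The key quantitative input is that the norm of the input to layer $l$ is bounded: by induction the Frobenius norm of $\mathbf{H}^{(l-1)}$ is at most $c^{\,l-1}\|\mathbf{X}\|_2\prod_{l'<l}\prod_i\kappa_i^{(l')}$, since $\sigma(0)$ can be absorbed (or $\sigma$ is assumed to fix $0$) and each affine step multiplies the norm by at most $c\,\kappa_i^{(l)}$.

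Next I would invoke the standard matrix $\epsilon$-cover bound: for a single linear layer $\mathbf{Z}\mapsto \mathbf{Z}\mathbf{W}$ with $\|\mathbf{Z}\|_2\le B_{\text{in}}$ and $\|\mathbf{W}\|_{2,1}\le b$, there is an $\epsilon$-cover of the output set of size at most $\exp\!\big(\lceil B_{\text{in}}^2 b^2 / \epsilon^2\rceil \ln(2\bar d^{\,2})\big)$ (this is the Maurey-sparsification / Lemma 3.2 estimate of \citet{bartlett2017spectrally}, where $\|\cdot\|_{2,1}$ on $\mathbf{W}_i^{(l)}$ is exactly the norm appearing in Assumption~\ref{assumption}(ii)). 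I then compose these covers across all $r$ sub-layers within each GIN layer and across all $L$ GIN layers. The composition introduces, for a cover error $\epsilon_{i}^{(l)}$ allocated to sub-layer $i$ of layer $l$, a Lipschitz amplification factor equal to the product of spectral norms of all \emph{later} weight matrices times the appropriate powers of $c$. Summing the log-cardinalities and optimizing the allocation $\{\epsilon_i^{(l)}\}$ subject to $\sum_{l,i}(\text{amplification})\cdot\epsilon_i^{(l)} = \epsilon$ is a weighted-$\ell_{2/3}$ optimization; by Hölder it yields exactly the factor $\big(\sum_{l=1}^L \tau_l^{2/3}\big)^3$ with $\tau_l = \big(\sum_{i=1}^r (b_i^{(l)}/\kappa_i^{(l)})^{2/3}\big)^{3/2}$, while the remaining spectral factors collect into $c^{2L}\|\mathbf{X}\|_2^2\big(\prod_{l=1}^L\kappa_l^2\big)$ with $\kappa_l = \prod_{i=1}^r\kappa_i^{(l)}$. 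This reproduces $R_G/\epsilon^2$ with the stated $R_G$.

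The main obstacle I anticipate is the bookkeeping in the two-level composition: correctly tracking which spectral norms multiply the cover radius of each sub-layer, ensuring the $\ln(2\bar d^{\,2})$ factor is not double-counted across the $Lr$ linear maps, and verifying that the weighted $\ell_{2/3}$ optimization over both indices $l$ and $i$ collapses to the clean nested form $\big(\sum_l\big(\sum_i(\cdot)^{2/3}\big)\big)^3$ rather than a messier expression. A secondary subtlety is handling the $\Tilde{\mathbf{A}}$ multiplication: since it is a fixed (non-learned) operator, it contributes only to the Lipschitz amplification as the deterministic factor $c$ per message-passing step and does not enlarge the cover; I would note that $\|\Tilde{\mathbf{A}}\mathbf{Z}\|_2\le \|\Tilde{\mathbf{A}}\|_\sigma\|\mathbf{Z}\|_2 = c\|\mathbf{Z}\|_2$ and that perturbations of $\mathbf{Z}$ propagate through $\Tilde{\mathbf{A}}$ with the same factor, so it can be folded into the per-layer constants cleanly. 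Once these ingredients are in place, the bound follows by the triangle inequality for the metric $\rho$ in \eqref{def:metric_rho} and the union over the composed covers.
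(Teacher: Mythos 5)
Your proposal is correct and follows essentially the same route as the paper's proof: Maurey-style matrix covering (\cref{lemma:cover_matrix}) for each of the $Lr$ linear sub-layers, composition of covers with Lipschitz amplification (\cref{lemma:cover_composition}), inductive control of the intermediate norms $\|\mathbf{H}^{(l-1)}\|_2 \leq c^{l-1}\|\mathbf{X}\|_2\prod_{l'<l}\prod_i\kappa_i^{(l')}$, and a Hölder optimization of the radius allocation. The only cosmetic difference is that the paper performs the Hölder step in two nested stages (first within each GIN layer via \cref{lemma:mlp_cover}, then across the $L$ layers), whereas you propose one joint optimization over all $Lr$ sub-layers; as you anticipated, the joint version collapses to the same nested expression $\big(\sum_{l}\tau_l^{2/3}\big)^3=\big(\sum_{l,i}(b_i^{(l)}/\kappa_i^{(l)})^{2/3}\big)^3$, and the $\ln(2\bar d^{\,2})$ factor appears once as a common multiplier rather than being double-counted.
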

% Note that for $l=1$, we have the same result as Theorem 3.3 in \cite{bartlett2017spectrally}, except for an extra $c^2$ constant coefficient due to the message passing process. This observation is valuable as we will discuss later. 

Firstly, we introduce the core lemma used to find the covering number of compositions of multiple hypothesis function classes.
\begin{lemma}\label{lemma:cover_composition}
    Given hypothesis function classes $\mathcal{F}_1, \mathcal{F}_2, \dots \mathcal{F}_k$ that maps input from matrix space to output in matrix space and their covering radius $(\epsilon_1, \epsilon_2, \dots, \epsilon_k)$. Assume all functions in $\mathcal{F}_i$ is $\kappa_i$-Lipschitz w.r.t. $\|\cdot\|_2$, and $\ln{\mathcal{N}(\epsilon_i, \mathcal{F}_i, \rho)} \leq g(\eta_i)$ for some function $g$ with parameters $\eta_i$ ($\eta_i$ can be multi-valued). Then there exists $\epsilon$-cover $\mathcal{C}$ of $\mathcal{F} = \mathcal{F}_k \circ \mathcal{F}_{k-1} \circ \cdots \mathcal{F}_1$ with $\epsilon = \sum_{i=1}^k\left(\epsilon_i\prod_{j=i+1}^k\kappa_j\right)$ such that
    \begin{equation*}
        \ln{|\mathcal{C}|} \leq \sum_{i=1}^kg(\eta_i)
    \end{equation*}
\end{lemma}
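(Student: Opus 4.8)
\textbf{Proof proposal for Lemma~\ref{lemma:cover_composition}.}

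The plan is to build the cover of $\mathcal{F} = \mathcal{F}_k \circ \cdots \circ \mathcal{F}_1$ inductively, layer by layer, starting from the innermost class $\mathcal{F}_1$. First I would fix for each $i$ a minimal $\epsilon_i$-cover $\mathcal{C}_i$ of $\mathcal{F}_i$ with respect to the metric $\rho$ (so $\ln|\mathcal{C}_i| \le g(\eta_i)$), and then form the candidate cover $\mathcal{C} := \{\, \hat f_k \circ \hat f_{k-1} \circ \cdots \circ \hat f_1 : \hat f_i \in \mathcal{C}_i \,\}$ of $\mathcal{F}$. Immediately this gives the cardinality bound $\ln|\mathcal{C}| \le \ln\prod_{i=1}^k |\mathcal{C}_i| = \sum_{i=1}^k \ln|\mathcal{C}_i| \le \sum_{i=1}^k g(\eta_i)$, which is the conclusion modulo showing $\mathcal{C}$ actually is an $\epsilon$-cover at the claimed radius.

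The core of the argument is therefore the approximation estimate. Given an arbitrary $f = f_k \circ \cdots \circ f_1 \in \mathcal{F}$, choose $\hat f_i \in \mathcal{C}_i$ so that $\rho(f_i, \hat f_i) \le \epsilon_i$, i.e. $\|f_i(Z_i) - \hat f_i(Z_i)\|_2 \le \epsilon_i$ where $Z_i$ is the relevant input to layer $i$. Writing $g_j := f_j \circ \cdots \circ f_1$ and $\hat g_j := \hat f_j \circ \cdots \circ \hat f_1$ for the partial compositions (with $g_0 = \hat g_0 = \mathrm{id}$ on the input $X$), I would bound $\|g_j(X) - \hat g_j(X)\|_2$ by a telescoping/triangle-inequality decomposition:
\begin{align*}
\|g_j(X) - \hat g_j(X)\|_2
&= \|f_j(g_{j-1}(X)) - \hat f_j(\hat g_{j-1}(X))\|_2 \\
&\le \|f_j(g_{j-1}(X)) - f_j(\hat g_{j-1}(X))\|_2 + \|f_j(\hat g_{j-1}(X)) - \hat f_j(\hat g_{j-1}(X))\|_2 \\
&\le \kappa_j \|g_{j-1}(X) - \hat g_{j-1}(X)\|_2 + \epsilon_j,
\end{align*}
where the first term uses that $f_j$ is $\kappa_j$-Lipschitz w.r.t.\ $\|\cdot\|_2$, and the second uses that $\hat f_j$ is an $\epsilon_j$-approximant of $f_j$ \emph{evaluated at the point $\hat g_{j-1}(X)$} — this is the subtle point, since the cover $\mathcal{C}_j$ must be a cover with respect to the metric $\rho$ that is uniform over the fixed input, so one needs the inputs to all layers to range over a common domain (or, more carefully, $\rho$ to be defined via a worst-case input). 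Unrolling this recurrence from $j=1$ to $j=k$ with base case $g_0 = \hat g_0$ yields
\[
\|g_k(X) - \hat g_k(X)\|_2 \le \sum_{i=1}^k \epsilon_i \prod_{j=i+1}^k \kappa_j =: \epsilon,
\]
which is exactly the claimed radius, so $\hat g_k = \hat f_k \circ \cdots \circ \hat f_1 \in \mathcal{C}$ and $\rho(f, \hat g_k) \le \epsilon$, proving $\mathcal{C}$ is an $\epsilon$-cover.

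The main obstacle I anticipate is making rigorous the clause ``$\hat f_j$ is an $\epsilon_j$-approximation of $f_j$ at the perturbed input $\hat g_{j-1}(X)$.'' The definition of $\rho$ in the paper (Eq.~\eqref{def:metric_rho}) fixes a single input $X$; for the composition argument one needs the covers $\mathcal{C}_j$ to control $\|f_j(Z) - \hat f_j(Z)\|_2$ not just at the "true" intermediate value $g_{j-1}(X)$ but at the nearby value $\hat g_{j-1}(X)$ produced by the already-discretized earlier layers. The clean way to handle this is to observe that when the covering-number bounds $g(\eta_j)$ (as used later in Lemma~\ref{lemma:gin_cover}) are proved, they are in fact established with respect to a data-independent norm bound on the layer input — i.e.\ they hold uniformly over all inputs in a ball of the appropriate radius — so one must propagate, alongside the recurrence above, a companion bound $\|\hat g_{j-1}(X)\|_2 \le (\text{radius}_{j-1})$ controlling the magnitude of the perturbed intermediate activations, and invoke the layer-$j$ cover on that enlarged domain. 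I would state this domain/magnitude bookkeeping explicitly (or cite that the per-layer covering bounds are of the uniform-over-bounded-inputs type), after which the telescoping estimate goes through routinely.
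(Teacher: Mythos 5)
Your proposal is correct and follows the same skeleton as the paper's proof: the radius recurrence $\|g_j(X)-\hat g_j(X)\|_2\le \kappa_j\|g_{j-1}(X)-\hat g_{j-1}(X)\|_2+\epsilon_j$, unrolled to $\epsilon=\sum_{i=1}^k\epsilon_i\prod_{j=i+1}^k\kappa_j$, and the additive log-cardinality bound are identical. The one genuine structural difference is how the cover is assembled. You take a single Cartesian product $\{\hat f_k\circ\cdots\circ\hat f_1:\hat f_i\in\mathcal{C}_i\}$ of fixed per-layer covers, which forces you to confront the issue you correctly flag: $\hat f_j$ must approximate $f_j$ at the \emph{perturbed} input $\hat g_{j-1}(X)$, so each $\mathcal{C}_j$ must be a cover that is uniform over a ball of possible inputs, and you propose to carry a norm bound on the discretized intermediates to justify this. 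The paper avoids that uniformity requirement entirely by building the cover as a tree: at stage $j+1$ it constructs, \emph{for each} discretized prefix $f_j'\circ\cdots\circ f_1'\in\mathcal{C}_j$, a separate $\epsilon_{j+1}$-cover $\mathcal{C}_{j+1,f_j',\dots,f_1'}$ of the class $\mathcal{F}_{j+1}\circ f_j'\circ\cdots\circ f_1'$ (i.e.\ of $\mathcal{F}_{j+1}$ evaluated at the specific fixed input $f_j'(\cdots f_1'(X))$), and takes the union over prefixes. Since the hypothesis $\ln\mathcal{N}(\epsilon_{j+1},\mathcal{F}_{j+1},\rho)\le g(\eta_{j+1})$ is invoked once per prefix, the union still has log-cardinality $\le\sum_i g(\eta_i)$, and the first term of the triangle inequality is controlled at exactly the perturbed input by construction. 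This prefix-dependent construction is the standard device (Bartlett et al., Lemma A.7) and is better matched to the downstream matrix-covering lemma, whose covers are input-dependent; your product-cover route works too but requires the extra uniform-over-a-ball bookkeeping you describe, which the paper never needs to carry out.
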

\begin{proof}
    % Denote $\mathbf{Z}_i$ as the input of functions in $\mathcal{F}_i$. 
    Inductively construct covers as follows.
    \begin{itemize}
        \item Let $\mathcal{C}_1$ be the $\epsilon_1$-cover of $\mathcal{F}_1$. By our assumption,
        \begin{equation*}
            \ln{|\mathcal{C}_{1}|} \leq g(\eta_1)
        \end{equation*}
        \item Let $\mathcal{C}_j$ as a $\epsilon_j$-cover of $\mathcal{F}_j \circ \cdots \circ \mathcal{F}_1$. Suppose $\ln{|\mathcal{C}_j|} \leq \sum_{i=1}^jg(\eta_i)$. For every $f_j' \circ \dots f_1' \in \mathcal{C}_j$, we construct $\mathcal{C}_{j+1, f_j', \dots, f_1'}$ as an $\epsilon_{j+1}$-cover of $\mathcal{F}_{j+1} \circ f_j' \circ \dots \circ f_1'$. Define
        \begin{equation*}
            \mathcal{C}_{j+1} := \bigcup_{f_h' \in \mathcal{C}_h, h \leq j}\mathcal{C}_{j+1, f_j', \dots, f_1'}
        \end{equation*}
        It is clearly a cover of $\mathcal{F}_{j+1} \circ \mathcal{F}_j \circ \cdots \circ \mathcal{F}_1$ By our assumption, we know that 
        \begin{equation*}
        \begin{aligned}
            \ln|\mathcal{C}_{j+1, f_j', \dots, f_1'}| &\leq g(\eta_{j+1}) \\
            |\mathcal{C}_{j+1, f_j', \dots, f_1'}|  &\leq \exp{(g(\eta_{j+1}))} \\
        \end{aligned}
        \end{equation*}
        Then, it is trivial to see
        \begin{equation*}
        \begin{aligned}
            |\mathcal{C}_{j+1}| &\leq |\mathcal{C}_{j}|\exp{(g(\eta_j))} \\
            \ln{|\mathcal{C}_{j+1}|} &\leq \sum_{i=1}^{j+1}g(\eta_i)
        \end{aligned}
        \end{equation*}
    \end{itemize}
    By the inductive arguments above, we can conclude that 
    \begin{equation*}
        \ln{|\mathcal{C}|} = \ln{|\mathcal{C}_k|} \leq \sum_{i=1}^kg(\eta_i)
    \end{equation*}
    Next, inductively find the cover radius $\epsilon$ for $\mathcal{C}$.
    \begin{itemize}
        \item It is trivial in the base case that the cover of $\mathcal{C}_1$ is $\epsilon_1$.
        \item Suppose for $\mathcal{C}_h$, the cover radius satisfies
        \begin{equation*}
            \epsilon_h = \sum_{i=1}^{h}\left(\epsilon_i\prod_{j=i+1}^{h}\kappa_j\right)
        \end{equation*}
        For all $f_{h+1} \circ f_{h} \circ \cdots \circ f_{1} \in \mathcal{F}_{h+1} \circ \mathcal{F}_{h} \circ \cdots \circ \mathcal{F}_{1}$, there exists $f_{h+1}' \circ f_{h}' \circ \cdots \circ f_{1}' \in \mathcal{C}_{h+1}$ such that
        \begin{equation*}
        \begin{aligned}
            \rho(f_{h+1}' \circ f_{h}' \circ \cdots \circ f_{1}', f_{h+1} \circ f_{h} \circ \cdots \circ f_{1}) &\leq \rho(f_{h+1}' \circ f_{h}' \circ \cdots \circ f_{1}', f_{h+1} \circ f_{h}' \circ \cdots \circ f_{1}')\\  & +\rho(f_{h+1} \circ f_{h}' \circ \cdots \circ f_{1}', f_{h+1} \circ f_{h} \circ \cdots \circ f_{1})\\
            &\leq \epsilon_{h+1} + \kappa_{h+1}\rho(f_{h}' \circ \cdots \circ f_{1}', f_{h} \circ \cdots \circ f_{1}) \\
            &\leq \epsilon_{h+1} + \kappa_{h+1}\epsilon_h \\
            &= \sum_{i=1}^{h+1}\left(\epsilon_i\prod_{j=i+1}^{h+1}\kappa_j\right)
        \end{aligned}
        \end{equation*}
    \end{itemize}
    By the inductive arguments,
    \begin{equation*}
        \epsilon = \epsilon_k = \sum_{i=1}^k\left(\epsilon_i\prod_{j=i+1}^k\kappa_j\right)
    \end{equation*}
\end{proof}

The following matrix covering number is well-known and the detailed proof can be found in \cite{bartlett2017spectrally}.
\begin{lemma}\label{lemma:cover_matrix}
    Let conjugate exponents $(p, q)$ and $(r, s)$ be given with $p \leq 2$, as well as positive reals $(a, b, \epsilon)$ and positive integer m. Let matrix $\mathbf{X} \in \mathbb{R}^{n\times d}$ be given with $\|\mathbf{X}\|_p \leq b$. Then
    \begin{equation*}
        \ln\mathcal{N}\left(\left\{\mathbf{XA}: \mathbf{A} \in \mathbb{R}^{d \times m}, \|\mathbf{A}\|_{q, s} \leq a\right\}, \epsilon, \|\cdot\|_2\right) \leq \Bigl \lceil \frac{a^2b^2m^{2/r}}{\epsilon^2}\Bigr \rceil\ln{(2dm)}
    \end{equation*}
\end{lemma}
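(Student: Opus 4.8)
This is (a mild restatement of) Zhang's matrix-covering lemma, used by \citet{bartlett2017spectrally}, so the plan is to reconstruct it through Maurey's sparsification lemma (the ``empirical method''). By homogeneity it suffices to treat $a=b=1$: replacing $\mathbf{A}$ by $\mathbf{A}/a$ and $\mathbf{X}$ by $\mathbf{X}/b$ rescales $\epsilon$ and leaves the stated bound invariant. First I would reduce to a columnwise problem: writing $\mathbf{A}=[\mathbf{a}_1\mid\cdots\mid\mathbf{a}_m]$, any candidate $\mathbf{Y}=[\mathbf{y}_1\mid\cdots\mid\mathbf{y}_m]$ satisfies $\|\mathbf{XA}-\mathbf{Y}\|_2^2=\sum_{j=1}^m\|\mathbf{Xa}_j-\mathbf{y}_j\|_2^2$, and each $\mathbf{Xa}_j=\sum_{l=1}^d(\mathbf{a}_j)_l\,\mathbf{X}_{:,l}$ is a linear combination of the $2d$ signed columns $\{\pm\mathbf{X}_{:,l}\}_{l\in[d]}$; so it is enough to approximate each $\mathbf{Xa}_j$ by a short combination (with repetitions) of these atoms and glue the pieces back into a matrix.

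The Maurey step is the technical heart. For a fixed column $\mathbf{a}_j$ with $\|\mathbf{a}_j\|_q=:\beta_j$, one draws indices $l_1,\dots,l_{k_j}$ i.i.d.\ from a law on $[d]$ chosen so that a sign-corrected, importance-reweighted average $\widehat{\mathbf{Xa}_j}=\tfrac{1}{k_j}\sum_{t}z_{l_t}$ is an unbiased estimator of $\mathbf{Xa}_j$; the single-draw second moment is then $\le\beta_j^2\|\mathbf{X}\|_p^2\le\beta_j^2$, and this is precisely where the hypothesis $p\le 2$ is used --- it lets a sum of column-norm powers be bounded by $\|\mathbf{X}\|_p^2$ with no dimension factor, whereas a crude $\max_l\|\mathbf{X}_{:,l}\|_2$ would inject a spurious $d$. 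Consequently $\E\|\widehat{\mathbf{Xa}_j}-\mathbf{Xa}_j\|_2^2\le\beta_j^2/k_j$, so some realization meets this bound. Splitting a total budget $k=\sum_jk_j$ across the columns with $k_j\propto\beta_j$ gives $\sum_j\beta_j^2/k_j=(\sum_j\beta_j)^2/k$, and H\"older ($1/r+1/s=1$) yields $\sum_j\beta_j\le m^{1/r}\,\|(\beta_1,\dots,\beta_m)\|_s=m^{1/r}\|\mathbf{A}\|_{q,s}\le m^{1/r}$; taking $k=\lceil m^{2/r}/\epsilon^2\rceil$ pushes the total squared error below $\epsilon^2$.

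Finally I would count the resulting cover. Each approximant is pinned down by, for each column $j$, a size-$k_j$ multiset drawn from the $2d$ signed atoms, together with how the budget $k$ is partitioned among the $m$ columns; the number of such configurations is at most $(2dm)^{k}$, giving $\ln|\mathcal{C}|\le k\ln(2dm)=\lceil m^{2/r}/\epsilon^2\rceil\ln(2dm)$, and reinstating $a,b$ gives the claim. The two delicate points I expect to dominate the work are (i) engineering the Maurey sampling distribution so its second moment is genuinely $O(\beta_j^2\|\mathbf{X}\|_p^2)$ with no hidden dimension dependence (the role of $p\le 2$), and (ii) turning the continuous importance weights into a truly finite list of centers without inflating either the radius $\epsilon$ or the cardinality $(2dm)^{k}$ --- a routine but careful rounding argument contributing only lower-order additive terms. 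In the instance we actually need ($p=q=2$, $s=1$, $r=\infty$, with $\|\mathbf{X}\|_2$ the Frobenius norm of a layer input and $\|\mathbf{A}\|_{2,1}$ the weight-matrix norm), the factor $m^{2/r}$ degenerates to $1$, which is the form fed into the composition argument behind \cref{lemma:gin_cover}.
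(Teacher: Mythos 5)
The paper does not actually prove this lemma --- it is imported verbatim from \citet{bartlett2017spectrally} (their Lemma 3.2) with the remark that the detailed proof can be found there. Your reconstruction via Maurey sparsification is exactly the argument in that reference, and every quantitative checkpoint in your plan is right: the homogeneity reduction, the role of $p\leq 2$ (namely $\|\mathbf{X}_{:,l}\|_2\leq\|\mathbf{X}_{:,l}\|_p$ so that $(\sum_l\|\mathbf{X}_{:,l}\|_2^p)^{1/p}\leq\|\mathbf{X}\|_p$ with no dimension factor), the H\"older step $\sum_j\beta_j\leq m^{1/r}\|\mathbf{A}\|_{q,s}$ producing $m^{2/r}$, the budget $k=\lceil a^2b^2m^{2/r}/\epsilon^2\rceil$, and the $(2dm)^k$ count. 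The instantiation you note at the end ($p=q=2$, $s=1$, $r=\infty$, so $m^{2/r}=1$) is indeed the only case the paper uses, inside \cref{lemma:mlp_cover}.

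The one place your plan as written does not quite deliver the lemma's exact bound is the issue you flag yourself in point (ii): in the column-by-column importance-sampling formulation, the centers $\mathbf{y}_j=\tfrac{c_j}{k_j}\sum_t \pm\mathbf{X}_{:,l_t}/\|\mathbf{X}_{:,l_t}\|_2$ carry the scaling constants $c_j=\sum_l|(\mathbf{a}_j)_l|\,\|\mathbf{X}_{:,l}\|_2$, which vary continuously with $\mathbf{A}$, so the collection of approximants is not yet a finite set; a grid on the $c_j$'s would add roughly $m\ln(1/\epsilon)$ to the log-cardinality, which is not absorbed by the stated bound $\lceil a^2b^2m^{2/r}/\epsilon^2\rceil\ln(2dm)$ (it has no additive slack). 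The standard way to avoid this entirely --- and what \citet{bartlett2017spectrally} do --- is to run Maurey once on the whole matrix: write $\mathbf{XA}=\sum_{l,j}A_{lj}\,\mathbf{X}_{:,l}\mathbf{e}_j^\top$ as a sub-convex combination of the $2dm$ \emph{fixed} atoms $\pm abm^{1/r}\,\mathbf{X}_{:,l}\mathbf{e}_j^\top/\|\mathbf{X}_{:,l}\|_2$ (the coefficient sum is at most $1$ by exactly your two H\"older steps), so that all atoms share the common Frobenius norm $abm^{1/r}$, the variance bound is uniform, the budget allocation across columns is handled automatically, and the cover consists of rational-coefficient averages of fixed atoms --- genuinely finite, of cardinality at most $(2dm)^k$, with no rounding. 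With that reorganization your argument closes completely.
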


For the composition of a hypothesis function class and a $\kappa$-Lipschitz function, we have the following lemma
\begin{lemma}\label{lemma:cover_lip}
    Suppose $\psi$ is a $\kappa$-Lipschitz function, then $\ln\mathcal{N}(\epsilon, \psi\circ\mathcal{F}, \rho) \leq \ln\mathcal{N}(\epsilon / \kappa, \mathcal{F}, \rho)$
\end{lemma}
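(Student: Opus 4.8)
The plan is to transport a cover of $\mathcal{F}$ forward through $\psi$ and observe that the Lipschitz property inflates the covering radius by exactly $\kappa$ while leaving the cardinality non-increasing.

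First I would take a minimal $(\epsilon/\kappa)$-cover $\mathcal{C}\subseteq\mathcal{F}$ of $\mathcal{F}$ with respect to $\rho$, so that $|\mathcal{C}|=\mathcal{N}(\epsilon/\kappa,\mathcal{F},\rho)$, and set $\psi\circ\mathcal{C}:=\{\psi\circ f: f\in\mathcal{C}\}$. Since post-composition with the fixed map $\psi$ can only merge elements, $|\psi\circ\mathcal{C}|\le|\mathcal{C}|$.

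Next I would check that $\psi\circ\mathcal{C}$ is an $\epsilon$-cover of $\psi\circ\mathcal{F}$. Given any $\psi\circ f\in\psi\circ\mathcal{F}$, choose $f'\in\mathcal{C}$ with $\rho(f,f')=\|f(X)-f'(X)\|_2\le\epsilon/\kappa$, where $X$ is the common input defining $\rho$ (see \eqref{def:metric_rho}). Because $\psi$ is $\kappa$-Lipschitz with respect to the $\ell_2$/Frobenius norm,
\[
\rho(\psi\circ f,\psi\circ f')=\|\psi(f(X))-\psi(f'(X))\|_2\le\kappa\,\|f(X)-f'(X)\|_2\le\epsilon.
\]
Hence $\mathcal{N}(\epsilon,\psi\circ\mathcal{F},\rho)\le|\psi\circ\mathcal{C}|\le|\mathcal{C}|=\mathcal{N}(\epsilon/\kappa,\mathcal{F},\rho)$, and taking logarithms yields the lemma.

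There is no real obstacle here; the only point requiring care is consistency of norms --- the metric $\rho$ measures outputs in the $\ell_2$/Frobenius norm (Section~\ref{app:proof_setup}), so the Lipschitz constant $\kappa$ of $\psi$ must be interpreted in that same norm, which is the convention used throughout. The degenerate case $\kappa=0$ (constant $\psi$) is trivial, since then both covering numbers equal $1$.
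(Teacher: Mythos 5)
Your proposal is correct and follows essentially the same argument as the paper: take an $(\epsilon/\kappa)$-cover of $\mathcal{F}$, push it forward through $\psi$, and use the Lipschitz property to verify it is an $\epsilon$-cover of $\psi\circ\mathcal{F}$. The extra remarks on cardinality non-increase and the degenerate case $\kappa=0$ are fine but not needed.
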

\begin{proof}
    Let $\mathcal{C}$ denote $\frac{\epsilon}{\kappa}$-cover of $\mathcal{F}$, then for any $f \in \mathcal{F}$, there exists $f' \in \mathcal{C}$ such that $\rho(f, f') \leq \frac{\epsilon}{\kappa}$. Let $\mathbf{Z}$ denote the input, we have
    % \begin{align}

    % \end{align}
    \begin{align*}
        \rho(\psi\circ f', \psi\circ f) &= \|\psi\circ f'(\mathbf{Z}) - \psi\circ f(\mathbf{Z})\|_2 \nonumber \\
        & \leq \kappa \|f'(\mathbf{Z}) - f(\mathbf{Z}zh)\|_2 \tag{\cref{lemma:lip_sigma}}\\
        & = \kappa \rho(f', f) \nonumber \\
        & \leq \epsilon \nonumber
    \end{align*}
\end{proof}

\begin{lemma}\label{lemma:cover_mmd}
    Given $\mathbf{H, H}' \in \mathbb{R}^{n\times d}$ and $\mathbf{D, D}' \in \mathbb{R}^{m\times d}$. Squared MMD distance with gaussian kernel $k(x, y) = \exp\left\{-\theta \|x - y\|_2^2\right\}$ satisfies
    \begin{equation*}
        \left|\mathrm{MMD}^2\left(\mathbf{H}, \mathbf{D}\right) - \mathrm{MMD}^2\left(\mathbf{H}', \mathbf{D}'\right)\right| 
        \leq  4\sqrt{\theta}\left(n^{-1/2}\|\mathbf{H} - \mathbf{H}'\|_2 + m^{-1/2} \|\mathbf{D} - \mathbf{D}'\|_2\right)\\
    \end{equation*}
\end{lemma}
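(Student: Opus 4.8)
The plan is to pass to the reproducing-kernel Hilbert space (RKHS) picture. Let $\phi$ be the canonical feature map of the Gaussian kernel, so $k(x,y)=\langle \phi(x),\phi(y)\rangle_{\mathcal{H}}$ and, crucially, $\|\phi(x)\|_{\mathcal{H}}^2=k(x,x)=1$ for every $x$. Writing $\mu_{\mathbf{H}}:=\tfrac1n\sum_{i=1}^n\phi(\mathbf{h}_i)$ and $\mu_{\mathbf{D}}:=\tfrac1m\sum_{j=1}^m\phi(\mathbf{d}_j)$ for the empirical kernel mean embeddings, the identity behind \eqref{def:mmd1} is $\mathrm{MMD}(\mathbf{H},\mathbf{D})=\|\mu_{\mathbf{H}}-\mu_{\mathbf{D}}\|_{\mathcal{H}}$, and likewise for the primed quantities. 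Setting $a:=\mathrm{MMD}(\mathbf{H},\mathbf{D})$ and $b:=\mathrm{MMD}(\mathbf{H}',\mathbf{D}')$, the left-hand side factors as $|a^2-b^2|=(a+b)\,|a-b|$ (both $a,b\ge0$), and I would bound the two factors separately.

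For the factor $a+b$: the triangle inequality gives $\|\mu_{\mathbf{H}}\|_{\mathcal{H}}\le1$ and $\|\mu_{\mathbf{D}}\|_{\mathcal{H}}\le1$, while $\langle\mu_{\mathbf{H}},\mu_{\mathbf{D}}\rangle_{\mathcal{H}}=\tfrac1{mn}\sum_{i,j}k(\mathbf{h}_i,\mathbf{d}_j)\ge0$ because the Gaussian kernel is nonnegative. Hence $a^2=\|\mu_{\mathbf{H}}\|_{\mathcal{H}}^2-2\langle\mu_{\mathbf{H}},\mu_{\mathbf{D}}\rangle_{\mathcal{H}}+\|\mu_{\mathbf{D}}\|_{\mathcal{H}}^2\le2$, so $a\le\sqrt2$, and symmetrically $b\le\sqrt2$, giving $a+b\le2\sqrt2$. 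For the factor $|a-b|$: by the reverse triangle inequality then the triangle inequality, $|a-b|=\big|\,\|\mu_{\mathbf{H}}-\mu_{\mathbf{D}}\|_{\mathcal{H}}-\|\mu_{\mathbf{H}'}-\mu_{\mathbf{D}'}\|_{\mathcal{H}}\,\big|\le\|\mu_{\mathbf{H}}-\mu_{\mathbf{H}'}\|_{\mathcal{H}}+\|\mu_{\mathbf{D}}-\mu_{\mathbf{D}'}\|_{\mathcal{H}}$. To control $\|\mu_{\mathbf{H}}-\mu_{\mathbf{H}'}\|_{\mathcal{H}}\le\tfrac1n\sum_i\|\phi(\mathbf{h}_i)-\phi(\mathbf{h}_i')\|_{\mathcal{H}}$, I use $\|\phi(x)-\phi(y)\|_{\mathcal{H}}^2=2\big(1-e^{-\theta\|x-y\|_2^2}\big)\le2\theta\|x-y\|_2^2$ (from $1-e^{-t}\le t$ for $t\ge0$), so $\|\phi(x)-\phi(y)\|_{\mathcal{H}}\le\sqrt{2\theta}\,\|x-y\|_2$; Cauchy--Schwarz across the $n$ rows then gives $\tfrac1n\sum_i\|\mathbf{h}_i-\mathbf{h}_i'\|_2\le n^{-1/2}\|\mathbf{H}-\mathbf{H}'\|_2$, hence $\|\mu_{\mathbf{H}}-\mu_{\mathbf{H}'}\|_{\mathcal{H}}\le\sqrt{2\theta}\,n^{-1/2}\|\mathbf{H}-\mathbf{H}'\|_2$, and analogously with $\mathbf{D}$ and $m$.

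Multiplying the two bounds, $|a^2-b^2|\le 2\sqrt2\cdot\sqrt{2\theta}\,\big(n^{-1/2}\|\mathbf{H}-\mathbf{H}'\|_2+m^{-1/2}\|\mathbf{D}-\mathbf{D}'\|_2\big)=4\sqrt\theta\,\big(n^{-1/2}\|\mathbf{H}-\mathbf{H}'\|_2+m^{-1/2}\|\mathbf{D}-\mathbf{D}'\|_2\big)$, which is the claim. The only delicate point is recovering the stated constant $4\sqrt\theta$ rather than the weaker $4\sqrt{2\theta}$: this rests on the sharper estimate $a,b\le\sqrt2$ for the mean-embedding distance, which genuinely uses the nonnegativity of the Gaussian kernel (a bare triangle inequality would only give $a,b\le2$). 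Everything else --- the $\phi$-increment estimate via $1-e^{-t}\le t$ and the Cauchy--Schwarz step converting a sum of row norms into a Frobenius norm --- is routine.
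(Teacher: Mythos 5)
Your proof is correct, and it reaches the stated constant $4\sqrt\theta$ by a genuinely different route from the paper. The paper never passes to the RKHS: it expands $\mathrm{MMD}^2$ via the kernel trick into the three double sums, applies the elementary Lipschitz estimate $\left|e^{-x^2}-e^{-y^2}\right|\le|x-y|$ for $x,y\ge0$ to each summand (with $x=\sqrt\theta\,\|\mathbf{h}_i-\mathbf{h}_j\|_2$, etc.), then uses the triangle inequality on the norms and Cauchy--Schwarz to collapse the row sums into Frobenius norms. You instead factor $|a^2-b^2|=(a+b)|a-b|$, bound the difference factor by the Lipschitz continuity of the feature map ($\|\phi(x)-\phi(y)\|_{\mathcal H}\le\sqrt{2\theta}\,\|x-y\|_2$ via $1-e^{-t}\le t$) plus the reverse triangle inequality, and bound the sum factor by $2\sqrt2$ using $\|\mu\|_{\mathcal H}\le1$ together with the nonnegativity of the Gaussian kernel. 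Your closing remark is exactly right: the crude bound $a,b\le2$ would only give $4\sqrt{2\theta}$, so the observation $\langle\mu_{\mathbf H},\mu_{\mathbf D}\rangle_{\mathcal H}\ge0$ is what recovers the paper's constant. The trade-off is that the paper's term-by-term argument is self-contained and makes no structural use of the kernel beyond the scalar inequality on $e^{-x^2}$, whereas your argument is shorter, more conceptual, and isolates exactly which properties of the kernel are used (boundedness by $1$, nonnegativity, and Lipschitzness of the induced feature map) --- which would make it easier to generalize the lemma to other bounded nonnegative kernels with H\"older-continuous feature maps.
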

\begin{proof}
The matrices have the form 
\begin{equation*}
    \mathbf{H} = 
    \begin{bmatrix}
        \mathbf{h}_1^\top \\
        \mathbf{h}_2^\top \\
        \vdots \\
        \mathbf{h}_n^\top \\
    \end{bmatrix} \quad\quad
    \mathbf{H}' = 
    \begin{bmatrix}
        \mathbf{h'}_1^\top \\
        \mathbf{h'}_2^\top \\
        \vdots \\
        \mathbf{h'}_n^\top \\ 
    \end{bmatrix} \quad\quad
    \mathbf{D} = 
    \begin{bmatrix}
        \mathbf{d}_1^\top \\
        \mathbf{d}_2^\top \\
        \vdots \\
        \mathbf{d}_m^\top \\
    \end{bmatrix} \quad\quad
    \mathbf{D}' = 
    \begin{bmatrix}
        \mathbf{d'}_1^\top \\
        \mathbf{d'}_2^\top \\
        \vdots \\
        \mathbf{d'}_m^\top \\
    \end{bmatrix}
\end{equation*}
Then we have
\begin{align*}
    \left|\mathrm{MMD}^2\left(\mathbf{H}, \mathbf{D}\right) - \mathrm{MMD}^2\left(\mathbf{H}', \mathbf{D}'\right)\right| 
    \leq &
    \left|\frac{1}{n^2} \sum_{i, j=1}^n \left[\exp{\left(-\theta\|\mathbf{h}_i - \mathbf{h}_j\|_2^2\right)} - \exp{\left(-\theta\|\mathbf{h'}_i - \mathbf{h'}_j\|_2^2\right)}\right]\right| \\
    & +
    \left| \frac{1}{m^2} \sum_{i, j=1}^m \left[\exp{\left(-\theta\|\mathbf{d}_i - \mathbf{d}_j\|_2^2\right)} - \exp{\left(-\theta\|\mathbf{d'}_i - \mathbf{d'}_j\|_2^2\right)}\right]\right| \\
    & +
    \left| \frac{2}{mn} \sum_{i=1}^n \sum_{j=1}^m \left[\exp{\left(-\theta\|\mathbf{h}_i - \mathbf{d}_j\|_2^2\right)} - \exp{\left(-\theta\|\mathbf{h'}_i - \mathbf{d'}_j\|_2^2\right)}\right]\right| \\
    \overset{(a)}{\leq} &
    \frac{\sqrt{\theta}}{n^2} \sum_{i, j=1}^n \left|\|\mathbf{h}_i - \mathbf{h}_j\|_2 - \|\mathbf{h'}_i - \mathbf{h'}_j\|_2\right| 
    +
    \frac{\sqrt{\theta}}{m^2} \sum_{i, j=1}^m \left|\|\mathbf{d}_i - \mathbf{d}_j\|_2 - \|\mathbf{d'}_i - \mathbf{d'}_j\|_2\right| \\
    & +
    \frac{2\sqrt{\theta}}{mn} \sum_{i=1}^n \sum_{j=1}^m \left|\|\mathbf{h}_i - \mathbf{d}_j\|_2 - \|\mathbf{h'}_i - \mathbf{d'}_j\|_2\right| \\
    \overset{(b)}{\leq} &
    \frac{\sqrt{\theta}}{n^2} \sum_{i, j=1}^n \|\left(\mathbf{h}_i - \mathbf{h'}_i\right) - \left(\mathbf{h}_j-\mathbf{h'}_j\right)\|_2 +
    \frac{\sqrt{\theta}}{m^2} \sum_{i, j=1}^m \|\left(\mathbf{d}_i - \mathbf{d'}_i\right) - \left(\mathbf{d}_j-\mathbf{d'}_j\right)\|_2 \\
    & +
    \frac{2\sqrt{\theta}}{mn} \sum_{i=1}^n \sum_{j=1}^m \|\left(\mathbf{h}_i - \mathbf{h'}_i\right) - \left(\mathbf{d}_j-\mathbf{d'}_j\right)\|_2 \\
    \leq &
    \frac{4\sqrt{\theta}}{n} \sum_{i=1}^n \|\mathbf{h}_i - \mathbf{h'}_i\|_2 +
    \frac{4\sqrt{\theta}}{m} \sum_{j=1}^m \|\mathbf{d}_i - \mathbf{d'}_i\|_2  \\ 
    \overset{(c)}{\leq} &
    \frac{4\sqrt{\theta n}}{n}\|\mathbf{H} - \mathbf{H}'\|_2 + \frac{4\sqrt{\theta m}}{m}\|\mathbf{D} - \mathbf{D}'\|_2 \\
    = &
    4\sqrt{\theta}\left(n^{-1/2}\|\mathbf{H} - \mathbf{H}'\|_2 + m^{-1/2} \|\mathbf{D} - \mathbf{D}'\|_2\right).
\end{align*}
In the above derivation, (a) holds due to $\left|\exp{(-x^2)} - \exp{(-y^2)}\right| \leq |x-y|$ for any $x, y \geq 0$, (b) holds due to the triangle inequality, and (c) holds by the Cauchy–Schwarz inequality.
\end{proof}

The covering number for a single GIN message passing layer $\mathcal{F}^l$ with the following Lemma:
\begin{lemma}[Covering number of $\mathcal{F}^l$]\label{lemma:mlp_cover}
    Let $c = \|\Tilde{\mathbf{A}}\|_{\sigma}$. For any $l\in[L]$ and $\epsilon > 0$
    \begin{equation*}
        \ln\mathcal{N}\left(\epsilon, \mathcal{F}^l, \rho\right) \leq \frac{c^{2l}\tau_l^2\left(\prod_{i=1}^l\kappa_i\right)^2}{\epsilon^2}\|\mathbf{X}\|_2^2\ln(2\bar{d}^2),
    \end{equation*}
    where $\kappa_i = \prod_{j \leq r} \kappa^{(i)}_j$, $\tau_l = \left(\sum_{i=1}^r\left(\frac{b^{(l)}_i}{\kappa^{(l)}_i}\right)^{2/3}\right)^{3/2}$.
\end{lemma}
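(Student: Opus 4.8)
The plan is to regard a single GIN message-passing layer as a composition of $r+1$ elementary maps and then apply the composition rule for covering numbers, \cref{lemma:cover_composition}. Writing $f^l$ for a generic element of $\mathcal{F}^l$, note that with $\varepsilon^{(l)}=0$ the layer is $f^l = g_r \circ g_{r-1} \circ \cdots \circ g_1 \circ g_0$, where $g_0 : \mathbf{Z} \mapsto \Tilde{\mathbf{A}}\mathbf{Z}$ is the parameter-free neighbor-aggregation step, $g_i : \mathbf{Y} \mapsto \sigma(\mathbf{Y}\mathbf{W}_i^{(l)})$ for $1 \le i \le r-1$, and $g_r : \mathbf{Y}\mapsto \mathbf{Y}\mathbf{W}_r^{(l)}$. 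By the $\mathbf{Z}'=\mathbf{0}$ case of \cref{lemma:lip_spetral}, $g_0$ is $c$-Lipschitz with respect to $\|\cdot\|_2$, and combining \cref{lemma:lip_spetral} with \cref{lemma:lip_sigma}, each $g_i$ with $i\ge 1$ is $\kappa_i^{(l)}$-Lipschitz; in particular the tail of the composition after $g_i$ has Lipschitz constant $\prod_{j=i+1}^{r}\kappa_j^{(l)}$.

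Next I would track the Frobenius norm fed into each block. An easy induction on the layer index using $\|\Tilde{\mathbf{A}}\mathbf{Z}\|_2 \le c\|\mathbf{Z}\|_2$, $\|\mathbf{Z}\mathbf{W}\|_2 \le \|\mathbf{W}\|_\sigma\|\mathbf{Z}\|_2$ (again \cref{lemma:lip_spetral}), and $1$-Lipschitzness of $\sigma$ with $\sigma(\mathbf{0})=\mathbf{0}$ shows that every realizable input satisfies $\|\mathbf{H}^{(l-1)}\|_2 \le c^{l-1}\big(\prod_{i=1}^{l-1}\kappa_i\big)\|\mathbf{X}\|_2$, hence the input to $g_i$ has norm at most $B_i := c^{l}\big(\prod_{j=1}^{l-1}\kappa_j\big)\big(\prod_{j=1}^{i-1}\kappa_j^{(l)}\big)\|\mathbf{X}\|_2$. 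For the covering numbers of the blocks: $g_0$ is a single fixed function, so one element suffices; for each $g_i$ with $i\ge 1$, \cref{lemma:cover_lip} strips the $1$-Lipschitz $\sigma$ at no cost, and \cref{lemma:cover_matrix} applied with conjugate pairs $(p,q)=(2,2)$ and $(r,s)=(\infty,1)$ — so the $m^{2/r}$ factor equals $1$ — to the constraint $\|\mathbf{W}_i^{(l)}\|_{2,1}\le b_i^{(l)}$ and input-norm bound $B_i$ gives $\ln\mathcal{N}(\epsilon_i,g_i,\rho) \le \lceil (b_i^{(l)})^2 B_i^2/\epsilon_i^2\rceil\ln(2\bar d^2) \le \big((b_i^{(l)})^2 B_i^2/\epsilon_i^2 + 1\big)\ln(2\bar d^2)$.

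Finally I would assemble these through \cref{lemma:cover_composition}: with $\epsilon_0=0$ and $\epsilon_1,\dots,\epsilon_r>0$ this produces a cover of $\mathcal{F}^l$ of radius $\epsilon = \sum_{i=1}^{r}\epsilon_i\prod_{j=i+1}^{r}\kappa_j^{(l)}$ whose log-cardinality is at most $\ln(2\bar d^2)\sum_{i=1}^r (b_i^{(l)})^2 B_i^2/\epsilon_i^2$ (the additive $+1$ terms are harmless in the range of $\epsilon$ entering the Dudley integral of \cref{lemma:dudley}). Substituting $\alpha_i := \epsilon_i\prod_{j>i}\kappa_j^{(l)}$ turns this into minimizing $\sum_i c_i^2/\alpha_i^2$ subject to $\sum_i\alpha_i=\epsilon$, where $c_i = b_i^{(l)}B_i\prod_{j>i}\kappa_j^{(l)} = b_i^{(l)}\,c^{l}\big(\prod_{j=1}^{l-1}\kappa_j\big)\|\mathbf{X}\|_2\,\kappa_l/\kappa_i^{(l)}$ with $\kappa_l=\prod_{j=1}^r\kappa_j^{(l)}$; the power-mean/Hölder-optimal choice $\alpha_i\propto c_i^{2/3}$ makes the minimum $\big(\sum_i c_i^{2/3}\big)^3/\epsilon^2 = c^{2l}\big(\prod_{j=1}^{l}\kappa_j\big)^2\|\mathbf{X}\|_2^2\,\tau_l^2/\epsilon^2$, which after multiplying by $\ln(2\bar d^2)$ is exactly the asserted bound. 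The one genuinely delicate step is this last balancing: the per-block input-norm factors $B_i$ must be carried through the composition so that, after taking $(\cdot)^{2/3}$ and cubing, they recombine precisely into $c^{2l}\big(\prod_{i=1}^l\kappa_i\big)^2$ while the weight ratios produce $\tau_l^2=\big(\sum_{i=1}^r(b_i^{(l)}/\kappa_i^{(l)})^{2/3}\big)^3$; the only other points worth a remark are discarding the ceilings and the $\sigma(\mathbf{0})=\mathbf{0}$ convention used in the norm induction.
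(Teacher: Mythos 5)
Your proposal is correct and follows essentially the same route as the paper's proof: decompose the layer into per-weight-matrix blocks, bound each block's covering number via the matrix covering lemma (\cref{lemma:cover_matrix}) with the $(2,1)$-norm constraint and the propagated input-norm bound, compose via \cref{lemma:cover_composition}, and optimize the $\epsilon_i$ allocation by H\"older with exponents $(1/3,2/3)$. The only cosmetic difference is that you isolate the parameter-free aggregation $\mathbf{Z}\mapsto\Tilde{\mathbf{A}}\mathbf{Z}$ as an explicit zeroth factor, whereas the paper absorbs the factor $c$ directly into the input-norm constants $c_{i-1}$; your extra care about the ceiling in \cref{lemma:cover_matrix} and the $\sigma(\mathbf{0})=\mathbf{0}$ convention addresses details the paper glosses over.
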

\begin{proof}
With a little abuse of notation, remove the superscript in \cref{def:mlp} for now
\begin{equation*}
    \mathcal{F}^l = \{(\Tilde{\mathbf{A}}, \mathbf{H}) \mapsto \sigma \left( \cdots \sigma\left(
    \left(
    \Tilde{\mathbf{A}}\mathbf{H} 
    \right)\mathbf{W}_1
    \right) \cdots \mathbf{W}_{r-1}
    \right) \mathbf{W}_r
    : \mathbf{W}_i \in \mathcal{B}_i\}
\end{equation*}
where $\mathcal{B}_i := \left\{\mathbf{W}_i: \|\mathbf{W}_i^\top\|_{\sigma} \leq \kappa_i, \|\mathbf{W}_i\|_{2,1} \leq  b_i\right\}$. Denote $\mathcal{F}_i = \left\{\mathbf{Z} \mapsto \sigma(\mathbf{ZW_i}): \mathbf{W}_i \in \mathcal{B}_i\right\}$ for $i \in [r-1]$, $\mathcal{F}_r = \{\mathbf{Z} \mapsto \mathbf{ZW_r}: \mathbf{W}_r \in \mathcal{B}_r\}$, then
\begin{equation*}
    \mathcal{F}^l = \mathcal{F}_r \circ \mathcal{F}_{r-1} \circ \cdots \mathcal{F}_1
\end{equation*} 

For any $f_i\in \mathcal{F}_i, i\in[r-1]$ with arbitrary input $\mathbf{Z, Z}'$
\begin{align*}
    \|f_i(\mathbf{Z}) - f_i(\mathbf{Z}')\|_2 &= \|\sigma(\mathbf{ZW}_i) - \sigma(\mathbf{Z'W}_i)\|_2 \\
    &\leq \|\mathbf{ZW}_i - \mathbf{Z'W}_i\|_2 \tag{\cref{lemma:lip_sigma}}\\
    &\leq \|\mathbf{W}_i^\top\|_{\sigma}\| \mathbf{Z} - \mathbf{Z}' \|_2 \tag{\cref{lemma:lip_spetral}}\\
    &= \kappa_i \| \mathbf{Z} - \mathbf{Z}' \|_2 
\end{align*}

Similarly, for any $f_r \in \mathcal{F}_r'$, \cref{lemma:lip_spetral} gives
\begin{equation*}
    \|f_r(\mathbf{Z}) - f_r(\mathbf{Z}')\|_2 = \|\mathbf{ZW}_r - \mathbf{Z'W}_r\|_2 \leq \kappa_r \| \mathbf{Z} - \mathbf{Z}' \|_2
\end{equation*}

Denoting $\mathbf{Z}_{i-1}$ as the input to $\mathcal{F}_i$ ($\mathbf{Z}_0 = \mathbf{Z} = \mathbf{\Tilde{A}H}$) and using the Lipschitz conditions, we have
\begin{equation} \label{eq:lip_func}
    f_i(f_{i-1}(\dots f_1(\mathbf{Z}))) \leq \left(\prod_{j=1}^i\kappa_j\right)\|\mathbf{Z}\|_2 \leq c\left(\prod_{j=1}^i\kappa_j\right)\|\mathbf{H}\|_2
\end{equation}
for any $f_i \circ f_{i-1} \circ \cdots \circ f_1 \in \mathcal{F}_i \circ \mathcal{F}_{i-1} \circ \cdots \circ \mathcal{F}_1$. So $\|\mathbf{Z}_{i-1}\|_2 \leq c\left(\prod_{j=1}^{i-1}\kappa_j\right)\|\mathbf{H}\|_2 \triangleq c_{i-1}$. By \cref{lemma:cover_matrix} and \cref{lemma:cover_lip}, we have
\begin{equation*}
    \ln\mathcal{N}\left(\epsilon_i, \mathcal{F}_i, \rho\right) \leq \frac{b_i^2 c_{i-1}^2}{\epsilon_i^2}\ln(2\bar{d}^2)
\end{equation*}
$W$ is the maximum dimension of weight matrices (as previously defined in the main text). Thus by \cref{lemma:cover_composition}, we have the covering number
\begin{equation*}
    \ln\mathcal{N}\left(\epsilon, \mathcal{F}^l, \rho\right) 
    \leq \sum_{i=1}^r \frac{b_i^2 c_{i-1}^2}{\epsilon_i^2}\ln(2\bar{d}^2) 
    = c^2\|\mathbf{H}\|_2^2\ln(2\bar{d}^2)\sum_{i=1}^r \frac{b_i^2 \left(\prod_j^{i-1}\kappa_j\right)^2}{\epsilon_i^2}
\end{equation*}
with cover radius $\epsilon = \sum_{i=1}^r\left(\epsilon_i\prod_{j=i+1}^r\kappa_j\right)$. Next we need to choose $\epsilon_i$ to minimize the right hand side of the above inequality. Holder's inequality states that when $\frac{1}{p} + \frac{1}{q} = 1$,
\begin{align*}
    \langle\mathbf{a}, \mathbf{b} \rangle &\leq \|\mathbf{a}\|_p\|\mathbf{b}\|_q \\
    \sum_ia_ib_i &\leq \left(\sum_ia_i^p\right)^{1/p}\left(\sum_ib_i^q\right)^{1/q}
\end{align*}
Let $\alpha_i^2 = b_i^2 \left(\prod_j^{i-1}\kappa_j\right)^2, \beta_i=\prod_{j=i+1}^{r}\kappa_j$. Choose $p=\frac{1}{3}, q=\frac{2}{3}$,
\begin{equation}\label{eq:holder}
\begin{aligned}
    \left[\sum_i\left(\frac{\alpha_i}{\epsilon_i}\right)^{\frac{2}{3}\times 3}\right]^{\frac{1}{3}}
    \left[\sum_i\left(\beta_i\epsilon_i\right)^{\frac{2}{3}\times \frac{3}{2}}\right]^{\frac{2}{3}} 
    \geq 
    \sum_i\left(\alpha_i\beta_i\right)^{\frac{2}{3}} &\\
    \left(\sum_i\left(\frac{\alpha_i}{\epsilon_i}\right)^{2}\right)
    \left(\sum_i\beta_i\epsilon_i\right)^{2} 
    \geq 
    \left(\sum_i\left(\alpha_i\beta_i\right)^{\frac{2}{3}}\right)^3 &\\
    \left(\sum_{i=1}^r \frac{b_i^2 \left(\prod_{j=1}^{i-1}\kappa_j\right)^2}{\epsilon_i^2}\right)
    \left(\sum_{i=1}^r\left(\epsilon_i\prod_{j=i+1}^r\kappa_j\right)\right)^{2} 
    \geq 
    \prod_{j=1}^r\kappa_j^2 \left(\sum_{i=1}^r\left(\frac{b_i}{\kappa}_i\right)^{\frac{2}{3}}\right)^3 &\\
    \sum_{i=1}^r \frac{b_i^2 \left(\prod_{j=1}^{i-1}\kappa_j\right)^2}{\epsilon_i^2} 
    \geq 
    \frac{1}{\epsilon^2} \prod_{j=1}^r\kappa_j^2 \left(\sum_{i=1}^r\left(\frac{b_i}{\kappa}_i\right)^{\frac{2}{3}}\right)^3&
\end{aligned}
\end{equation}
Add the superscript back,
\begin{equation*}
    \ln\mathcal{N}\left(\epsilon, \mathcal{F}^l, \rho\right)
    \leq 
    \ln(2\bar{d}^2) 
    \frac{c^2\|\mathbf{H}^{(l-1)}\|_2^2}{\epsilon^2} \prod_{j=1}^r\left(\kappa_j^{(l)}\right)^2 
    \left(\sum_{i=1}^r\left(\frac{b_i^{(l)}}{\kappa_i^{(l)}}\right)^{\frac{2}{3}}\right)^3
\end{equation*}
$\mathbf{H}^{(l-1)} = f^{(l-1)}\left(\Tilde{\mathbf{A}}f^{(l-2)}\left(\dots f^{(1)}\left(\Tilde{\mathbf{A}}\mathbf{X}\right)\right)\right)$ where $f^{(k)} \in \mathcal{F}^k$ for $k\in [l-1]$. By \cref{eq:lip_func}, it is easy to see
\begin{align*}
    \|\mathbf{H}^{(l-1)}\|_2 
    &\leq c\prod_{j=1}^r\kappa_j^{(l-1)} \|f^{(l-2)}\left(\dots f^{(1)}\left(\Tilde{\mathbf{A}}\mathbf{X}\right)\right) \|_2 \\
    & \leq  \dots \\
    & \leq c^{l-1}\|\mathbf{X}\|_2 \prod_{i=1}^{l-1}\left(\prod_{j=1}^r\kappa_j^{(i)}\right) \\
    & = c^{l-1}\|\mathbf{X}\|_2 \prod_{i\leq l-1, j \leq r} \kappa^{(i)}_j
\end{align*}
Letting $\kappa_i = \prod_{j \leq r} \kappa^{(i)}_j, \tau_l = \left(\sum_{i=1}^r\left(\frac{b^{(l)}_i}{\kappa^{(l)}_i}\right)^{2/3}\right)^{3/2}$, we finish the proof, i.e., 
\begin{equation*}
    \ln\mathcal{N}\left(\epsilon, \mathcal{F}^l, \rho\right) \leq \frac{c^{2l}\tau_l^2\left(\prod_{i=1}^l\kappa_i\right)^2}{\epsilon^2}\|\mathbf{X}\|_2^2\ln(2\bar{d}^2)
\end{equation*}
\end{proof}

With the covering number of $\mathcal{F}_G$, we can calculate the covering number of $\mathcal{F}$ in \cref{def:F}.
\begin{lemma}[Covering number of $\mathcal{F}$] \label{lemma:f_cover}
    Suppose $\theta$ in the kernel (\cref{def:kernel}) is fixed. For any $\epsilon > 0$
    \begin{equation*}
        \ln\mathcal{N}\left(\epsilon, \mathcal{F}, \rho\right) \leq  \frac{64\theta KR_G}{n\epsilon^2} + Kmd\ln\left(\frac{24b_D\sqrt{\theta N}}{\sqrt{m}\epsilon}\right)
    \end{equation*}
    where $R_G$ is defined the same as \cref{lemma:gin_cover}.
\end{lemma}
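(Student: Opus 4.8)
The plan is to treat an element $f\in\mathcal{F}=\mathcal{F}_D\circ\mathcal{F}_G$ (see \cref{def:F}) as a pair consisting of a GIN map $f_G\in\mathcal{F}_G$ and a reference matrix $\mathbf{D}\in\mathbb{R}^{Km\times d}$ with $\|\mathbf{D}\|_2\le b_D$, to cover these two ingredients separately, to form the product cover, and to calibrate the two scales at the end so that the combined radius equals $\epsilon$. This follows the same composition idea as \cref{lemma:cover_composition}, except the ``outer'' class $\mathcal{F}_D$ is parametrized by $\mathbf{D}$, so I will carry out the two triangle-inequality steps directly rather than invoking that lemma verbatim.

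First I would turn \cref{lemma:cover_mmd} into a $\rho$-estimate. For $f=f_D^{\mathbf{D}}\circ f_G$ the output on the huge graph $\mathbf{G}$ is $\mathbf{S}\in\mathbb{R}^{N\times K}$ with $s_{ik}=-\mathrm{MMD}^2(\mathbf{H}^{(L)}_i,\mathbf{D}_k)$, and by \cref{def:metric_rho} the metric $\rho(f,f')=\|\mathbf{S}-\mathbf{S}'\|_2$ is the Frobenius norm of the difference. Applying \cref{lemma:cover_mmd} entrywise gives, for every $(i,k)$, $|s_{ik}-s'_{ik}|\le 4\sqrt{\theta}\big(n_i^{-1/2}\|\mathbf{H}^{(L)}_i-\mathbf{H}^{(L)\prime}_i\|_2+m^{-1/2}\|\mathbf{D}_k-\mathbf{D}'_k\|_2\big)$. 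Taking the $\ell_2$-norm over the $NK$ pairs, using the triangle inequality in $\ell_2(\mathbb{R}^{NK})$, the bound $n_i\ge n=\min_i n_i$, and the block identities $\sum_i\|\mathbf{H}^{(L)}_i-\mathbf{H}^{(L)\prime}_i\|_2^2=\|\mathbf{H}^{(L)}-\mathbf{H}^{(L)\prime}\|_2^2$ and $\sum_k\|\mathbf{D}_k-\mathbf{D}'_k\|_2^2=\|\mathbf{D}-\mathbf{D}'\|_2^2$, I obtain
\begin{equation*}
\rho(f,f')\le 4\sqrt{\tfrac{\theta K}{n}}\,\|\mathbf{H}^{(L)}-\mathbf{H}^{(L)\prime}\|_2+4\sqrt{\tfrac{\theta N}{m}}\,\|\mathbf{D}-\mathbf{D}'\|_2 .
\end{equation*}
In particular, fixing $\mathbf{D}$ and moving only $f_G$ changes $\mathbf{S}$ by at most $4\sqrt{\theta K/n}\,\rho(f_G,f_G')$ (this is just \cref{lemma:cover_lip} with Lipschitz constant $4\sqrt{\theta K/n}$), and fixing $f_G$ and moving only $\mathbf{D}$ changes $\mathbf{S}$ by at most $4\sqrt{\theta N/m}\,\|\mathbf{D}-\mathbf{D}'\|_2$.

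Next I would combine the covers and calibrate. By \cref{lemma:gin_cover}, $\mathcal{F}_G$ has an $\epsilon_1$-cover $\mathcal{C}_G$ with $\ln|\mathcal{C}_G|\le R_G/\epsilon_1^2$; and the Euclidean ball $\{\mathbf{D}:\|\mathbf{D}\|_2\le b_D\}\subset\mathbb{R}^{Kmd}$ has an $\epsilon_2$-cover $\mathcal{C}_D$ with $\ln|\mathcal{C}_D|\le Kmd\ln(3b_D/\epsilon_2)$ by the standard volumetric bound. Given $f=f_D^{\mathbf{D}}\circ f_G$, pick $f_G'\in\mathcal{C}_G$ within $\epsilon_1$ and $\mathbf{D}'\in\mathcal{C}_D$ within $\epsilon_2$; then
\begin{equation*}
\rho\big(f,f_D^{\mathbf{D}'}\!\circ f_G'\big)\le\rho\big(f_D^{\mathbf{D}}\!\circ f_G,\,f_D^{\mathbf{D}}\!\circ f_G'\big)+\rho\big(f_D^{\mathbf{D}}\!\circ f_G',\,f_D^{\mathbf{D}'}\!\circ f_G'\big)\le 4\sqrt{\tfrac{\theta K}{n}}\,\epsilon_1+4\sqrt{\tfrac{\theta N}{m}}\,\epsilon_2
\end{equation*}
by the two estimates above, so $\{f_D^{\mathbf{D}'}\circ f_G':f_G'\in\mathcal{C}_G,\ \mathbf{D}'\in\mathcal{C}_D\}$ is a cover of $\mathcal{F}$ of radius $4\sqrt{\theta K/n}\,\epsilon_1+4\sqrt{\theta N/m}\,\epsilon_2$ with $\ln$-cardinality at most $R_G/\epsilon_1^2+Kmd\ln(3b_D/\epsilon_2)$. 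Choosing $\epsilon_1=\epsilon\sqrt{n}/(8\sqrt{\theta K})$ and $\epsilon_2=\epsilon\sqrt{m}/(8\sqrt{\theta N})$ makes each summand of the radius equal to $\epsilon/2$, and substituting yields $R_G/\epsilon_1^2=64\theta K R_G/(n\epsilon^2)$ and $3b_D/\epsilon_2=24b_D\sqrt{\theta N}/(\sqrt{m}\,\epsilon)$, which is exactly the claimed bound.

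The whole argument rests on \cref{lemma:cover_mmd} and \cref{lemma:gin_cover}, so the remaining work is careful bookkeeping; the one genuinely delicate point is the first step — accumulating the $\sqrt{K}$ and $\sqrt{N}$ factors correctly when passing from the $NK$ entrywise MMD estimates to a single Frobenius-norm estimate, uniformly replacing each $n_i$ by $n=\min_i n_i$, and, crucially, combining the $f_G$-perturbation and the $\mathbf{D}$-perturbation \emph{additively} through a two-step triangle inequality rather than in quadrature, since the latter would inflate the constants and spoil the stated $64$ and $24$.
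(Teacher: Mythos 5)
Your proposal is correct and follows essentially the same route as the paper: the same entrywise MMD perturbation bound from \cref{lemma:cover_mmd}, the same product cover of $\mathcal{F}_G$ (via \cref{lemma:gin_cover}) and of the ball $\{\|\mathbf{D}\|_2\le b_D\}$, and the same calibration $\epsilon_1=\epsilon\sqrt{n}/(8\sqrt{\theta K})$, $\epsilon_2=\epsilon\sqrt{m}/(8\sqrt{\theta N})$. The only (cosmetic) difference is that you aggregate the $NK$ entrywise estimates additively via Minkowski's inequality while the paper uses $(x+y)^2\le 2(x^2+y^2)$ and sums in quadrature; contrary to your closing remark, the quadrature route does \emph{not} spoil the constants --- with the balanced choice of $\epsilon_1,\epsilon_2$ it also yields radius exactly $\epsilon$ and the identical $64$ and $24$.
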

\begin{proof}
    Denote $\mathbf{S} \in \mathbb{R}^{N\times K}$ as the output of function $\mathcal{F}$. Consider the entry $(i,j)$ of $\mathbf{S}$, by \cref{lemma:cover_mmd}, it has
    \begin{align*}
        |s_{ij} - s_{ij}'|  
        &= \left|\mathrm{MMD}^2\left(\mathbf{H}_i, \mathbf{D}_j\right) - \mathrm{MMD}^2\left(\mathbf{H}_i', \mathbf{D}_j'\right)\right|  \\
        &\leq 4\sqrt{\theta}\left(n^{-1/2}\|\mathbf{H}_i - \mathbf{H}_i'\|_2 + m^{-1/2} \|\mathbf{D}_j - \mathbf{D}_j'\|_2\right)\\
    \end{align*}
    Then for the whole matrix $\mathbf{S}$,
    \begin{align*}
        \|\mathbf{S} - \mathbf{S}'\|_2 
        &= \sqrt{\sum_{i=1}^N\sum_{j=1}^K|s_{ij} - s_{ij}'|^2} \\
        &\leq 
        4\sqrt{\theta}\sqrt{\sum_{i=1}^N\sum_{j=1}^K\left(n^{-1/2}\|\mathbf{H}_i - \mathbf{H}_i'\|_2 + m^{-1/2}\|\mathbf{D}_j - \mathbf{D}_j'\|_2\right)^2} \\
        &\overset{(a)}{\leq}
        4\sqrt{\theta}\sqrt{\sum_{i=1}^N\sum_{j=1}^K2\left(n^{-1}\|\mathbf{H}_i - \mathbf{H}_i'\|_2^2 + m^{-1}\|\mathbf{D}_j - \mathbf{D}_j'\|_2^2\right)} \\
        &=
        4\sqrt{2\theta}\sqrt{\left(Kn^{-1}\|\mathbf{H} - \mathbf{H}'\|_2^2 + Nm^{-1}\|\mathbf{D} - \mathbf{D}'\|_2^2\right)} \\
    \end{align*}
    where (a) holds due to $(x+y)^2 \leq 2(x^2 + y^2)$. If $\|\mathbf{H} - \mathbf{H}'\|_2 \leq \epsilon_1$ and $\|\mathbf{D} - \mathbf{D}'\|_2 \leq \epsilon_2$, then 
    \begin{align*}
        \|\mathbf{S} - \mathbf{S}'\|_2
        &\leq
        4\sqrt{2\theta}\sqrt{\left(Kn^{-1}\epsilon_1^2 + Nm^{-1}\epsilon_2^2\right)}\\
        &\leq
        \epsilon
    \end{align*}
    by choosing $\epsilon_1 = \frac{\sqrt{n}}{8\sqrt{K\theta}}\epsilon$ and $\epsilon_2 = \frac{\sqrt{m}}{8\sqrt{N\theta}}\epsilon$. Let $\mathcal{B}_D := \{\mathbf{D}\in\mathbb{R}^{Km\times d}: \|\mathbf{D}\|_2 \leq b_D\}$. It is well-known that there exists an $\epsilon_2$-cover obeying
    \begin{equation}\label{eq:cover_T}
        \mathcal{N}\left(\epsilon_2, \mathcal{B}_D, \|\cdot\|_2\right) \leq \left(\frac{3b_D}{\epsilon_2}\right)^{Kmd}
    \end{equation}
Denote the output space of $\mathcal{F}_G, \mathcal{F}$ as $\mathcal{H}, \mathcal{Z}$ respectively,  we can bound the covering number as
    \begin{align*}
        \mathcal{N}\left(\epsilon, \mathcal{F}, \rho\right) 
        &=
        \mathcal{N}\left(\epsilon, \mathcal{Z}, \|\cdot\|_2\right) \\
        &\leq
        \mathcal{N}\left(\epsilon_1, \mathcal{H}, \|\cdot\|_2\right) \mathcal{N}\left(\epsilon_2, \mathcal{B}_D, \|\cdot\|_2\right) 
    \end{align*}
It follows from \cref{lemma:gin_cover} and inequality \eqref{eq:cover_T} that
    \begin{align*}
           \ln{\mathcal{N}\left(\epsilon, \mathcal{F}, \rho\right)}
           &\leq \ln{\mathcal{N}\left(\epsilon_1, \mathcal{H}, \|\cdot\|_2\right)} + \ln{\mathcal{N}\left(\epsilon_2, \mathcal{B}_D, \|\cdot\|_2\right)} \\
        &\leq 
        % \frac{64\theta m R_G}{n\epsilon^2} + Kmd\ln{\left(\frac{24b_D\sqrt{\theta n}}{\sqrt{m}\epsilon}\right)} \tag{\cref{lemma:gin_cover}, \cref{eq:cover_T}}
        \frac{64\theta KR_G}{n\epsilon^2} + Kmd\ln\left(\frac{24b_D\sqrt{\theta N}}{\sqrt{m}\epsilon}\right)
    \end{align*}
\end{proof}

\subsection{Proof of \cref{lemma:dudley}} \label{proof:dudley}
The Dudley entropy integral bound used by \citet{bartlett2017spectrally} is
\begin{lemma}[Lemma A.5 of \cite{bartlett2017spectrally}]\label{lemma:origin_dudley}
    Let $\mathcal{F}$ be a real-valued function class taking values in $[0, 1]$, and assume that $\mathbf{0} \in \mathcal{F}$. Then
    \begin{equation*}
        \mathcal{R}_{\mathcal{G}}(\mathcal{F}) \leq \inf_{\alpha > 0} \left(\frac{4\alpha}{\sqrt{N}}+\frac{12}{N}\int_{\alpha}^{\sqrt{N}} \sqrt{\ln \mathcal{N}\left(\epsilon, \mathcal{F}, \rho\right)} \,d\epsilon \right).
    \end{equation*}
\end{lemma}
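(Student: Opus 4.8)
The plan is to prove this unit-range Dudley bound directly by the classical multi-scale chaining argument, so that the rescaled version \cref{lemma:dudley} used in the main proofs follows as an immediate corollary. First I would restrict every $f\in\mathcal{F}$ to the sample, identifying $f$ with the vector $f(X)=(f(G_1),\dots,f(G_N))\in[0,1]^N$, so that $\rho$ becomes the Euclidean metric on $\mathbb{R}^N$ and the scaled process $V_f:=\tfrac1N\langle\sigma,f(X)\rangle$ satisfies $\mathcal{R}_{\mathcal{G}}(\mathcal{F})=\E_\sigma\sup_{f}V_f$. Since $\mathbf{0}\in\mathcal{F}$ and each coordinate lies in $[0,1]$, every $f$ obeys $\rho(f,\mathbf{0})=\|f(X)\|_2\le\sqrt N$, so the singleton $\{\mathbf{0}\}$ is a trivial $\sqrt N$-cover and, in particular, $\mathcal{R}_{\mathcal{G}}(\mathcal{F})\le1$ by Cauchy–Schwarz.

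Next I would set geometric scales $\epsilon_j=2^{-j}\sqrt N$ for $j\ge0$, take a minimal $\epsilon_j$-cover $\mathcal{C}_j$ (with $\mathcal{C}_0=\{\mathbf{0}\}$), and for each $f$ let $\pi_j(f)\in\mathcal{C}_j$ be a nearest element, so $\rho(f,\pi_j(f))\le\epsilon_j$. Telescoping $f(X)=\sum_{j=1}^{J}\big(\pi_j(f)(X)-\pi_{j-1}(f)(X)\big)+\big(f(X)-\pi_J(f)(X)\big)$ for a level $J$ to be chosen gives, by linearity, $V_f=\sum_{j=1}^J\tfrac1N\langle\sigma,\pi_j(f)(X)-\pi_{j-1}(f)(X)\rangle+\tfrac1N\langle\sigma,f(X)-\pi_J(f)(X)\rangle$. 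The approximation (tail) term is controlled deterministically by Cauchy–Schwarz, $\tfrac1N\langle\sigma,f(X)-\pi_J(f)(X)\rangle\le\tfrac1N\|\sigma\|_2\,\epsilon_J=2^{-J}$, uniformly in $f$, so it requires no probabilistic argument.

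The heart of the argument is bounding each chaining increment. Each increment is a centered random variable in $\sigma$ of the form $\tfrac1N\langle\sigma,w\rangle$ with $w=\pi_j(f)(X)-\pi_{j-1}(f)(X)$, where $\|w\|_2\le\rho(\pi_j(f),f)+\rho(f,\pi_{j-1}(f))\le3\epsilon_j$, and as $f$ varies the vector $w$ ranges over a set of size at most $|\mathcal{C}_j||\mathcal{C}_{j-1}|\le\mathcal{N}(\epsilon_j,\mathcal{F},\rho)^2$. Massart's finite-class maximal inequality for sub-Gaussian averages then yields $\E_\sigma\sup_f\tfrac1N\langle\sigma,w\rangle\le\tfrac{3\epsilon_j}{N}\sqrt{2\ln\big(\mathcal{N}(\epsilon_j)^2\big)}=\tfrac{6\epsilon_j}{N}\sqrt{\ln\mathcal{N}(\epsilon_j)}$. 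Summing over $j$ and using that the sup of a sum is at most the sum of sups gives $\mathcal{R}_{\mathcal{G}}(\mathcal{F})\le2^{-J}+\sum_{j=1}^J\tfrac{6\epsilon_j}{N}\sqrt{\ln\mathcal{N}(\epsilon_j)}$.

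Finally I would convert the sum into the Dudley integral. Since $\epsilon\mapsto\ln\mathcal{N}(\epsilon,\mathcal{F},\rho)$ is nonincreasing, on the interval $[\epsilon_{j+1},\epsilon_j]$ (of width $\epsilon_j/2$) the integrand dominates $\sqrt{\ln\mathcal{N}(\epsilon_j)}$, so $\epsilon_j\sqrt{\ln\mathcal{N}(\epsilon_j)}\le2\int_{\epsilon_{j+1}}^{\epsilon_j}\sqrt{\ln\mathcal{N}(\epsilon)}\,d\epsilon$; telescoping gives $\mathcal{R}_{\mathcal{G}}(\mathcal{F})\le2^{-J}+\tfrac{12}{N}\int_{\epsilon_{J+1}}^{\sqrt N}\sqrt{\ln\mathcal{N}(\epsilon)}\,d\epsilon$. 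Given $\alpha\in(0,\sqrt N/2)$ I then pick the largest $J$ with $\epsilon_{J+1}\ge\alpha$; since $\epsilon_{J+2}<\alpha$ this forces $2^{-J}<4\alpha/\sqrt N$, while $\epsilon_{J+1}\ge\alpha$ lets me raise the integral's lower limit to $\alpha$, producing exactly $\tfrac{4\alpha}{\sqrt N}+\tfrac{12}{N}\int_\alpha^{\sqrt N}\sqrt{\ln\mathcal{N}(\epsilon)}\,d\epsilon$, and taking the infimum over $\alpha$ finishes the proof (the remaining range $\alpha\ge\sqrt N/2$ is trivial because then $4\alpha/\sqrt N\ge2\ge\mathcal{R}_{\mathcal{G}}(\mathcal{F})$). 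The main obstacle is the constant bookkeeping in this last step: landing precisely on $4$ and $12$ with integral limits $[\alpha,\sqrt N]$ requires the right geometric spacing, the factor-two loss from counting consecutive-cover pairs (hence the squared covering number $\mathcal{N}^2$) inside Massart's inequality, and the monotonicity comparison for the sum-to-integral passage; the sub-Gaussian maximal inequality itself is a standard ingredient.
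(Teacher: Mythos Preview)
Your chaining argument is correct and is precisely the standard proof of this Dudley entropy integral bound; the constant bookkeeping (the $3\epsilon_j$ link length, the $\mathcal{N}(\epsilon_j)^2$ cardinality giving the factor $6$, the sum-to-integral doubling giving $12$, and the choice of $J$ giving the $4\alpha/\sqrt N$ approximation term) all check out. However, there is nothing to compare against: the paper does not prove this lemma at all. It is quoted verbatim as Lemma~A.5 of \cite{bartlett2017spectrally} and used as a black box; the only argument the paper supplies in this vicinity is the one-line rescaling that deduces \cref{lemma:dudley} from it (apply the cited lemma to $\mathcal{F}'=\gamma^{-1}\mathcal{F}_\gamma$, change variables $\epsilon\mapsto\epsilon/\gamma$ in the integral, and multiply through by $\gamma$). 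So your proposal goes strictly beyond what the paper does---you are reproducing the proof from the cited reference rather than the paper itself---which is fine if you want the manuscript to be self-contained, but is not needed to match the paper's level of detail.
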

\cref{lemma:dudley} can be proved with simple modifications.
\begin{proof}
Let $\mathcal{F}' = \psi \circ \mathcal{F}_\gamma$ where $\psi(x) = \frac{1}{\gamma}x$, then $\mathcal{F}'$ is a real-valued function class taking values in $[0, 1]$. By \cref{lemma:origin_dudley}, it has
\begin{align*}
    \mathcal{R}_{\mathcal{G}}(\mathcal{F}') 
    &\leq \inf_{\alpha > 0} \left(\frac{4\alpha}{\sqrt{N}}+\frac{12}{N}\int_{\alpha}^{\sqrt{N}} \sqrt{\ln \mathcal{N}\left(\frac{\epsilon}{\gamma}, \mathcal{F}', \rho\right)} \,d(\frac{\epsilon}{\gamma}) \right) \\
    & = \inf_{\alpha > 0} \left(\frac{4\alpha}{\sqrt{N}}+\frac{12}{N\gamma}\int_{\gamma\alpha}^{\gamma\sqrt{N}} \sqrt{\ln \mathcal{N}\left(\frac{\epsilon}{\gamma}, \psi \circ \mathcal{F}_\gamma, \rho\right)} \,d\epsilon\right) \\
    & \leq \inf_{\alpha > 0} \left(\frac{4\alpha}{\sqrt{N}}+\frac{12}{N\gamma}\int_{\gamma\alpha}^{\gamma\sqrt{N}}\sqrt{\ln \mathcal{N}\left(\epsilon, \mathcal{F}_\gamma, \rho\right)} \,d\epsilon\right) \tag{\cref{lemma:cover_lip}}\\ 
\end{align*}
Multiplying both side by $\gamma$, it has
\begin{equation*}
    \mathcal{R}_{\mathcal{G}}(\mathcal{F_\gamma}) \leq \inf_{\alpha > 0} \left(\frac{4\alpha\gamma}{\sqrt{N}}+\frac{12}{N}\int_{\gamma\alpha}^{\gamma\sqrt{N}} \sqrt{\ln \mathcal{N}\left(\epsilon, \mathcal{F}_\gamma, \rho\right)} \,d\epsilon \right).
\end{equation*}
\end{proof}

\subsection{Proof of \cref{lemma:gin_cover}} \label{proof:gin_cover}
\begin{proof}
By \cref{lemma:mlp_cover}, we know 
\begin{equation*}
    \ln\mathcal{N}\left(\epsilon_l, \mathcal{F}^l, \rho\right) \leq \frac{c^{2l}\tau_l^2\left(\prod_{i=1}^l\kappa_i\right)^2}{\epsilon_l^2}\|\mathbf{X}\|_2^2\ln(2\bar{d}^2)
\end{equation*}
For any $f_l \in \mathcal{F}^l$, given input $\mathbf{Z}$, it has
\begin{align*}
    \|f_l(\mathbf{Z})\|_2 
    &= \|\sigma \left( \cdots \sigma\left(
    \left(
    \Tilde{\mathbf{A}}\mathbf{Z}
    \right)\mathbf{W}_1^{(l)}
    \right) \cdots \mathbf{W}_{r-1}^{(l)}
    \right) \mathbf{W}_r^{(l)}\|_2 \\
    &\leq \kappa_r^{(l)}\|\sigma \left( \cdots \sigma\left(
    \left(
    \Tilde{\mathbf{A}}\mathbf{Z}
    \right)\mathbf{W}_1^{(l)}
    \right) \cdots \mathbf{W}_{r-1}^{(l)}\right)\|_2 \tag{\cref{lemma:lip_spetral}}\\
    &\leq \|\sigma \left( \cdots \sigma\left(
    \left(
    \Tilde{\mathbf{A}}\mathbf{Z}
    \right)\mathbf{W}_1^{(l)}
    \right) \cdots \mathbf{W}_{r-2}^{(l)}
    \right) \mathbf{W}_{r-1}^{(l)}\|_2 
    \tag{\cref{lemma:lip_sigma}}\\
    &\leq \dots \\
    &\leq \left(\prod_{j=1}^r \kappa_j^{(l)}\right)\|\mathbf{AZ}\|_2 \\
    &\leq c\left(\prod_{j=1}^r \kappa_j^{(l)}\right)\|\mathbf{Z}\|_2 \\
    &= c\kappa_l \|\mathbf{Z}\|_2
\end{align*}
which means all $f_l \in \mathcal{F}^l$ is $c\kappa_l$-Lipschitz. Applying \cref{lemma:cover_composition}, we have
\begin{equation*}
    \ln\mathcal{N}(\epsilon, \mathcal{F}', \rho) 
    \leq \sum_{l=1}^L\frac{\tau_l^2\left(\prod_{j=1}^lc\kappa_j\right)^2}{\epsilon_l^2}\|\mathbf{X}\|_2^2\ln(2\bar{d}^2)
\end{equation*}
with $\epsilon = \sum_{l=1}^L\left(\epsilon_l\prod_{j=l+1}^rc\kappa_j\right)$. The only thing left is to minimize $\sum_{l=1}^L\frac{\tau_l^2\left(\prod_{j=1}^lc\kappa_j\right)^2}{\epsilon_l^2}$ by controlling $\epsilon_l$'s. Choose $\alpha_l^2 = \tau_l^2 \left(\prod_{j=1}^lc\kappa_j\right)^2, \beta_l=\prod_{j=l+1}^{L}c\kappa_j$. Choose $p=\frac{1}{3}, q=\frac{2}{3}$, and apply Holder's inequality in the same ways as \cref{eq:holder}, this yields
\begin{align*}
    \left(\sum_l\left(\frac{\alpha_l}{\epsilon_l}\right)^{2}\right)
    \left(\sum_l\beta_l\epsilon_l\right)^{2} 
    \geq 
    \left(\sum_l\left(\alpha_l\beta_l\right)^{\frac{2}{3}}\right)^3 &\\
    \left(\sum_{l=1}^L \frac{\tau_l^2 \left(\prod_{j=1}^{l}c\kappa_j\right)^2}{\epsilon_l^2}\right)
    \left(\sum_{l=1}^L\left(\epsilon_l\prod_{j=l+1}^Lc\kappa_j\right)\right)^{2} 
    \geq 
    c^{2L}\prod_{j=1}^L\kappa_j^2 \left(\sum_{l=1}^L\left(\tau_l\right)^{\frac{2}{3}}\right)^3 &\\
    \sum_{l=1}^L \frac{\tau_l^2 \left(\prod_{j=1}^{l}c\kappa_j\right)^2}{\epsilon_l^2} 
    \geq \frac{1}{\epsilon^2} c^{2L} \prod_{j=1}^L\kappa_j^2 \left(\sum_{l=1}^L\left(\tau_l\right)^{\frac{2}{3}}\right)^3 &
\end{align*}
Thus derives the conclusion
\begin{equation*}
    \ln\mathcal{N}(\epsilon, \mathcal{F}_G, \rho) \leq \frac{R_G}{\epsilon^2}
\end{equation*}
where $R_G = c^{2L} \|\mathbf{X}\|_2^2\ln(2\bar{d}^2)\left(\prod_{l=1}^L\kappa_l^2\right) \left(\sum_{l=1}^L\left(\tau_l\right)^{\frac{2}{3}}\right)^3$.
\end{proof}

\subsection{Proof of \cref{thm:generalization}} \label{proof:generalization}

\begin{proof}
Since $l_{\gamma}\left(\cdot, y\right)$ is $\mu$-Lipschitz, we can bound the covering number of $\mathcal{F}_{\gamma}$ (defined in \cref{lemma:general_upper_bound})
\begin{align*}
    \ln{\mathcal{N}\left(\epsilon, \mathcal{F}_{\gamma}, \rho\right)}
    &\leq 
    \ln{\mathcal{N}\left(\frac{\epsilon}{\mu}, \mathcal{F}, \rho\right)} \tag{\cref{lemma:cover_lip}} \\
    & \leq
    \frac{64\theta KR_G\mu^2}{n\epsilon^2} + Kmd\ln\left(\frac{24b_D\mu\sqrt{\theta N}}{\sqrt{m}\epsilon}\right) \tag{\cref{lemma:f_cover}}
\end{align*}
Denote $v_1 = \frac{64\theta KR_G\mu^2}{n}, 
v_2 = Km\bar{d}, v_3 = \frac{24\sqrt{\theta N}b_D\mu}{\sqrt{m}}$, then by \cref{lemma:dudley}, we can bound the Rademacher complexity
\begin{align*}
    \mathcal{R}_{\mathcal{G}}(\mathcal{F_\gamma}) 
    &\leq 
    \inf_{\alpha > 0} \left(\frac{4\alpha\gamma}{\sqrt{N}}+\frac{12}{N}\int_{\gamma\alpha}^{\gamma\sqrt{N}} \sqrt{\frac{v_1}{\epsilon^2}+v_2\ln{\frac{v_3}{\epsilon}}} \,d\epsilon \right) \\
    &\overset{(a)}{\leq}
    \inf_{\alpha > 0} \left(\frac{4\alpha\gamma}{\sqrt{N}}+\frac{12}{N}\int_{\gamma\alpha}^{\gamma\sqrt{N}} \sqrt{\frac{v_1 + v_2}{\epsilon^2}+v_2\ln{v_3}} \,d\epsilon \right)\\
    &\overset{(b)}{\leq}
    \inf_{\alpha > 0} \left(\frac{4\alpha\gamma}{\sqrt{N}}+\frac{12}{N}\left(\int_{\gamma\alpha}^{\gamma\sqrt{N}} \frac{\sqrt{v_1 + v_2}}{\epsilon} + \sqrt{v_2\ln{v_3}} \,d\epsilon\right) \right)\\
    &= \
    \inf_{\alpha > 0} \left(\frac{4\alpha\gamma}{\sqrt{N}}+\frac{12}{N}\left(\sqrt{v_1 + v_2}\ln{\left(\frac{\sqrt{N}}{\alpha}\right)} + \gamma\sqrt{v_2\ln{v_3}}\left(\sqrt{N} - \alpha\right)\right)\right) \\
    &\overset{(c)}{\leq} 
    \frac{4\gamma}{N} + \frac{12\sqrt{v_1+v_2}\ln{N}}{N} + \frac{12\gamma(N-1)\sqrt{v_2\ln{v_3}}}{N\sqrt{N}}\\
    &\leq
    \frac{4\gamma}{N} + \frac{12\sqrt{v_1+v_2}\ln{N}}{N} + \frac{12\gamma\sqrt{v_2\ln{v_3}}}{\sqrt{N}}\\
\end{align*}
where (a) holds due to $\ln{\frac{1}{x}} \leq \frac{1}{x^2}$, (b) holds due to $\sqrt{x + y} \leq \sqrt{x} + \sqrt{y}$. For (c), we have chosen $\alpha = \frac{1}{\sqrt{N}}$. 

It can be shown that the covering number bound of $\mathcal{F}_D$ satisfies $\mathcal{N}(\epsilon, \mathcal{F}_D, \rho) \leq \left(3b_D / \epsilon\right)^{Kmd}$ (\cref{lemma:f_cover}). Combining the bounds of $\mathcal{N}\left(\epsilon, \mathcal{F}_G, \rho\right)$ and $\mathcal{N}(\epsilon, \mathcal{F}_D, \rho)$ and Lemmas \ref{lemma:general_upper_bound} and \ref{lemma:dudley}, we derive the generalization bound:
\begin{equation*}
    L_\gamma(f) \leq \hat{L}_\gamma(f) 
    + 
    \frac{8\gamma + 24\sqrt{v_1+v_2}\ln{N} + 24\gamma\sqrt{Nv_2\ln{v_3}}}{N} 
    + 
    3\gamma\sqrt{\frac{\ln{(2/\delta)}}{2N}}
\end{equation*}
This finished the proof.
\end{proof}

\subsection{Corollary of \cref{thm:generalization}}\label{app:corollary}
\begin{corollary}[Mis-classification rate upper bound of GRDL]
     Let $n$ be the minimum number of nodes for graphs $\left\{G_i\right\}_{i=1}^N$, $\theta$ be the hyper-parameter in gaussian kernel (\cref{def:kernel}), $c = \|\Tilde{\mathbf{A}}\|_\sigma$. For graphs $\mathcal{G}=\left\{(G_i, y_i)\right\}_{i=1}^N$ drawn i.i.d from any probability distribution over $\mathcal{X} \times \{1, \dots, K\}$ and references $\left\{\mathbf{D}_k\right\}_{k=1}^K, \mathbf{D}_k \in \mathbb{R}^{m\times d}$, with probability at least $1-\delta$, every margin $\zeta > 0$ and network $f\in\mathcal{F}$ under Assumption \ref{assumption} satisfy
    \begin{equation*}
            \Pr_{G\sim \mathcal{X}}[\arg\max_{j}f(G)_j \neq y] \leq \hat{L}_\zeta(f) + 3\sqrt{\frac{\ln{(2/\delta)}}{2N}} + \frac{8 + 24\sqrt{v_1+v_2}\ln{N} + 24\sqrt{Nv_2\ln{v_3}}}{N} 
    \end{equation*}
    where
    \begin{equation*}
        v_1 = \frac{256\theta K R_G}{n\zeta^2}, v_2 = Km\bar{d}, v_3 = \frac{48b_D\sqrt{\theta N}}{\sqrt{m}\zeta}, R_G = c^{2L}\|\mathbf{X}\|_2^2\ln(2\bar{d}^2) \left(\prod_{l=1}^L\left(\prod_{i=1}^r \kappa^{(l)}_i\right)^2\right) \left(\sum_{l=1}^L\sum_{i=1}^r\left(\frac{b^{(l)}_i}{\kappa^{(l)}_i}\right)^{2/3}\right)^3,
    \end{equation*}
    and $\hat{L}_\zeta(f) \leq N^{-1}\sum_i\mathds{1}[f(G_i)_{y_i} \leq \zeta + \arg\max_{j\neq y_i}f(G_i)_j]$.
\end{corollary}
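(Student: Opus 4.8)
The plan is to specialize Theorem \ref{thm:generalization} to a margin-based surrogate loss and then sandwich the $0$--$1$ misclassification loss between that surrogate and the thresholded empirical count in the statement. Write $\mathcal{M}(v, j) := v_j - \max_{l\neq j} v_l$ for the margin operator and, following the setup of Lemma \ref{lemma:lip_ramp}, take the loss
\begin{equation*}
    l_\gamma(f(G), y) := r_\zeta\!\left(-\mathcal{M}(f(G), y)\right),
\end{equation*}
which satisfies $0 \le l_\gamma \le 1$, so that we use $\gamma = 1$ in Theorem \ref{thm:generalization}.

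First I would check that this loss fits Assumption \ref{assumption}: conditions (i)--(iv) concern only the message-passing network and the references and are untouched by the choice of loss, while condition (v) is exactly Lemma \ref{lemma:lip_ramp}, which says $v \mapsto r_\zeta(-\mathcal{M}(v, y))$ is $\tfrac{2}{\zeta}$-Lipschitz with respect to $\|\cdot\|_2$; hence $\mu = 2/\zeta$. Next I would record the two pointwise comparisons. If $\arg\max_j f(G)_j \neq y$ then $\mathcal{M}(f(G), y) \le 0$, so $r_\zeta(-\mathcal{M}(f(G), y)) = 1$, giving $\mathds{1}[\arg\max_j f(G)_j \neq y] \le l_\gamma(f(G), y)$ pointwise; taking expectations yields $\Pr_{G\sim\mathcal{X}}[\arg\max_j f(G)_j \neq y] \le L_\gamma(f)$. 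Conversely, if $f(G_i)_{y_i} > \zeta + \max_{j\neq y_i} f(G_i)_j$ then $\mathcal{M}(f(G_i), y_i) > \zeta$ and $r_\zeta(-\mathcal{M}(f(G_i), y_i)) = 0$, so $\hat{L}_\zeta(f) := \hat{L}_\gamma(f) \le N^{-1}\sum_i \mathds{1}[f(G_i)_{y_i} \le \zeta + \max_{j\neq y_i} f(G_i)_j]$, matching the bound on $\hat{L}_\zeta$ stated in the corollary.

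Finally I would substitute $\gamma = 1$ and $\mu = 2/\zeta$ into the conclusion of Theorem \ref{thm:generalization}: the term $v_1 = \tfrac{64\theta K R_G\mu^2}{n}$ becomes $\tfrac{256\theta K R_G}{n\zeta^2}$, $v_2 = Km\bar d$ is unchanged, $v_3 = \tfrac{24\sqrt{\theta N}b_D\mu}{\sqrt m}$ becomes $\tfrac{48 b_D\sqrt{\theta N}}{\sqrt m\,\zeta}$, and the prefactors $3\gamma$ and $8\gamma$ collapse to $3$ and $8$. Chaining these with the two comparisons from the previous paragraph gives the stated inequality. I expect no genuine obstacle here; the only points requiring care are verifying that the ramp--margin loss indeed dominates the $0$--$1$ loss while being dominated by the thresholded count, and correctly propagating the factor $2$ in $\mu$ (hence the factor $4$ in $\mu^2$) through the constants $v_1$ and $v_3$.
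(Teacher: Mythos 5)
Your proposal is correct and follows essentially the same route as the paper's own proof: instantiate $l_\gamma$ as the ramp loss composed with the negative margin, invoke Lemma \ref{lemma:lip_ramp} to get $\mu = 2/\zeta$ with $\gamma = 1$, sandwich the $0$--$1$ loss between the ramp loss and the thresholded empirical count, and substitute into Theorem \ref{thm:generalization}. The constant propagation ($64\mu^2 = 256/\zeta^2$ in $v_1$ and $24\mu = 48/\zeta$ in $v_3$) matches the paper exactly.
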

\begin{proof}
    Choose the loss $l_\gamma(\cdot, y)$ as
    \begin{equation*}
        l_\gamma(\cdot, y) = r_\zeta(-\mathcal{M}(\cdot, y))
    \end{equation*}
    where $\mathcal{M}(v, y) := v_y - \max_{i\neq y}v_i$ is the margin operator and 
    \begin{equation*}
        r_\zeta(t) := 
        \begin{cases}
            0 & t < -\zeta, \\
            1 + t/\zeta & t \in [-\zeta, 0], \\
            1 & t > 0.
        \end{cases}
    \end{equation*}
    is called the ramp loss. The population ramp risk is defined as $L_\zeta(f) := \E_{G \sim \mathcal{X}} \left[r_\zeta(-\mathcal{M}(f(G), y)))\right]$. Given the graph dataset $\mathcal{G}$ sampled from $\mathcal{X}$, the empirical ramp risk is $\hat{L}_\zeta(f) := N^{-1}\sum_{i=1}^Nr_\zeta(-\mathcal{M}(f(G_i), y_i))$. It is clear that $\mathds{1}[\arg\max_{j}f(G)_j \neq y] \leq r_\zeta(-\mathcal{M}(f(G), y))$, so
    \begin{equation*}
        \Pr_{G\sim \mathcal{X}}[\arg\max_{j}f(G)_j \neq y] = \E_{G\sim\mathcal{X}}\left[\mathds{1}(\arg\max_{j}f(G)_j \neq y)\right] \leq L_\zeta(f)
    \end{equation*}
    It is easy to see that $\gamma = 1$ in this case. Also by \cref{lemma:lip_ramp}, $\mu = \frac{2}{\zeta}$. Then it is trivial to get the bound by \cref{thm:generalization} with simple substitution
    \begin{align*}
        \Pr_{G\sim \mathcal{X}}[\arg\max_{j}f(G)_j \neq y] 
        & \leq L_\zeta(f) \\
        & \leq \hat{L}_\zeta(f) + 3\sqrt{\frac{\ln{(2/\delta)}}{2N}} + \frac{8 + 24\sqrt{v_1+v_2}\ln{N} + 24\sqrt{Nv_2\ln{v_3}}}{N} 
    \end{align*}
    where $ v_1 = \frac{256\theta K R_G}{n\zeta^2}, v_2 = Km\bar{d}, v_3 = \frac{48b_D\sqrt{\theta N}}{\sqrt{m}\zeta}, R_G = c^{2L}\|\mathbf{X}\|_2^2\ln(2\bar{d}^2) \left(\prod_{l=1}^L\left(\prod_{i=1}^r \kappa^{(l)}_i\right)^2\right) \left(\sum_{l=1}^L\sum_{i=1}^r\left(\frac{b^{(l)}_i}{\kappa^{(l)}_i}\right)^{2/3}\right)^3$.
    Also, $r_\zeta(-\mathcal{M}(f(G_i), y_i)) \leq \mathds{1}[f(G_i)_{y_i} \leq \zeta + \arg\max_{j\neq y_i}f(G_i)_j]$, so we have 
    \begin{equation*}
        \hat{L}_\zeta(f) = N^{-1}\sum_{i=1}^Nr_\zeta(-\mathcal{M}(f(G_i), y_i)) \leq N^{-1}\sum_{i=1}^N\mathds{1}[f(G_i)_{y_i} \leq \zeta + \arg\max_{j\neq y_i}f(G_i)_j]
    \end{equation*}
\end{proof}

\begin{corollary}[Generalization bound with cross-entropy loss]
    Suppose $l_\gamma(\cdot, y)$ is the cross-entropy loss $\mathcal{L}_{\text{CE}}$ \eqref{eq:loss_ce}. Let $n$ be the minimum number of nodes for graphs $\left\{G_i\right\}_{i=1}^N$, $\theta$ be the hyper-parameter in gaussian kernel (\cref{def:kernel}), $c = \|\Tilde{\mathbf{A}}\|_\sigma$. For graphs $\mathcal{G}=\left\{(G_i, y_i)\right\}_{i=1}^N$ drawn i.i.d from any probability distribution over $\mathcal{X} \times \{1, \dots, K\}$ and references $\left\{\mathbf{D}_k\right\}_{k=1}^K, \mathbf{D}_k \in \mathbb{R}^{m\times d}$, with probability at least $1-\delta$, every network $f\in\mathcal{F}$ under Assumption \ref{assumption} satisfy
    \begin{equation*}
        L_\gamma(f) \leq \hat{L}_\gamma(f) + 3\gamma\sqrt{\frac{\ln{(2/\delta)}}{2N}} + \frac{8\gamma + 24\sqrt{v_1+v_2}\ln{N} + 24\gamma\sqrt{Nv_2\ln{v_3}}}{N} 
    \end{equation*}
where $ v_1 = \frac{128\theta K R_G}{n}, v_2 = Km\bar{d}, v_3 = \frac{24\sqrt{2\theta N}b_D}{\sqrt{m}}, R_G =
    c^{2L}\|\mathbf{X}\|_2^2\ln(2\bar{d}^2) \left(\prod_{l=1}^L\left(\prod_{i=1}^r \kappa^{(l)}_i\right)^2\right) \left(\sum_{l=1}^L\sum_{i=1}^r\left(\frac{b^{(l)}_i}{\kappa^{(l)}_i}\right)^{2/3}\right)^3$.
\end{corollary}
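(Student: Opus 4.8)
The plan is to obtain this corollary as a direct specialization of \cref{thm:generalization}, the only genuinely new ingredient being the Lipschitz constant of the cross-entropy loss. First I would invoke \cref{lemma:lip_ce}, which establishes that $l(\mathbf{x},\mathbf{y}) = -\sum_{k} y_{k}\log\frac{\exp(x_k)}{\sum_{j}\exp(x_j)}$ is $\sqrt{2}$-Lipschitz with respect to $\|\cdot\|_2$; hence Assumption~\ref{assumption}(v) is satisfied with $\mu = \sqrt{2}$.

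Next I would check the boundedness requirement $0\le l_\gamma\le\gamma$ in Assumption~\ref{assumption}(v). This holds because the logits $s_{ik} = -\mathrm{MMD}^2(\mathbf{H}_i,\mathbf{D}_k)$ are bounded when the Gaussian kernel is used: since $k(\mathbf{x},\mathbf{x})=1$ we have $\bigl\|\frac1n\sum_i\phi(\mathbf{h}_i)\bigr\|_2\le 1$ and $\bigl\|\frac1m\sum_j\phi(\mathbf{d}_j)\bigr\|_2\le 1$, so $\mathrm{MMD}^2\in[0,4]$ and $s_{ik}\in[-4,0]$. Consequently every softmax coordinate obeys $\hat y_{ik}\ge e^{-4}/K$, which gives $0\le \mathcal{L}_{\mathrm{CE}}\le 4+\ln K =:\gamma$; alternatively one may simply carry $\gamma$ as the abstract upper bound postulated in \cref{assumption}.

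With $\mu=\sqrt{2}$ in hand, I would then apply \cref{thm:generalization} verbatim and substitute. Since its bound uses $v_1 = \frac{64\theta K R_G\mu^2}{n}$, $v_2 = Km\bar d$, and $v_3 = \frac{24\sqrt{\theta N}\,b_D\,\mu}{\sqrt m}$, putting $\mu^2 = 2$ yields $v_1 = \frac{128\theta K R_G}{n}$ and $v_3 = \frac{24\sqrt{2\theta N}\,b_D}{\sqrt m}$, while $v_2 = Km\bar d$ remains unchanged, which is exactly the claimed inequality.

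The hard part — to the extent there is one — is the verification in the second paragraph that the composition of a bounded kernel with the softmax keeps the cross-entropy loss in a finite range, so that Assumption~\ref{assumption}(v) genuinely applies; everything else is a mechanical specialization of \cref{thm:generalization} using \cref{lemma:lip_ce}.
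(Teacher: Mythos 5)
Your proposal matches the paper's proof exactly: the paper also just invokes \cref{lemma:lip_ce} to get $\mu=\sqrt{2}$ and substitutes into \cref{thm:generalization}. Your additional verification that the cross-entropy loss is bounded (via $\mathrm{MMD}^2\in[0,4]$ for the Gaussian kernel, giving $\gamma = 4+\ln K$) is a correct extra check that the paper omits, simply carrying $\gamma$ as the abstract bound from Assumption~\ref{assumption}.
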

\begin{proof}
    According to \cref{lemma:lip_ce}, $\mu = \sqrt{2}$. Then the proof is trivial by substituting it into \cref{thm:generalization}.
\end{proof}

\subsection{Adjacency matrix spectral norm}\label{proof:spectral}
\begin{lemma}\label{lemma:spectral_deg}
    Let $G = (V, E)$ be an undirected graph with adjacency matrix $\mathbf{A}\in \mathbb{R}^{n \times n}$, $d_G$ be the maximum degree of G. Then, the adjacency matrix satisfies
    \begin{equation*}
        \|\mathbf{A}\|_\sigma \leq d_G
    \end{equation*}
\end{lemma}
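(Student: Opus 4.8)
The plan is to exploit the symmetry of $\mathbf{A}$. Since $G$ is undirected, $\mathbf{A}$ is a real symmetric matrix, so $\|\mathbf{A}\|_\sigma$ equals its spectral radius and also admits the Rayleigh-quotient characterization $\|\mathbf{A}\|_\sigma = \max_{\|x\|_2 = 1}|x^\top \mathbf{A} x|$. Hence it suffices to bound the quadratic form $|x^\top \mathbf{A} x|$ uniformly over the unit sphere.

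First I would expand the quadratic form. Because $\mathbf{A}$ has zero diagonal (no self-loops) and $a_{ij} = a_{ji} = 1$ exactly when $\{i,j\} \in E$,
\begin{equation*}
    x^\top \mathbf{A} x = \sum_{i,j} a_{ij} x_i x_j = 2\sum_{\{i,j\} \in E} x_i x_j .
\end{equation*}
Applying the elementary inequality $2|x_i x_j| \le x_i^2 + x_j^2$ termwise and regrouping by vertex,
\begin{equation*}
    |x^\top \mathbf{A} x| \le \sum_{\{i,j\}\in E}\left(x_i^2 + x_j^2\right) = \sum_{i=1}^n \deg(i)\, x_i^2 ,
\end{equation*}
where the last equality just counts $x_i^2$ once per edge incident to vertex $i$.

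Then bounding $\deg(i) \le d_G$ for every vertex $i$ gives $|x^\top \mathbf{A} x| \le d_G \sum_i x_i^2 = d_G \|x\|_2^2$, which equals $d_G$ whenever $\|x\|_2 = 1$. Taking the maximum over the unit sphere yields $\|\mathbf{A}\|_\sigma \le d_G$, as claimed.

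I do not expect a genuine obstacle here; the only subtlety is that the identity $\|\mathbf{A}\|_\sigma = \max_{\|x\|_2=1}|x^\top\mathbf{A}x|$ relies on $\mathbf{A} = \mathbf{A}^\top$ (it can fail for non-symmetric matrices, but undirectedness guarantees it). If one prefers to avoid the quadratic-form route, an equally short argument uses the Gershgorin circle theorem: every eigenvalue of $\mathbf{A}$ lies in a disk centered at $a_{ii} = 0$ of radius $\sum_{j \ne i}|a_{ij}| = \deg(i) \le d_G$, so the spectral radius — equal to $\|\mathbf{A}\|_\sigma$ by symmetry — is at most $d_G$; alternatively, $\|\mathbf{A}\|_\sigma \le \sqrt{\|\mathbf{A}\|_1 \|\mathbf{A}\|_\infty}$ combined with the observation that both the maximal row sum and the maximal column sum of $\mathbf{A}$ equal $d_G$ gives the same bound.
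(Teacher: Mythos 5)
Your proof is correct and follows essentially the same route as the paper: the Rayleigh-quotient characterization of the spectral norm for the symmetric matrix $\mathbf{A}$, followed by the termwise bound $2|x_i x_j| \le x_i^2 + x_j^2$ and regrouping by vertex degree. Your version is in fact slightly more careful than the paper's, since you retain the absolute value in $\max_{\|x\|_2=1}|x^\top \mathbf{A} x|$ (the paper omits it, which strictly speaking only bounds the largest eigenvalue rather than the spectral norm), and your alternative Gershgorin argument is a valid shortcut as well.
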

\begin{proof}
    Based on the definition of the spectral norm, we have
    \begin{equation*}
        \|\mathbf{A}\|_{\sigma} \overset{(a)}{=} \max_{\|\mathbf{x}\|_2 = 1} \mathbf{x^\top Ax} = \max_{\|\mathbf{x}\|_2 = 1} \sum_{(i, j)\in E}x_ix_j \leq \max_{\|\mathbf{x}\|_2 = 1} \sum_{(i, j)\in E}\frac{1}{2}(x_i^2 +x_j^2) = d_G\sum_{i\in V}x_i^2 = d_G
    \end{equation*}
    where (a) is because $\mathbf{A}$ is a real symmetric matrix.
\end{proof}

\begin{lemma}\label{lemma:frob_spectral}
    For any matrix $\mathbf{X}\in\mathbb{R}^{m\times n}$, $\|\mathbf{X}\|_\sigma \leq \|\mathbf{X}\|_2 \leq \sqrt{r}\|\mathbf{X}\|_\sigma$ where $r = \text{rank}(\mathbf{X})$
\end{lemma}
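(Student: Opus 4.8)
The plan is to use the singular value decomposition. Write $\mathbf{X} = \mathbf{U}\boldsymbol{\Sigma}\mathbf{V}^\top$ with $\mathbf{U}\in\mathbb{R}^{m\times m}$, $\mathbf{V}\in\mathbb{R}^{n\times n}$ orthogonal and $\boldsymbol{\Sigma}$ containing the singular values $\sigma_1\geq\sigma_2\geq\cdots\geq\sigma_r>0$ on its diagonal, where $r=\mathrm{rank}(\mathbf{X})$ is exactly the number of nonzero singular values. First I would record the two standard identities $\|\mathbf{X}\|_\sigma=\sigma_1$ (the spectral norm is the largest singular value, by the variational characterization $\|\mathbf{X}\|_\sigma=\max_{\|\mathbf{v}\|_2=1}\|\mathbf{X}\mathbf{v}\|_2$) and $\|\mathbf{X}\|_2^2=\sum_{i=1}^r\sigma_i^2$ (the Frobenius norm is invariant under left/right multiplication by orthogonal matrices since $\|\mathbf{X}\|_2^2=\mathrm{Tr}(\mathbf{X}^\top\mathbf{X})=\mathrm{Tr}(\mathbf{V}\boldsymbol{\Sigma}^\top\boldsymbol{\Sigma}\mathbf{V}^\top)=\mathrm{Tr}(\boldsymbol{\Sigma}^\top\boldsymbol{\Sigma})$).

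With these in hand, both inequalities are immediate. For the left inequality, $\|\mathbf{X}\|_\sigma^2=\sigma_1^2\leq\sum_{i=1}^r\sigma_i^2=\|\mathbf{X}\|_2^2$, so $\|\mathbf{X}\|_\sigma\leq\|\mathbf{X}\|_2$. For the right inequality, since $\sigma_i\leq\sigma_1$ for every $i\in[r]$, we get $\|\mathbf{X}\|_2^2=\sum_{i=1}^r\sigma_i^2\leq r\sigma_1^2=r\|\mathbf{X}\|_\sigma^2$, hence $\|\mathbf{X}\|_2\leq\sqrt{r}\,\|\mathbf{X}\|_\sigma$. Taking square roots throughout finishes the argument.

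There is essentially no obstacle here; the only point that deserves a sentence of care is justifying that the Frobenius norm equals the Euclidean norm of the vector of singular values, which follows from the cyclic and orthogonal-invariance properties of the trace as indicated above. If one prefers to avoid invoking the SVD, an alternative is to bound $\|\mathbf{X}\|_\sigma\leq\|\mathbf{X}\|_2$ directly from $\|\mathbf{X}\mathbf{v}\|_2\leq\|\mathbf{X}\|_2\|\mathbf{v}\|_2$ (Cauchy--Schwarz row by row) and to obtain the upper bound from the eigendecomposition of $\mathbf{X}^\top\mathbf{X}$, but the SVD route is the cleanest and I would present that.
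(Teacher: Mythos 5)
Your proof is correct and is essentially the same as the paper's: the paper works with the eigenvalues $\lambda_1 \geq \cdots \geq \lambda_r > 0$ of $\mathbf{X}^\top\mathbf{X}$ and bounds $\sqrt{\lambda_1} \leq \sqrt{\sum_i \lambda_i} = \|\mathbf{X}\|_2 \leq \sqrt{r\lambda_1}$, which is exactly your singular-value argument since $\sigma_i^2 = \lambda_i$. No gaps; the SVD packaging versus the eigendecomposition of $\mathbf{X}^\top\mathbf{X}$ is a cosmetic difference only.
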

\begin{proof}
By the definition of spectral norm,
\begin{equation*}
    \|\mathbf{X}\|_\sigma = \sqrt{\lambda_{\text{max}}(\mathbf{\mathbf{X}^\top\mathbf{X}}})
\end{equation*}
where $\lambda_{\text{max}}$ denotes the largest eigenvalue. Since $\mathbf{\mathbf{X}^\top\mathbf{X}}$ is a positive semi-definite real symmetric matrix, it must has $n$ real eigenvalues that can be ordered as
\begin{equation*}
    \lambda_1 \geq \lambda_2 \geq \cdots \geq \lambda_r > \lambda_{r+1} \cdots = \lambda_n = 0.
\end{equation*}
Then it has
\begin{equation*}
    \|\mathbf{X}\|_\sigma = \sqrt{\lambda_1} \leq \sqrt{\sum_{i=1}^n\lambda_i} = \sqrt{\text{tr}(\mathbf{\mathbf{X}^\top\mathbf{X}})} = \|\mathbf{X}\|_2 \leq \sqrt{r\lambda_1} = \sqrt{r}\|\mathbf{X}\|_\sigma.
\end{equation*}
\end{proof}

\begin{lemma}\label{lemma:spectral_g_1}
    Let $G = (V, E)$ be a graph with adjacency matrix $\mathbf{A}\in\mathbb{R}^{n\times n}$. Assume $|E| > 0$, then $c = \|\Tilde{\mathbf{A}}\|_{\sigma} = \|\mathbf{A} + \mathbf{I}\|_{\sigma} > 1$.
\end{lemma}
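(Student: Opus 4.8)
The plan is to exploit the symmetry of $\Tilde{\mathbf{A}}$ together with a Rayleigh-quotient test vector concentrated on a single edge. First I would record that, since $G$ is undirected, $\mathbf{A}$ — and hence $\Tilde{\mathbf{A}} = \mathbf{A} + \mathbf{I}$ — is a real symmetric matrix, so that $\|\Tilde{\mathbf{A}}\|_\sigma = \max_{\|\mathbf{x}\|_2 = 1}|\mathbf{x}^\top \Tilde{\mathbf{A}}\mathbf{x}|$; in particular $\|\Tilde{\mathbf{A}}\|_\sigma \ge \mathbf{x}^\top\Tilde{\mathbf{A}}\mathbf{x}$ for every unit vector $\mathbf{x}$, since $\mathbf{x}^\top\Tilde{\mathbf{A}}\mathbf{x} \le \|\mathbf{x}\|_2\|\Tilde{\mathbf{A}}\mathbf{x}\|_2 \le \|\Tilde{\mathbf{A}}\|_\sigma$.

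Next, using the hypothesis $|E| > 0$, I would fix an edge $\{i, j\} \in E$ and take the unit vector $\mathbf{x}$ with $x_i = x_j = 1/\sqrt{2}$ and all remaining coordinates equal to $0$. Because the adjacency matrix $\mathbf{A}$ has zero diagonal and $\mathbf{x}$ is supported on $\{i, j\}$, a direct computation gives $\mathbf{x}^\top\mathbf{A}\mathbf{x} = 2A_{ij}x_ix_j = 2\cdot 1\cdot\tfrac12 = 1$, whence $\mathbf{x}^\top\Tilde{\mathbf{A}}\mathbf{x} = \mathbf{x}^\top\mathbf{A}\mathbf{x} + \|\mathbf{x}\|_2^2 = 1 + 1 = 2$. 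Combining this with the previous display yields $c = \|\Tilde{\mathbf{A}}\|_\sigma \ge 2 > 1$, which is the claim.

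As an alternative (or sanity check), I would note the purely spectral argument: since $\operatorname{tr}(\mathbf{A}) = 0$ and $\mathbf{A} \neq \mathbf{0}$ (as $|E| > 0$), the symmetric matrix $\mathbf{A}$ has at least one strictly positive eigenvalue, so $\lambda_{\max}(\Tilde{\mathbf{A}}) = \lambda_{\max}(\mathbf{A}) + 1 > 1$, and therefore $\|\Tilde{\mathbf{A}}\|_\sigma \ge \lambda_{\max}(\Tilde{\mathbf{A}}) > 1$. There is no genuine obstacle in this lemma; the only points demanding a little care are the reduction to the Rayleigh quotient (which is valid precisely because $\Tilde{\mathbf{A}}$ is symmetric) and the bookkeeping of the factor $2$ arising from the two symmetric entries $A_{ij} = A_{ji} = 1$ when evaluating $\mathbf{x}^\top\mathbf{A}\mathbf{x}$.
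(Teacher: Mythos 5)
Your proof is correct, and it takes a genuinely different (and more elementary) route than the paper. The paper invokes its Lemma~\ref{lemma:frob_spectral} to write $\|\Tilde{\mathbf{A}}\|_\sigma \geq \tfrac{1}{\sqrt{r}}\|\Tilde{\mathbf{A}}\|_2$ with $r=\operatorname{rank}(\Tilde{\mathbf{A}})$, and then argues $\|\mathbf{A}+\mathbf{I}\|_2 > \|\mathbf{I}\|_2 = \sqrt{n}$ to conclude $\|\Tilde{\mathbf{A}}\|_\sigma > \sqrt{n/r}\geq 1$; as printed, the intermediate step is stated as $\|\mathbf{A}+\mathbf{I}\|_2 \geq \|\mathbf{A}\|_2+\|\mathbf{I}\|_2$, which is the triangle inequality written backwards and only salvageable here because $\mathbf{A}$ and $\mathbf{I}$ have disjoint supports, so that $\|\mathbf{A}+\mathbf{I}\|_2^2 = \|\mathbf{A}\|_2^2+\|\mathbf{I}\|_2^2$. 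Your Rayleigh-quotient argument sidesteps both the rank bookkeeping and that fragile step: testing against the unit vector supported on a single edge gives $\mathbf{x}^\top\Tilde{\mathbf{A}}\mathbf{x}=2$ directly, so you obtain the strictly stronger conclusion $c\geq 2$ rather than merely $c>1$ (your secondary trace argument, $\lambda_{\max}(\mathbf{A})>0$ because $\operatorname{tr}(\mathbf{A})=0$ and $\mathbf{A}\neq\mathbf{0}$, is also valid). The only hypothesis you use beyond the paper's is that $\mathbf{A}$ has zero diagonal, i.e.\ the graph is simple, which is implicit in the paper's construction $\Tilde{\mathbf{A}}=\mathbf{A}+\mathbf{I}$; with that understood, your proof is the cleaner one and could replace the paper's without loss.
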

\begin{proof}
By \cref{lemma:frob_spectral},
\begin{equation*}
    \|\Tilde{\mathbf{A}}\|_{\sigma} \geq \frac{1}{\sqrt{r}} \|\Tilde{\mathbf{A}}\|_2 = \frac{1}{\sqrt{r}} \|\mathbf{A} + \mathbf{I}\|_2 \geq \frac{1}{\sqrt{r}} \|\mathbf{A}\|_2 + \frac{1}{\sqrt{r}} \|\mathbf{I}\|_2 \overset{(a)}{>} \frac{1}{\sqrt{r}} \|\mathbf{I}\|_2 = \sqrt{\frac{n}{r}} \geq 1
\end{equation*}
where (a) is because $|E| > 1$.
\end{proof}

\subsection{Generalization of MMD (to be used in Section \ref{proof:correct_class})}
Before stating the lemma, we first give alternative definitions of MMD. Let $\mathbb{P}$ be a continuous probability distribution of some random variable $Z$ taking values from space $\mathcal{Z}$. Then, the kernel mean embedding of $\mathbb{P}$ associated with the continuous, bounded, and positive-definite kernel function $k:\mathcal{Z} \times \mathcal{Z} \to \mathbb{R}$ is
\begin{equation}\label{eq:kme}
    \mu_{\mathbb{P}} := \int_{\mathcal{Z}}k(z, \cdot) \,d \mathbb{P}(z)
\end{equation}
which is an element in the Reproducing Kernel Hilbert Space (RKHS) $\mathscr{H}$ associated with kernel $k$. In many practical situations, it is unrealistic to assume access to the true distribution $\mathbb{P}$. Instead, we only have access to samples $P=\{z_i\}_{i=1}^n$ from $\mathbb{P}$. We can approximate \eqref{eq:kme} by the empirical kernel mean embedding
\begin{equation}
    \hat{\mu}_{\mathbb{P}} := \frac{1}{n}\sum_{i=1}^n k(z_i, \cdot).
\end{equation}

For another continuous distribution $\mathbb{Q}$ with samples $Q = \{z_i'\}_{i=1}^m$, the MMD between the two probability distribution is defined as
\begin{equation*}
    \mathrm{MMD}(\mathbb{P}, \mathbb{Q}) = \|\mu_{\mathbb{P}} - \mu_{\mathbb{Q}}\|_{\mathscr{H}},
\end{equation*}
and the empirical MMD is
\begin{equation*}
    \mathrm{MMD}(P, Q) = \|\hat{\mu}_{\mathbb{P}} - \hat{\mu}_{\mathbb{Q}}\|_{\mathscr{H}}.
\end{equation*}
Denote $d := \mathrm{MMD}(\mathbb{P}, \mathbb{Q})$ and $\hat{d} := \mathrm{MMD}(P, Q)$, we have the follow Lemma:
\begin{lemma}\label{lemma:mmd_generalization}
    With probability at least $1-\delta$ we have
    \begin{equation*}
        |d - \hat{d}| \leq \left(\frac{1}{\sqrt{m}} + \frac{1}{\sqrt{n}}\right)\left(2 + \sqrt{2\log\frac{2}{\delta}}\right)
    \end{equation*}
\end{lemma}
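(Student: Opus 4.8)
The plan is to reduce the statement to a concentration bound for the empirical kernel mean embedding of a single distribution, and then invoke McDiarmid's bounded-differences inequality together with a symmetrization step, finishing with a union bound over the two samples.

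First I would apply the reverse triangle inequality in the RKHS $\mathscr{H}$. Writing $a = \mu_{\mathbb{P}} - \mu_{\mathbb{Q}}$ and $b = \hat\mu_{\mathbb{P}} - \hat\mu_{\mathbb{Q}}$, since $d = \|a\|_{\mathscr{H}}$ and $\hat d = \|b\|_{\mathscr{H}}$ we get
\[
|d - \hat d| \le \|a - b\|_{\mathscr{H}} = \big\|(\mu_{\mathbb{P}} - \hat\mu_{\mathbb{P}}) - (\mu_{\mathbb{Q}} - \hat\mu_{\mathbb{Q}})\big\|_{\mathscr{H}} \le \|\mu_{\mathbb{P}} - \hat\mu_{\mathbb{P}}\|_{\mathscr{H}} + \|\mu_{\mathbb{Q}} - \hat\mu_{\mathbb{Q}}\|_{\mathscr{H}}.
\]
Hence it suffices to show that, with probability at least $1-\delta/2$, $\|\mu_{\mathbb{P}} - \hat\mu_{\mathbb{P}}\|_{\mathscr{H}} \le \tfrac{1}{\sqrt n}\big(2 + \sqrt{2\log(2/\delta)}\big)$, together with the analogous bound for $\mathbb{Q}$ with $m$ in place of $n$; a union bound then yields the claim.

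For the single-distribution estimate I would set $g(z_1,\dots,z_n) := \big\|\tfrac1n\sum_{i=1}^n k(z_i,\cdot) - \mu_{\mathbb{P}}\big\|_{\mathscr{H}}$ and use three facts: the reproducing property $\langle k(z,\cdot), f\rangle_{\mathscr{H}} = f(z)$, the boundedness of the Gaussian kernel $\|k(z,\cdot)\|_{\mathscr{H}}^2 = k(z,z) = 1$, and $\mu_{\mathbb{P}} = \mathbb{E}_z[k(z,\cdot)]$. Replacing one sample $z_i$ by $z_i'$ changes $g$ by at most $\tfrac1n\|k(z_i,\cdot) - k(z_i',\cdot)\|_{\mathscr{H}} \le \tfrac2n$, so McDiarmid's inequality gives, with probability at least $1-\delta/2$,
\[
g \le \mathbb{E}[g] + \sqrt{\tfrac{2\log(2/\delta)}{n}}.
\]
To control $\mathbb{E}[g]$, a standard symmetrization step yields $\mathbb{E}[g] \le 2\,\mathbb{E}\big\|\tfrac1n\sum_i \sigma_i k(z_i,\cdot)\big\|_{\mathscr{H}}$ with i.i.d.\ Rademacher $\sigma_i$, and Jensen's inequality together with $\mathbb{E}[\sigma_i\sigma_j]=\mathbb{1}[i=j]$ gives $\mathbb{E}\big\|\tfrac1n\sum_i\sigma_i k(z_i,\cdot)\big\|_{\mathscr{H}} \le \big(\tfrac{1}{n^2}\sum_i \mathbb{E}\, k(z_i,z_i)\big)^{1/2} = \tfrac{1}{\sqrt n}$, so $\mathbb{E}[g] \le \tfrac{2}{\sqrt n}$. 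Combining, $\|\mu_{\mathbb{P}} - \hat\mu_{\mathbb{P}}\|_{\mathscr{H}} \le \tfrac{2}{\sqrt n} + \sqrt{\tfrac{2\log(2/\delta)}{n}} = \tfrac{1}{\sqrt n}\big(2+\sqrt{2\log(2/\delta)}\big)$, and the same with $m$ for $\mathbb{Q}$; adding the two and using the reverse triangle bound above completes the proof.

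I do not expect a genuine obstacle here, as this is the classical kernel-mean-embedding concentration argument. The only points demanding care are bookkeeping of constants: the factor $2$ arising from symmetrization (which is precisely what produces the ``$2$'' in the bound rather than a ``$1$''), the bounded-difference constant $2/n$ entering McDiarmid, and the split of the failure probability as $\delta/2 + \delta/2$ across the two samples. One also needs to note explicitly that the Gaussian kernel is bounded (identically $1$ on the diagonal), which is what makes the bounded-differences and Jensen steps valid.
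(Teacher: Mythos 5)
Your proof is correct, but it takes a different route from the paper's. The paper treats the concentration result as a black box: it quotes Theorem 7 of \citet{gretton2012kernel} (with $K=1$ for the Gaussian kernel), which states that $|d-\hat d|\le 2(\tfrac{1}{\sqrt m}+\tfrac{1}{\sqrt n})+\varepsilon$ fails with probability at most $2\exp\bigl(\tfrac{-\varepsilon^2 mn}{2(m+n)}\bigr)$, sets this failure probability to $\delta$, solves $\varepsilon=\sqrt{\tfrac1m+\tfrac1n}\sqrt{2\log(2/\delta)}$, and then relaxes $\sqrt{\tfrac1m+\tfrac1n}\le \tfrac{1}{\sqrt m}+\tfrac{1}{\sqrt n}$ to reach the stated form. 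You instead re-derive the concentration from first principles --- reverse triangle inequality in $\mathscr{H}$, McDiarmid with bounded-difference constant $2/n$ applied separately to each empirical mean embedding, symmetrization plus Jensen for the expectation, and a union bound over the two samples --- which is essentially the internal proof of Gretton's theorem. The two routes differ slightly in where McDiarmid is applied: the cited theorem applies it once to the joint function of all $m+n$ samples, yielding the marginally tighter deviation factor $\sqrt{\tfrac1m+\tfrac1n}$, whereas your split into two events gives $\tfrac{1}{\sqrt m}+\tfrac{1}{\sqrt n}$ directly; since the paper relaxes to that form anyway, both land on the identical bound. Your version has the advantage of being self-contained and of making explicit where each constant (the symmetrization factor $2$, the $2/n$ bounded difference, the $\delta/2+\delta/2$ split) comes from; the paper's is shorter at the cost of an external citation.
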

\begin{proof}
We use the following Lemma:
\begin{lemma}[Theorem 7 of \citet{gretton2012kernel}, reformulated]
Assume $0 \leq k(x, y) \leq K$. Then with probability at least $1 - 2\exp\left(\frac{-\varepsilon^2mn}{2K(m+n)}\right)$
\begin{equation*}
    \left|d - \hat{d}\right| \leq 2\left(\frac{1}{\sqrt{m}} + \frac{1}{\sqrt{n}}\right) + \varepsilon
\end{equation*}
\end{lemma}
% Since we use the Gaussian kernel as defined in \eqref{def:kernel}, we have
% \begin{equation}
% \begin{aligned}
%     \|\mu_{\mathbb{P}} - \hat{\mu}_{\mathbb{P}}\|_{\mathscr{H}} \leq \frac{2}{\sqrt{n}} + \sqrt{\frac{2\log\frac{1}{\delta'}}{n}} \quad w.p.~ (1-\delta') \\
%     \|\mu_{\mathbb{Q}} - \hat{\mu}_{\mathbb{Q}}\|_{\mathscr{H}} \leq \frac{2}{\sqrt{m}} + \sqrt{\frac{2\log\frac{1}{\delta'}}{m}} \quad w.p.~ (1-\delta')
% \end{aligned}
% \end{equation}
% Then
% \begin{align*}
%     |d - \hat{d}| &= \left| \left\|\mu_{\mathbb{P}} - \mu_{\mathbb{Q}}\|_{\mathscr{H}} - \|\hat{\mu}_{\mathbb{P}} - \hat{\mu}_{\mathbb{Q}}\right\|_{\mathscr{H}} \right| \\
%     &\leq \left\|(\mu_{\mathbb{P}} - \hat{\mu}_{\mathbb{P}}) - (\mu_{\mathbb{Q}} - \hat{\mu}_{\mathbb{Q}})\right\|_{\mathscr{H}} \\
%     &\leq \left\|\mu_{\mathbb{P}} - \hat{\mu}_{\mathbb{P}}\right\|_{\mathscr{H}} + \left\|\mu_{\mathbb{Q}} - \hat{\mu}_{\mathbb{Q}}\right\|_{\mathscr{H}} \\
%     &\leq \left(\frac{1}{\sqrt{m}} + \frac{1}{\sqrt{n}}\right)\left(2+\sqrt{2\log\frac{1}{\delta'}}\right)
% \end{align*}
% holds with probability at least $1-2\delta'$ (by union bound). Taking $\delta = 2\delta'$ will finish the proof.
Let $\delta = 2\exp\left(\frac{-\varepsilon^2mn}{2K(m+n)}\right)$, then it has
\begin{equation*}
    \varepsilon = \sqrt{\frac{1}{m} + \frac{1}{n}}\sqrt{2\log\left(\frac{2}{\delta}\right)}.
\end{equation*}
Therefore, with probability at least $1-\delta$
\begin{align*}
    \left|d - \hat{d}\right| 
    &\leq 2\left(\frac{1}{\sqrt{m}} + \frac{1}{\sqrt{n}}\right) + \sqrt{\frac{1}{m} + \frac{1}{n}}\sqrt{2\log\left(\frac{2}{\delta}\right)} \\
    & \leq \left(\frac{1}{\sqrt{m}} + \frac{1}{\sqrt{n}}\right)\left(2 + \sqrt{2\log\frac{2}{\delta}}\right) \\
\end{align*}
\end{proof}

\subsection{Proof of \cref{lemma:correct_class}}\label{proof:correct_class}
\begin{proof}
For simplicity, let $k' := \arg\min_{j\neq k} \mathrm{MMD}(\mathbf{H}_i, \mathbf{D}_j)$. Since $\mathbf{H}_i$ and $\mathbf{D}_k$ are finite samples from $\mathbb{P}_k$, and $\mathbf{D}_{k'}$ is sampled from $\mathbb{P}_{k'}$, by \cref{lemma:mmd_generalization}, we have 
\begin{align*}
    |\mathrm{MMD}(\mathbb{P}_k, \mathbb{P}_k) - \mathrm{MMD}(\mathbf{H}_i, \mathbf{D}_k)| &\leq \Delta_{1} = \left(\frac{1}{\sqrt{m}} + \frac{1}{\sqrt{n}}\right)\left(2 + \sqrt{2\log\frac{2}{\delta'}}\right) \quad w.p. \quad (1-\delta'), \\
    |\mathrm{MMD}(\mathbb{P}_k, \mathbb{P}_{k'})  - \mathrm{MMD}(\mathbf{H}_i, \mathbf{D}_{k'})| &\leq \Delta_{2} = \left(\frac{1}{\sqrt{m}} + \frac{1}{\sqrt{n}}\right)\left(2 + \sqrt{2\log\frac{2}{\delta'}}\right) \quad w.p. \quad (1-\delta').
\end{align*}
Therefore, with probability at least $1 - 2\delta'$ (union bound), we have
\begin{equation*}
\mathrm{MMD}(\mathbf{H}_i, \mathbf{D}_k)-\mathrm{MMD}(\mathbb{P}_k, \mathbb{P}_k) = \mathrm{MMD}(\mathbf{H}_i, \mathbf{D}_k)-0 \leq  \Delta_{1}\quad\text{and}\quad
   \mathrm{MMD}(\mathbb{P}_k, \mathbb{P}_{k'})  - \mathrm{MMD}(\mathbf{H}_i, \mathbf{D}_{k'}) \leq  \Delta_{2}.
\end{equation*}
It follows that
\begin{equation*}
    \mathrm{MMD}(\mathbf{H}_i, \mathbf{D}_k)-\mathrm{MMD}(\mathbf{H}_i, \mathbf{D}_{k'})\leq -\mathrm{MMD}(\mathbb{P}_k, \mathbb{P}_{k'}) +\Delta_{1}+\Delta_{2}.
\end{equation*}
By \cref{def:correct_classification}, to ensure correct classification, we can let
\begin{equation*}
-\mathrm{MMD}(\mathbb{P}_k, \mathbb{P}_{k'}) +\Delta_{1}+\Delta_{2}<0.
\end{equation*}
This means
\begin{equation}\label{eq_dddelta}
\begin{aligned}
    \mathrm{MMD}(\mathbb{P}_k, \mathbb{P}_{k'})  > \Delta_{1} + \Delta_{2}
    = \left(\frac{1}{\sqrt{m}} + \frac{1}{\sqrt{n}}\right)\left(4 + 2\sqrt{2\log\frac{2}{\delta'}}\right).
    \end{aligned}
\end{equation}
Therefore, if \eqref{eq_dddelta} holds, the classification is correct with probability at least $1 - 2\delta'$. Letting $\delta = 2\delta'$, we finish the proof.
\end{proof}

\subsection{Proof of \cref{thm:gin_generalization}}\label{proof:gin_generalization}
Let $\mathbf{H} \in \mathbb{R}^{\sum_in_i \times d}$ be the output of the message passing layers. Then the mean readout is a matrix multiplication
\begin{equation*}
    \mathbf{Z} = \mathbf{MH}, \quad
    \mathbf{M} = \begin{bmatrix}
        \tfrac{1}{n_1}  & \dots & \tfrac{1}{n_1}    & 0                 & \dots & 0              & 0 & \dots    & 0 \\
        0               & \dots & 0                 & \tfrac{1}{n_2}    & \dots & \tfrac{1}{n_2} & 0        & \dots    & 0 \\ 
        \vdots          &       & \vdots            & \vdots            &       & \vdots         & \vdots   &               & \vdots\\  
        0               & \dots & 0                 & 0                 & \dots & 0              & \tfrac{1}{n_N}    & \dots & \tfrac{1}{n_N}                
    \end{bmatrix}.
\end{equation*}
It is easy to see $\|\mathbf{M}\|_{\sigma} \leq 1$. Since a MLP is concatenated after readout, by the proof of Lemma~\ref{lemma:gin_cover}, the covering number of GIN $\mathcal{F}'$ is 
\begin{equation*}
    \ln\mathcal{N}\left(\epsilon, \mathcal{F}', \rho\right) 
    \leq 
    \frac{c^{2L}\|\mathbf{X}\|_2^2\ln(2\bar{d}^2)}{\epsilon^2}\mathcal{A}
\end{equation*}
where $\mathcal{A} = \big(\prod_{l=1}^{L}(\prod_{i=1}^r \kappa^{(l)}_i)^2(\prod_{i=1}^{r'} \kappa^{(L+1)}_i)^2\big) \big(\sum_{l=1}^{L}\sum_{i=1}^r\big(\frac{b^{(l)}_i}{\kappa^{(l)}_i}\big)^{2/3} + \sum_{i=1}^{r'}\big(\frac{b^{(L+1)}_i}{\kappa^{(L+1)}_i}\big)^{2/3}\big)^3$. Then by Lemma~\ref{lemma:dudley}, the generalization bound can be easily derived by taking $\alpha = \frac{1}{\sqrt{N}}$.

%%%%%%%%%%%%%%%%%%%%%%%%%%%%%%%%%%%%%%%%%%%%%%%%%%%%%%%%%%%%%%%%%%%%%%%%%%%%%%%
%%%%%%%%%%%%%%%%%%%%%%%%%%%%%%%%%%%%%%%%%%%%%%%%%%%%%%%%%%%%%%%%%%%%%%%%%%%%%%%
\end{document}